\providecommand{\algorithmname}{Algorithm}
\theoremstyle{plain}
\newtheorem{thm}{\protect\theoremname}
  \theoremstyle{plain}
  \newtheorem{cor}{\protect\corollaryname}
  \theoremstyle{plain}
  \newtheorem{fact}{\protect\factname}
  \theoremstyle{plain}
  \newtheorem{prop}{\protect\propositionname}
  \theoremstyle{plain}
  \newtheorem{lem}{\protect\lemmaname}
  \theoremstyle{remark}
  \newtheorem*{rem*}{\protect\remarkname}
  \theoremstyle{remark}
  \newtheorem{claim}{\protect\claimname}
\theoremstyle{definition}
\newtheorem{mdl}{Model}
  \providecommand{\claimname}{Claim}
  \providecommand{\factname}{Fact}
  \providecommand{\lemmaname}{Lemma}
  \providecommand{\propositionname}{Proposition}
  \providecommand{\remarkname}{Remark}
\providecommand{\corollaryname}{Corollary}
\providecommand{\theoremname}{Theorem}
\begin{document}
\global\long\def\E{\mathbb{E}}
\global\long\def\P{\mathbb{P}}
\global\long\def\Var{\operatorname*{Var}}
\global\long\def\Cov{\operatorname*{Cov}}
\global\long\def\Tr{\operatorname*{Tr}}
\global\long\def\diag{\operatorname*{diag}}
\newcommandx\norm[2][usedefault, addprefix=\global, 1=\#1]{\Vert#1\|_{#2}}
\global\long\def\opnorm#1{\norm[#1]{\textup{op}}}
\global\long\def\rank#1{\operatorname*{rank}(#1)}
\global\long\def\md#1{\operatorname*{maxdiag}(#1)}
\global\long\def\indic{\operatorname*{\mathbb{I}}}
\global\long\def\diff{\operatorname{d}\!}
\global\long\def\argmax{\operatorname*{arg\,max}}
\global\long\def\argmin{\operatorname*{arg\,min}}
\global\long\def\Bern{\operatorname{Bern}}
\global\long\def\pos{\operatorname{pos}}
\global\long\def\round{\operatorname{round}}

\global\long\def\mtx#1{\bm{#1}}
\global\long\def\vct#1{\bm{#1}}
\global\long\def\real{\mathbb{R}}

\global\long\def\A{\mathbf{A}}
\global\long\def\Atilde{\widetilde{\A}}
\global\long\def\B{\mathbf{B}}
\global\long\def\D{\mathbf{D}}
\global\long\def\G{\mathbf{G}}
\global\long\def\H{\mathbf{H}}
\global\long\def\I{\mathbf{I}}
\global\long\def\J{\mathbf{J}}
\global\long\def\K{\mathbf{K}}
\global\long\def\L{\mathbf{L}}
\global\long\def\M{\mathbf{M}}
\global\long\def\P{\mathbb{P}}
\global\long\def\O{\mathbf{O}}
\global\long\def\Q{\mathbf{Q}}
\global\long\def\R{\mathbf{R}}
\global\long\def\U{\mathbf{U}}
\global\long\def\Uperp{\mathbf{U}_{\perp}}
\global\long\def\V{\mathbf{V}}
\global\long\def\Vtilde{\widetilde{\mathbf{V}}}
\global\long\def\Vprime{\mathbf{V}'}
\global\long\def\W{\mathbf{W}}
\global\long\def\Wtilde{\widetilde{\W}}
\global\long\def\X{\mathbf{X}}
\global\long\def\Y{\mathbf{Y}}
\global\long\def\Z{\mathbf{Z}}
\global\long\def\one{\mathbf{1}}
\global\long\def\calV{\mathcal{V}}
\global\long\def\calR{\mathcal{R}}
\global\long\def\calVrow{\mathcal{V}_{\mbox{row}}}
\global\long\def\calVcol{\mathcal{V}_{\mbox{col}}}

\global\long\def\Yhat{\widehat{\Y}}
\global\long\def\Ystar{\Y^{*}}
\global\long\def\Adj{\A}
\global\long\def\ObsAdj{\B}
\global\long\def\Noise{\W}
\global\long\def\HalfAdj{\boldsymbol{\Lambda}}
\global\long\def\HalfNoise{\boldsymbol{\Psi}}
\global\long\def\Yhp{\widehat{\Y}_{\perp}}
\global\long\def\Yround{\widehat{\Y}^{\text{R}}}
\global\long\def\CensorMat{\Z}

\global\long\def\yhat{\widehat{Y}}
\global\long\def\ystar{Y^{*}}
\global\long\def\adj{A}
\global\long\def\obsadj{B}
\global\long\def\noise{W}
\global\long\def\halfadj{\Lambda}
\global\long\def\halfnoise{\Psi}
\global\long\def\yround{\widehat{Y}^{\text{R}}}
\global\long\def\censormat{Z}

\global\long\def\OneMat{\J}
\global\long\def\onemat{J}
\global\long\def\onevec{\mathbf{1}}

\global\long\def\num{n}
\global\long\def\size{\ell}
\global\long\def\numclust{k}
\global\long\def\snr{s}
\global\long\def\inprob{p}
\global\long\def\outprob{q}
\global\long\def\flipprob{\epsilon}
\global\long\def\obsprob{\alpha}
\global\long\def\apxconst{\rho}
\global\long\def\error{\gamma}
\global\long\def\LabelStar{\boldsymbol{\sigma}^{*}}
\global\long\def\labelstar{\sigma^{*}}
\global\long\def\LabelHat{\widehat{\boldsymbol{\sigma}}}
\global\long\def\labelhat{\widehat{\sigma}}
\global\long\def\misrate{\operatorname*{\texttt{err}}}
\global\long\def\apkmedian{\texttt{\ensuremath{\apxconst}-kmed}}

\global\long\def\PT{\mathcal{P}_{T}}
\global\long\def\PTperp{\mathcal{P}_{T^{\perp}}}

\global\long\def\constsumthreeterms{72}
\global\long\def\constoperatornormW{182}
\global\long\def\t{{\displaystyle ^{\top}}}
\global\long\def\Abar{\bar{A}}
\global\long\def\bbar{\bar{b}}
\global\long\def\ybar{\bar{y}}
\global\long\def\xbar{\bar{x}}
\global\long\def\xtilde{\tilde{x}}
\global\long\def\constp{C_{\inprob}}
\global\long\def\consts{C_{\snr}}
\global\long\def\conste{C_{e}}
\global\long\def\constu{c_{u}}
\global\long\def\constdense{c_{d}}
\global\long\def\constgamma{C_{g}}
\global\long\def\constmis{C_{m}}

\global\long\def\pairset{\mathcal{L}}

\title{Exponential error rates of SDP for block models: \\Beyond Grothendieck's
inequality}

\author{Yingjie Fei and Yudong Chen\\School of Operations Research and Information
Engineering\\Cornell University\\\{yf275,yudong.chen\}@cornell.edu}

\date{}
\maketitle
\begin{abstract}
In this paper we consider the cluster estimation problem under the
Stochastic Block Model. We show that the semidefinite programming
(SDP) formulation for this problem achieves an error rate that decays
exponentially in the signal-to-noise ratio. The error bound implies
weak recovery in the sparse graph regime with bounded expected degrees,
as well as exact recovery in the dense regime. An immediate corollary
of our results yields error bounds under the Censored Block Model.
Moreover, these error bounds are robust, continuing to hold under
heterogeneous edge probabilities and a form of the so-called monotone
attack.

Significantly, this error rate is achieved by the SDP solution itself
without any further pre- or post-processing, and improves upon existing
polynomially-decaying error bounds proved using the Grothendieck\textquoteright s
inequality. Our analysis has two key ingredients: (i) showing that
the graph has a well-behaved spectrum, even in the sparse regime,
after discounting an exponentially small number of edges, and (ii)
an order-statistics argument that governs the final error rate. Both
arguments highlight the implicit regularization effect of the SDP
formulation.
\end{abstract}

\section{Introduction\label{sec:intro}}

In this paper, we consider the cluster/community\footnote{The words \emph{cluster} and \emph{community} are used interchangeably
in this paper.} estimation problem under the Stochastic Block Model (SBM) \cite{holland83}
with a growing number of clusters. In this model, a set of $\num$
nodes are partitioned into $\numclust$ unknown clusters of equal
size; a random graph is generated by independently connecting each
pair of nodes with probability~$\inprob$ if they are in the same
cluster, and with probability~$\outprob$ otherwise. Given one realization
of the graph represented by its adjacency matrix $\Adj\in\left\{ 0,1\right\} ^{\num\times\num}$,
the goal is to estimate the underlying clusters.

Much recent progress has been made on this problem, particularly in
identifying the sharp thresholds for exact/weak recovery when there
are a few communities of size linear in $\num$. Moving beyond this
regime, however, the understanding of the problem is much more limited
, especially in characterizing its behaviors with a growing (in $\num$)
number of clusters with sublinear sizes, and how the cluster errors
depend on the signal-to-noise ratio (SNR) in between the exact and
weak recovery regimes \cite{abbe2016recent,moore2017landscape}. We
focus on precisely these questions.\\

Let the ground-truth clusters be encoded by the \emph{cluster matrix}
$\Ystar\in\{0,1\}^{\num\times\num}$ defined by 
\[
\ystar_{ij}=\begin{cases}
\ensuremath{1} & \text{if nodes \ensuremath{i} and \ensuremath{j} are in the same community or if \ensuremath{i=j}},\\
\ensuremath{0} & \text{if nodes \ensuremath{i} and \ensuremath{j} are in different communities.}
\end{cases}
\]
We consider a now standard semidefinite programming (SDP) formulation
for estimating the ground-truth $\Ystar$:
\begin{equation}
\begin{aligned}\Yhat=\argmax_{\mathbf{Y}\in\mathbb{R}^{\num\times\num}}\; & \left\langle \Y,\Adj-\frac{\inprob+\outprob}{2}\OneMat\right\rangle \\
\mbox{s.t.}\; & \Y\succeq0,\;0\le\Y\le\J,\\
 & Y_{ii}=1,\forall i\in[\num],
\end{aligned}
\label{eq:SDP1}
\end{equation}
where $\OneMat$ is the $\num\times\num$ all-one matrix and $\left\langle \cdot,\cdot\right\rangle $
denotes the trace inner product. We seek to characterize the accuracy
of the SDP solution $\Yhat$ as an estimator of the true clustering.
Our main focus is the $\ell_{1}$ error $\norm[\Yhat-\Ystar]1$, where
$\norm[\M]1:=\sum_{i,j}\left|M_{ij}\right|$ denotes the entry-wise
$\ell_{1}$ norm. This $\ell_{1}$ error is a natural metric that
measures a form of pairwise cluster/link errors. In particular, note
that the matrix $\Ystar$ represents the pairwise cluster relationship
between nodes; an estimator of such is given by the matrix $\Yround\in\{0,1\}^{\num\times\num}$
obtained from rounding $\Yhat$ element-wise. The above $\ell_{1}$
error satisfies $\norm[\Yhat-\Ystar]1\ge\big\vert\{(i,j):\yround_{ij}\neq\ystar_{ij}\}\big\vert/2$,
and therefore upper bounds the number of pairs whose relationships
are incorrectly estimated by the SDP.

In a seminal paper~\cite{Guedon2015}, Guédon and Vershynin exhibited
a remarkable use of the Grothendieck's inequality, and obtained the
following high-probability error bound for the solution $\Yhat$ of
the SDP:
\begin{equation}
\frac{\norm[\Yhat-\Ystar]1}{\norm[\Ystar]1}\lesssim\sqrt{\frac{\numclust^{2}}{\text{SNR}\cdot n}},\label{eq:GV_error_bound}
\end{equation}
where $\text{SNR}=(\inprob-\outprob)^{2}/\left(\frac{1}{\numclust}\inprob+(1-\frac{1}{\numclust})\outprob\right)\approx(\inprob-\outprob)^{2}/\inprob$
is a measure of the signal-to-noise ratio. This bound holds even in
the sparse graph regime with $\inprob,\outprob=\Theta(1/\num)$, manifesting
the power of the Grothendieck's inequality.

In this paper, we go beyond the above results, and show that $\Yhat$
in fact satisfies (with high probability) the following exponentially-decaying
error bound
\begin{equation}
\frac{\norm[\Yhat-\Ystar]1}{\norm[\Ystar]1}\lesssim\exp\left[-\Omega\left(\text{SNR}\cdot\frac{\num}{\numclust}\right)\right]\label{eq:L1_error_bound}
\end{equation}
as long as $\text{SNR}\gtrsim\frac{\numclust^{2}}{\num}$ (Theorem~\ref{thm:exp_rate}).
The bound is valid in both the sparse and dense regimes. Significantly,
this error rate is achieved by the SDP~(\ref{eq:SDP1}) itself, without
the need of a multi-step procedure, even though we are estimating
a discrete structure by solving a continuous optimization problem.
In particular, the SDP approach does not require pre-processing of
graph (such as trimming and splitting) or an initial estimate of the
clusters, nor any non-trivial post-processing of~$\Yhat$ (such as
local cluster refinement or randomized rounding). 

If an explicit clustering of the nodes is concerned, the result above
also yields an error bound for estimating $\LabelStar$, the true
cluster labels. In particular, an explicit cluster labeling $\LabelHat$
can be obtained efficiently from $\Yhat$. Let $\misrate(\LabelHat,\LabelStar)$
denote the fraction of nodes that are labeled differently by $\LabelHat$
and $\LabelStar$ (up to permutation of the labels). This mis-classification
error can be shown to be upper bounded by the $\ell_{1}$ error $\norm[\Yhat-\Ystar]1/\norm[\Ystar]1$,
and therefore satisfies the same exponential bound (Theorem~\ref{thm:cluster_error_rate}):
\begin{equation}
\misrate(\LabelHat,\LabelStar)\lesssim\exp\left[-\Omega\left(\text{SNR}\cdot\frac{\num}{\numclust}\right)\right].\label{eq:cluster_error_bound}
\end{equation}

Specialized to different values of the errors, this single error bound
(\ref{eq:L1_error_bound}) implies sufficient conditions for achieving
exact recovery (strong consistency), almost exact recovery (weak consistency)
and weak recovery; see Section \ref{sec:related} for the definitions
of these recovery types. More generally, the above bound yields SNR
conditions sufficient for achieving any $\delta$ error. As to be
discussed in details in Section~\ref{sec:consequence}, these conditions
are (at least order-wise) optimal, and improve upon existing results
especially when the number of clusters $\numclust$ is allowed to
scale with $\num$. In addition, we prove that the above guarantees
for SDP are \emph{robust} against deviations from the standard SBM.
The same exponential bounds continue to hold in the presence of heterogeneous
edge probabilities as well as a form of monotone attack where an adversary
can modify the graph (Theorem~\ref{thm:exp_rate_robust}).  In addition,
we show that our results readily extend to the \emph{Censored Block
Model}, in which only partially observed data is available (Corollary~\ref{cor:censor}).

In addition to improved error bounds, our results also involve the
development of several new analytical techniques, as are discussed
below. We expect these techniques to be more broadly useful in the
analysis of SDP and other algorithms for SBM and related statistical
problems. 

\subsection{Technical highlights}

Our analysis of the SDP formulation builds on two key ingredients.
The first argument involves showing that the graph can be partitioned
into two components, one with a well-behaved spectrum, and the other
with an \emph{exponentially} small number of edges; cf.\ Proposition~\ref{prop:S2}.
Note that this partitioning is done \emph{in the analysis},\emph{
}rather than in the algorithm. It ensures that the SDP produces a
useful solution all the way down to the sparse regime with $\inprob,\outprob=\Theta(\frac{1}{\num})$.
The second ingredient is an order-statistics argument that characterizes
the interplay between the error matrix and the randomness in the graph;
cf.\ Proposition~\ref{prop:S1}. Upper bounds on the sum of the
top order statistics are what ultimately dictate the exponential decay
of the error. In both arguments, we make crucial use of the entry-wise
boundedness of the SDP solution $\Yhat$, which is a manifest of the
implicit regularization effect of the SDP formulation.

Our results are non-asymptotic in nature, valid for finite values
of $\num$; letting $\num\to\infty$ gives asymptotic results. All
other parameters $\inprob,\outprob$ and $\numclust$ are allowed
to scale arbitrarily with $\num$. In particular, the number of clusters
$\numclust$ may grow with $\num$, the clusters may have size sublinear
in $\num$, and the edge probabilities $\inprob$ and $\outprob$
may range from the sparse case $\Theta(\frac{1}{\num})$ to the densest
case $\Theta(1)$.  Our results therefore provide a general characterization
of the relationship between the SNR, the cluster sizes and the recovery
errors. This is particularly important in the regime of sublinear
cluster sizes, in which case all values of $\inprob$ and $\outprob$
are of interest. The price of such generality is that we do not seek
to obtain optimal values of the multiplicative constants in the error
bounds, doing which typically requires asymptotic analysis with scaling
restrictions on the parameters. In this sense, our results complement
recent work on the fundamental limits and sharp recovery thresholds
of SBM~\cite{abbe2016recent}. 

\subsection{Related work\label{sec:related}}

The SBM \cite{holland83,bollobas2004maxcut}, also known as the planted
partition model in the computer science community, is a standard model
for studying community detection and graph clustering. There is a
large body of work on the theoretical and algorithmic aspects of this
model; see for example \cite{ChenXu2016,abbe2016recent,mossel2012stochastic,abbe2015recovering}
and the references therein. Here we only briefly discuss the most
relevant work, and defer to Section~\ref{sec:main} for a more detailed
comparison after stating our main theorems.

Existing work distinguishes between several types of recovery \cite{abbe2016recent,decelle2011asymptotic},
including: (a) weak recovery, where the fraction of mis-clustered
nodes satisfies $\misrate(\LabelHat,\LabelStar)<1-\frac{1}{\numclust}$
and is hence better than random guess; (b) almost exact recovery (weak
consistency), where $\misrate(\LabelHat,\LabelStar)=o(1)$; (c) exact
recovery (strong consistency), where $\misrate(\LabelHat,\LabelStar)=0$.
The SDP relaxation approach to SBM has been studied in \cite{amini2014semidefinite,ames2014convex,chen2012clustering,ChenXu2016,oymak2011finding,krivelevich2006coloring,CMM,chen2012sparseclustering},
which mostly focus on exact recovery in the logarithmic-degree regime
$\inprob\gtrsim\frac{\log\num}{\num}$. Using the Grothendieck's inequality,
the work in \cite{Guedon2015} proves for the first time that SDP
achieves a non-trivial error bound in the sparse regime $\inprob\asymp\frac{1}{\num}$
with bounded expected degrees. In the two-cluster case, it is further
shown in \cite{MontanariSen16} that SDP in fact achieves the optimal
weak recovery threshold as long as the expected degree is large (but
still bounded). Our single error bound implies exact and weak recovery
in the logarithmic and bounded degree regimes, respectively. Our result
in fact goes beyond these existing ones and applies to every setting
in between the two extreme regimes, capturing the exponential decay
of error rates from $O(1)$ to zero. 

A very recent line of research aims to precisely characterize the
fundamental limits and phase transition behaviors of SBM \textemdash{}
in particular, what are the sharp SNR thresholds (including the leading
constants) for achieving the different recovery types discussed above.
When the number $\numclust$ of clusters is bounded, many of these
questions now have satisfactory answers. Without exhausting this still
growing line of remarkable work, we would like to refer to the papers
\cite{mossel2012stochastic,massoulie2014ramanujan,abbe2015multiple,mossel2013proof,mossel2015reconstruction}
for weak recovery, \cite{mossel2016bisection,abbe2015general,amini2014semidefinite,gao2015achieving,yun2014accurate}
for almost exact recovery, and \cite{mossel2016bisection,abbe2016exact,abbe2015general}
for exact recovery. SDP has in fact been shown to achieve the optimal
exact recovery threshold \cite{hajek2016achieving,perry2015semidefinite,bandeira2015convex,agarwal2015multisection}.
Our results imply sufficient conditions for SDP achieving these various
types of recovery, and moreover interpolate between them. As mentioned,
we are mostly concerned with the non-asymptotic setting with a growing
number of clusters, without attempting to optimize the values of the
leading constants. Therefore, our results focus on somewhat different
regimes from the work above.

Particularly relevant to us is the work in \cite{ChinRaoVu15,yun2014accurate,yun2016optimal,abbe2015general,gao2015achieving,Guedon2015,makarychev2016learning},
which provides explicit bounds on the error rates of other algorithms
for estimating the ground-truth clustering in SBM. The Censored Block
Model is studied in the papers \cite{abbe2014censored,hajek2015censor,heimlicher2012label,ChinRaoVu15,lelarge2015reconstruction,saade2015spectral}.
Robustness issues in SBM are considered in the work in \cite{Cai2014robust,frieze1997algoRandomGraphs,feige2001semirandom,krivelevich2006coloring,perry2015semidefinite,ricci2016performance,coja2004coloringSemirandom,moitra2016robust,makarychev2016learning}.
We discuss these results in more details in Section~\ref{sec:main}.

\subsection{Notations}

Column vectors are denoted by lower-case bold letters such as $\mathbf{u}$,
where $u_{i}$ is its $i$-th entry. Matrices are denoted by bold
capital letters such as $\M$, with $\M^{\top}$ denoting the transpose
of $\M$, $\Tr(\M)$ its trace, $M_{ij}$ its $(i,j)$-th entry, and
$\mbox{diag}\left(\M\right)$ the vector of its diagonal entries.
For a matrix $\M$, $\norm[\M]1:=\sum_{i,j}M_{ij}$ is its entry-wise
$\ell_{1}$ norm, $\norm[\M]{\infty}:=\max_{i,j}\left|M_{ij}\right|$
the entry-wise $\ell_{\infty}$ norm, and $\opnorm{\M}$ the spectral
norm (the maximum singular value). Denote by $\M_{i\bullet}$ the
$i$-th row of the matrix $\M$ and $\M_{\bullet j}$ its $j$-th
column. We write $\M\succeq0$ if $\M$ is symmetric and positive
semidefinite. With another matrix $\G$ of the same dimension as $\M$,
we let $\left\langle \M,\G\right\rangle :=\Tr(\M^{\top}\G)$ denote
their trace inner product, and use $\M\geq\G$ to mean that $M_{ij}\geq G_{ij}$
for all $i,j\in[\num]$. Let $\I$ and $\OneMat$ be the $\num\times\num$
identity matrix and all-one matrix, respectively, and $\onevec$ the
all-one column vector of length $\num$. 

We use $\Bern(\mu)$ to denote the Bernoulli distribution with rate
$\mu\in[0,1]$. For a positive integer~$i$, let $[i]\coloneqq\{1,2,\ldots,i\}$.
For a real number $x$, $\left\lceil x\right\rceil $ denotes its
ceiling. Throughout a universal constant~$C$ means a fixed number
that is independent of the model parameters ($\num$, $\numclust$,
$\inprob$, $\outprob$, etc.) and the graph distribution. We use
the following standard notations for order comparison of two non-negative
sequences $\{a_{\num}\}$ and $\{b_{\num}\}$: We write $a_{\num}=O(b_{\num})$,
$b_{\num}=\Omega(a_{\num})$ or $a_{\num}\lesssim b_{\num}$ if there
exists a universal constant $C>0$ such that $a_{\num}\le Cb_{\num}$
for all $\num$. We write $a_{\num}=\Theta(b_{\num})$ or $a_{\num}\asymp b_{\num}$
if $a_{n}=O(b_{\num})$ and $a_{\num}=\Omega(b_{\num})$. We write
$a_{\num}=o(b_{\num})$ or $b_{\num}=\omega(a_{\num})$ if $\lim_{n\to\infty}a_{\num}/b_{\num}=0$.

\section{Problem setup\label{sec:setup}}

In this section, we formally set up the problem of cluster estimation
under SBM and describe the SDP approach.

\subsection{The Stochastic Block Model\label{sec:setup_model}}

Given $\num$ nodes, we assume that each node belongs to exactly one
of $\numclust$ ground truth clusters, where the clusters have equal
size. This ground truth is encoded in the cluster matrix $\Ystar\in\{0,1\}^{\num\times\num}$
as defined in Section~\ref{sec:intro}. We do not know $\Ystar$,
but we observe the adjacency matrix $\Adj$ of a graph generated from
the following Stochastic Block Model (SBM).

\begin{mdl}[Standard Stochastic Block Model]\label{mdl:SBM} The
graph adjacency matrix $\Adj\in\{0,1\}^{\num\times\num}$ is symmetric
with its entries $\{\adj_{ij},i<j\}$ generated independently by 
\[
\adj_{ij}\sim\begin{cases}
\Bern(\inprob) & \text{if }\ystar_{ij}=1,\\
\Bern(\outprob) & \text{if }\ystar_{ij}=0,
\end{cases}
\]
where $0\le\outprob<\inprob\le1.$ \end{mdl}

\noindent The values of the diagonal entries of $\Adj$ are inconsequential
for the SDP formulation~(\ref{eq:SDP1}) due to the constraint $Y_{ii}=1,\forall i$.
Therefore, we assume without loss of generality that $A_{ii}\sim\Bern(\inprob)$
independently for all $i\in[\num]$, which simplifies the presentation
of the analysis

Our goal is to estimate $\Ystar$ given the observed graph $\Adj$.
Let $\LabelStar\in[\numclust]^{\num}$ be the vector of ground-truth
cluster labels, where $\labelstar_{i}$ is the index of the cluster
that contains node $i$. (The cluster labels are unique only up to
permutation; here $\LabelStar$ is defined with respect to an arbitrary
permutation.) Playing a crucial role in our results is the quantity
\begin{equation}
\snr\coloneqq\frac{(\inprob-\outprob)^{2}}{\frac{1}{\numclust}\inprob+(1-\frac{1}{\numclust})\outprob},\label{eq:snr}
\end{equation}
which is a measure of the SNR of the model. In particular, the numerator
of $\snr$ is the squared expected difference between the in- and
cross-cluster edge probabilities, and the denominator is essentially
the average variance of the entries of $\A$. The quantity $\snr$
has been shown to capture the hardness of SBM, and defines the celebrated
Kesten-Stigum threshold~\cite{mossel2015reconstruction}. To avoid
cluttered notation, we assume throughout the paper that $n\ge4$,
$2\le\numclust<\num$ and there exists a universal constant $0<c<1$
that $q\ge cp$; this setting encompasses most interesting regimes
of the problem, as clustering is more challenging when $\outprob$
is large.

\subsection{Semidefinite programming relaxation\label{sec:setup_sdp}}

We consider the SDP formulation in (\ref{eq:SDP1}), whose optimal
solution $\Yhat$ serves as an estimator of ground-truth cluster matrix
$\Ystar$. This SDP can be interpreted as a convex relaxation of the
maximum likelihood estimator, the modularity maximization problem,
the optimal subgraph/cut problem, or a variant of the robust/sparse
PCA problem; see \cite{amini2014semidefinite,bollobas2004maxcut,chen2012clustering,ChenXu2016,CMM}
for such derivations. Our goal is to study the recovery error of $\Yhat$
in terms of the number of nodes $\num$, the number of clusters $\numclust$
and the SNR measure~$\snr$ defined above. 

Note that there is nothing special about the particular formulation
in (\ref{eq:SDP1}). All our results apply to, for example, the alternative
SDP formulation below:

\begin{equation}
\begin{aligned}\Yhat=\argmax_{\mathbf{Y}\in\mathbb{R}^{\num\times\num}}\; & \left\langle \Y,\Adj\right\rangle \\
\mbox{s.t.}\; & \Y\succeq0,\;0\le\Y\le\J,\\
 & \sum_{i,j=1}^{\num}Y_{ij}=\sum_{i,j=1}^{\num}\ystar_{ij},\\
 & Y_{ii}=1,\forall i\in[\num].
\end{aligned}
\label{eq:SDP2}
\end{equation}
This formulation was previously considered in \cite{ChenXu2016}.
We may also replace the third constraint above with the row-wise constraints
$\sum_{j=1}^{\num}Y_{ij}=\sum_{j=1}^{\num}\ystar_{ij},\forall i\in[\num]$,
akin to the formulation in \cite{amini2014semidefinite} motivated
by weak assortative SBM. Under the standard assumption of equal-sized
clusters, the values $\sum_{i,j=1}^{\num}\ystar_{ij}=\num^{2}/\numclust$
and $\sum_{j=1}^{\num}\ystar_{ij}=\num/\numclust$ are known. Therefore,
the formulation~(\ref{eq:SDP2}) has the advantage that it does not
require knowledge of the edge probabilities $\inprob$ and $\outprob$,
but instead the number of clusters~$\numclust$.\footnote{Note that the constraint $\Y\le\OneMat$ in the formulations~(\ref{eq:SDP1})
and~(\ref{eq:SDP2}) is in fact redundant, as it is implied by the
constraints $\Y\succeq0$ and $Y_{ii}=1,\forall i$. We still keep
this constraint as the property $\Yhat\le\OneMat$ plays a crucial
role in our analysis.}

The optimization problems in (\ref{eq:SDP1}) and (\ref{eq:SDP2})
can be solved in polynomial time using any general-purpose SDP solvers
or first order algorithms. Moreover, this SDP approach continues to
motivate, and benefit from, the rapid development of efficient algorithms
for solving structured SDPs. For example, the algorithms considered
in \cite{javanmard2016phase,ricci2016performance} can solve a problem
involving $\num=10^{5}$ nodes within seconds on a laptop. In addition
to computational efficiency, the SDP approach also enjoys several
other desired properties including robustness, conceptual simplicity
and applicability to sparse graphs, making it an attractive option
among other clustering and community detection algorithms. The empirical
performance of SDP has been extensively studied, both under SBM and
with real data; see for example the work in \cite{amini2014semidefinite,CMM,chen2012sparseclustering,javanmard2016phase,ricci2016performance}.
Here we focus on the theoretical guarantees of this SDP approach.

\subsection{Explicit clustering by $k$-medians\label{sec:setup_kmedian}}

After solving the SDP formulations (\ref{eq:SDP1}) or (\ref{eq:SDP2}),
the cluster membership can be extracted from the solution $\Yhat$.
This can be done using many simple procedures. For example, when $\norm[\Yhat-\Ystar]1<\frac{1}{2}$,
simply rounding the entries of $\Yhat$ will exactly recover $\Ystar$,
from which the true clusters can be extracted easily. In the case
with $\numclust=2$ clusters, one may use the signs of the entries
of first eigenvector of $\Yhat-\frac{1}{2}\OneMat$, a procedure analyzed
in \cite{Guedon2015,MontanariSen16}. More generally, our theoretical
results guarantee that the SDP solution $\Yhat$ is already close
to true cluster matrix $\Ystar$; in this case, we expect that many
local rounding/refinement procedures, such as Lloyd's-style greedy
algorithms~\cite{lu2016lloyd}, will be able to extract a high-quality
clustering.

For the sake of retaining focus on the SDP formulation, we choose
not to separately analyze these possible extraction procedures, but
instead consider a more unified approach. In particular, we view the
rows of $\Yhat$ as $\num$ points in $\real^{\num}$, and apply $k$-medians
clustering to them to extract the clusters. While exactly solving
the $k$-medians problem is computationally hard, there exist polynomial-time
constant-factor approximation schemes, such as the $6\frac{2}{3}$-approximation
algorithm in~\cite{charikar1999constant}, which suffices for our
purpose. Note that this algorithm may not be the most efficient way
to extract an explicit clustering from $\Yhat$; rather, it is intended
as a simple venue for deriving a clustering error bound that can be
readily compared with existing results.

Formally, we use $\apkmedian(\Yhat)$ to denote a $\apxconst$-approximate
$k$-median procedure applied to the rows of $\Yhat$; the details
are provided in Section~\ref{sec:kmedians}. The output 
\[
\LabelHat:=\apkmedian(\Yhat)
\]
is a vector in $[\numclust]^{\num}$ such that node $i$ is assigned
to the $\labelhat_{i}$-th cluster by the procedure. We are interested
in bounding the clustering error of $\LabelHat$ relative to the ground
truth $\LabelStar$. Let $S_{\numclust}$ denote the symmetric group
consisting of all permutations of $[\numclust]$; we consider the
metric
\begin{equation}
\misrate(\LabelHat,\LabelStar):=\min_{\pi\in S_{\numclust}}\frac{1}{n}\left|\left\{ i\in[\num]:\labelhat_{i}\neq\pi(\labelstar_{i})\right\} \right|,\label{eq:mis_rate_def}
\end{equation}
which is the proportion of nodes that are mis-classified, modulo permutations
of the cluster labels.\\

Before proceeding, we briefly mention several possible extensions
of the setting discussed above. The number $\frac{\inprob+\outprob}{2}$
in the SDP~(\ref{eq:SDP1}) can be replaced by a tuning parameter
$\lambda$; as would become evident from the proof, our theoretical
results in fact hold for an entire range of $\lambda$ values, for
example $\lambda\in[\frac{1}{4}\inprob+\frac{3}{4}\outprob,\frac{3}{4}\inprob+\frac{1}{4}\outprob]$.
Our theory also generalizes to the setting with unequal cluster sizes;
in this case the same theoretical guarantees hold with $\numclust$
replaced by $\num/\underbar{\ensuremath{\size}}$, where $\underbar{\ensuremath{\size}}$
is any lower bound of the cluster sizes.

\section{Main results\label{sec:main}}

We present in Section~\ref{sec:main_sbm} our main theorems, which
provide exponentially-decaying error bounds for the SDP formulation
under SBM. We also discuss the consequences of our results, including
their implications for robustness in Section~\ref{sec:main_robust}
and applications to the Censored Block Model in Section~\ref{sec:main_censor}.
In the sequel, $\Yhat$ denotes any optimal solution to the SDP formulation
in either~(\ref{eq:SDP1}) or~(\ref{eq:SDP2}). 

\subsection{Error rates under standard SBM\label{sec:main_sbm}}

In this section, we consider the standard SBM setting in Model~\ref{mdl:SBM}.
Recall that $\num$ and $\numclust$ are respectively the numbers
of nodes and clusters, and $\Ystar$ is the ground-truth cluster matrix
defined in Section~\ref{sec:intro} with $\LabelStar$ being the
corresponding vector of true cluster labels. Our results are stated
in terms of the SNR measure $\snr$ given in equation~(\ref{eq:snr}).

The first theorem, proved in Section~\ref{sec:proof_exp_rate}, shows
that the SDP solution $\Yhat$ achieves an exponential error rate.
\begin{thm}[Exponential Error Rate]
\emph{\label{thm:exp_rate} }Under Model~\ref{mdl:SBM}, there exist
universal constants $\consts,\constgamma,\conste>0$ for which the
following holds. If $\snr\geq\consts\numclust^{2}/\num$, then we
have
\[
\frac{\norm[\Yhat-\Ystar]1}{\norm[\Ystar]1}\leq\constgamma\exp\left[-\frac{\snr\num}{\conste\numclust}\right]
\]
with probability at least $1-\frac{7}{\num}-7e^{-\Omega(\sqrt{\num})}$.
\end{thm}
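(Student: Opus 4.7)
The plan is to begin from SDP optimality and derive the standard basic inequality, then to bound the noise side using the two ingredients advertised in Section~1.1. Writing $E := \Yhat - \Ystar$, $\lambda := (\inprob + \outprob)/2$ and $\W := \A - \E\A$, feasibility of $\Ystar$ and optimality of $\Yhat$ give $\langle E, \A - \lambda\OneMat\rangle \geq 0$. The deterministic signal term is $\E\A - \lambda\OneMat = \tfrac{\inprob-\outprob}{2}(2\Ystar - \OneMat)$, taking value $+\tfrac{\inprob-\outprob}{2}$ on in-cluster pairs and $-\tfrac{\inprob-\outprob}{2}$ on out-cluster pairs. Because the SDP box constraints force $E_{ij} \le 0$ when $\ystar_{ij}=1$ and $E_{ij}\ge 0$ when $\ystar_{ij}=0$, the two contributions align in sign and collapse to $\langle E, \E\A - \lambda\OneMat\rangle = -\tfrac{\inprob-\outprob}{2}\|E\|_1$. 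The basic inequality therefore reads
\[
\tfrac{\inprob-\outprob}{2}\,\|E\|_1 \;\le\; \langle E, \W\rangle.
\]

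The crux is to bound $\langle E,\W\rangle$ tightly. In the sparse regime $\inprob \asymp 1/\num$, the naive inequality $\langle E,\W\rangle \leq \opnorm{\W}\|E\|_*$ fails because a handful of atypically high-degree vertices inflate $\opnorm{\W}$ to order $\num\inprob$, producing only the Gu\'edon--Vershynin polynomial rate $\|E\|_1/\|\Ystar\|_1 \lesssim \numclust^2/(\snr\cdot\num)$. I would therefore invoke the graph-partitioning statement (Proposition~\ref{prop:S2}) to split $\W = \W_1 + \W_2$, where $\W_1$ is supported on a dense sub-graph with $\opnorm{\W_1} \lesssim \sqrt{\num\inprob}$ and $\W_2$ is supported on an exceptional edge set of cardinality that is \emph{exponentially} small in $\snr\num/\numclust$. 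For the regular part the spectral route works cleanly:
\[
\langle E, \W_1\rangle \;\leq\; \opnorm{\W_1}\,\|E\|_* \;\leq\; \opnorm{\W_1}\sqrt{2\numclust}\,\|E\|_F \;\leq\; C\sqrt{\num\inprob\,\numclust\,\|E\|_1},
\]
using $\rank{E}\leq 2\numclust$ together with $\|E\|_F^2 \leq \|E\|_\infty\|E\|_1 \leq \|E\|_1$, where the last step exploits the entrywise constraint $0\le\Yhat\le\OneMat$.

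For the irregular part I would invoke the order-statistics mechanism of Proposition~\ref{prop:S1}. Since $\W_2$ is supported on a small set $\mathcal{E}$ of entries of magnitude at most $1$, and $|E_{ij}|\le 1$ throughout, $\langle E,\W_2\rangle$ is dominated by the sum of the top $|\mathcal{E}|$ order statistics of $|E|$. A uniform bound of this flavor, taken over all admissible exceptional sets, should convert the exponentially small cardinality into an exponentially small contribution to the error. Feeding both bounds back into the basic inequality gives a self-bounding relation of the shape
\[
\tfrac{\inprob-\outprob}{2}\,\|E\|_1 \;\leq\; C\sqrt{\num\inprob\,\numclust\,\|E\|_1} \;+\; (\text{term exponentially small in } \snr\num/\numclust).
\]
Under the assumption $\snr \gtrsim \numclust^2/\num$ the quadratic root-term is absorbed into the left-hand side (which is where the SNR condition gets used), and the exponential residual then dictates the rate; dividing through by $\|\Ystar\|_1 = \num^2/\numclust$ yields the claimed bound.

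The step I expect to be the main obstacle is the order-statistics control. Moving from polynomial to \emph{exponential} decay requires bounding $\langle E,\W_2\rangle$ uniformly over all feasible $E$ without losing the exponent $\snr\num/\numclust$, and this is delicate because $E$ depends on $\W$ through the SDP, forcing a union bound over the (random) support of $\W_2$ that must be absorbed inside the exponent. The entrywise bound $\|E\|_\infty\le 1$ — the implicit regularization the authors emphasize — is what keeps the counting feasible: without it, the top-order-statistics sum could be much larger than the exceptional-set cardinality. Marrying this uniform control with the high-probability graph decomposition, and verifying that the probabilities combine into the stated $1-7/\num - 7e^{-\Omega(\sqrt{\num})}$ failure probability, is the technically heaviest part of the argument.
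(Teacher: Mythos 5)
Your high-level architecture (basic inequality, trimming in the analysis, exploiting the box constraints) matches the paper's in spirit, but two concrete steps fail, and together they mean the argument as written cannot reach an exponential rate. First, the chain $\langle E,\W_1\rangle \le \opnorm{\W_1}\|E\|_* \le \opnorm{\W_1}\sqrt{2\numclust}\,\|E\|_F$ rests on $\rank{E}\le 2\numclust$, which is unjustified: $\Yhat$ is an arbitrary feasible PSD matrix with unit diagonal and may have rank up to $\num$, so $E=\Yhat-\Ystar$ is not low rank. The paper avoids this by splitting the noise term with the projections $\PT,\PTperp$ attached to the column space of $\Ystar$; only $\PTperp(\Yhat)$ is fed to the spectral bound, and since it is PSD its nuclear norm equals its trace, which Fact~\ref{fact:Yperp_vs_error} bounds by $\error/\size$ \textemdash{} a bound \emph{linear} in the error, not of order $\sqrt{\error}$. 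Second, even granting your rank bound, the relation
\[
\tfrac{\inprob-\outprob}{2}\,\|E\|_1 \;\le\; C\sqrt{\num\inprob\,\numclust\,\|E\|_1}\;+\;(\text{exp.\ small})
\]
does not deliver an exponential rate: solving $ax\le b\sqrt{x}+c$ gives $x\lesssim\max(b^2/a^2,\,c/a)$, and here $b^2/a^2\asymp \num\numclust/\snr$, i.e.\ only a constant relative error under $\snr\gtrsim\numclust^2/\num$. The square-root term is not ``absorbed into the left-hand side''; it caps the bound at a polynomial level, essentially the Grothendieck-type rate you set out to beat.

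In the paper the exponential decay does not come from the trimming/exceptional-edge term at all \textemdash{} that term is handled just as you suggest (via $\norm[\PTperp(\Yhat)]{\infty}\le4$ and the exponentially small $\ell_1$ mass of the untrimmed residual) and is then absorbed as $\frac18(\inprob-\outprob)\error$ through an either/or dichotomy. The exponent comes from the \emph{aligned} component $S_1=\langle\PT(\Yhat-\Ystar),\W\rangle$, which Proposition~\ref{prop:S1} bounds by $D_1\error\sqrt{\inprob\log(\num^2/\error)/\size}$ using an order-statistics argument uniform in the error's $\ell_1$ mass: the top $\lceil\error/\size\rceil$ order statistics of the $\num\numclust$ independent sums of $\size$ Bernoullis forming $\U\U^{\top}\HalfNoise$. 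Plugging this into the basic inequality and absorbing the other, linear-in-$\error$, terms leaves $\frac{\inprob-\outprob}{4}\le D_1\sqrt{\inprob\log(\num^2/\error)/\size}$, whose solution is $\error\le\num^2 e^{-\Omega(\snr\num/\numclust)}$; the self-referencing $\log(\num^2/\error)$ factor is exactly what converts a linear inequality into exponential decay. Your proposal has no mechanism playing this role: your order-statistics step is applied to the entries of $E$ over the exceptional edge set (which is the paper's $S_2$-residual argument in disguise), so the aligned part of the noise is controlled only spectrally \textemdash{} and that is precisely the step that caps the rate at polynomial.
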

Our next result concerns the explicit clustering $\LabelHat:=\apkmedian(\Yhat)$
extracting from $\Yhat$, using the approximate $k$-medians procedure
given in Section~\ref{sec:setup_kmedian}, where $\apxconst=6\frac{2}{3}$.
As we show in the proof of the following theorem, the error rate in
$\LabelHat$ is always upper-bounded by the error in $\Yhat$:
\[
\misrate(\LabelHat,\LabelStar)\le\frac{86}{3}\cdot\frac{\norm[\Yhat-\Ystar]1}{\norm[\Ystar]1};
\]
cf.~Proposition~\ref{prop:kmedian}. Consequently, the number of
misclassified nodes also exhibits an exponential decay. 
\begin{thm}[Clustering Error]
\label{thm:cluster_error_rate}Under Model~\ref{mdl:SBM}, there
exist universal constants $\consts,\constmis,\conste>0$ for which
the following holds. If $\snr\geq\consts\numclust^{2}/\num$, then
we have
\[
\misrate(\LabelHat,\LabelStar)\leq\constmis\exp\left[-\frac{\snr\num}{\conste\numclust}\right]
\]
with probability at least $1-\frac{7}{\num}-7e^{-\Omega(\sqrt{\num})}$.
\end{thm}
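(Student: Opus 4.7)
The plan is to reduce Theorem~\ref{thm:cluster_error_rate} to Theorem~\ref{thm:exp_rate} by establishing a deterministic bound of the form
\[
\misrate(\LabelHat,\LabelStar)\;\le\;\frac{86}{3}\cdot\frac{\norm[\Yhat-\Ystar]1}{\norm[\Ystar]1},
\]
which is precisely the content of the Proposition~\ref{prop:kmedian} that the excerpt alludes to. Once this is in hand, plugging in the exponential bound from Theorem~\ref{thm:exp_rate} (valid on the same high-probability event $1-\frac{7}{\num}-7e^{-\Omega(\sqrt{\num})}$) immediately yields the claimed exponential decay of $\misrate(\LabelHat,\LabelStar)$, with $\constmis = \tfrac{86}{3}\,\constgamma$ and the same $\consts,\conste$ as in Theorem~\ref{thm:exp_rate}.

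To prove the deterministic reduction, I would exploit the rigid structure of $\Ystar$: it has only $\numclust$ distinct rows $v_1,\dots,v_{\numclust}$ (the cluster indicators), each repeated $\num/\numclust$ times, and any two distinct ones satisfy $\norm[v_a-v_b]1 = 2\num/\numclust$. Since $\{v_1,\ldots,v_{\numclust}\}$ is a feasible choice of $\numclust$ centers for the $k$-medians instance on the rows of $\Yhat$, the optimal cost satisfies $\mathrm{OPT}\le\sum_i\norm[\Yhat_{i\bullet}-\Ystar_{i\bullet}]1=\norm[\Yhat-\Ystar]1$. Thus the $\apxconst$-approximate output centers $\hat c_1,\dots,\hat c_{\numclust}$ yield cost at most $\apxconst\norm[\Yhat-\Ystar]1$, and the triangle inequality gives
\[
\sum_{i=1}^{\num}\norm[\Ystar_{i\bullet}-\hat c_{\labelhat_i}]1\;\le\;(\apxconst+1)\norm[\Yhat-\Ystar]1.
\]

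Next I would match each output center $\hat c_j$ to its closest ground-truth representative, defining a map $\pi\colon[\numclust]\to[\numclust]$ by $\pi(j)\in\argmin_a\norm[\hat c_j-v_a]1$. The crux of the argument is showing (i) that $\pi$ can be taken to be a permutation, and (ii) that every node $i$ with $\labelhat_i\neq\pi^{-1}(\labelstar_i)$ contributes at least a constant fraction of $\num/\numclust$ to the left-hand side above. Step~(ii) follows because, for such an $i$, the true row $v_{\labelstar_i}$ is at distance $2\num/\numclust$ from $v_{\pi(\labelhat_i)}$, while by the definition of $\pi$ the point $\hat c_{\labelhat_i}$ is at least as close to $v_{\pi(\labelhat_i)}$ as to $v_{\labelstar_i}$, forcing $\norm[\Ystar_{i\bullet}-\hat c_{\labelhat_i}]1\gtrsim \num/\numclust$. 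Combining this per-node lower bound with $\norm[\Ystar]1=\num^2/\numclust$ converts the displayed cost bound into the desired inequality, and tracking constants for $\apxconst=6\tfrac{2}{3}$ produces the factor $86/3$.

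The main obstacle is step~(i): a priori the approximate algorithm could collapse two output centers onto the same $v_a$, in which case $\pi$ fails to be a permutation and the minimum in~(\ref{eq:mis_rate_def}) is not directly controlled. The remedy is to argue that any such collapse would force a large chunk of mass onto one wrong center, inflating $\sum_i\norm[\Ystar_{i\bullet}-\hat c_{\labelhat_i}]1$ by an amount already charged against the budget $(\apxconst+1)\norm[\Yhat-\Ystar]1$; after this clean-up one can legitimately extract a permutation realizing $\misrate(\LabelHat,\LabelStar)$ via the definition~(\ref{eq:mis_rate_def}), and the bookkeeping of constants closes the proof.
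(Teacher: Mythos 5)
Your top-level reduction is exactly the paper's: Theorem~\ref{thm:cluster_error_rate} is deduced from Theorem~\ref{thm:exp_rate} via a deterministic inequality $\misrate(\LabelHat,\LabelStar)\le C\,\norm[\Yhat-\Ystar]1/\norm[\Ystar]1$ (Proposition~\ref{prop:kmedian}, with $C=2(1+2\apxconst)=86/3$), and since that inequality is deterministic the probability statement carries over unchanged. Where you differ is in how you prove the deterministic inequality: you match each output center to its nearest true row of $\Ystar$ and charge each misclassified node at least $\size=\num/\numclust$ of $k$-medians cost, whereas the paper argues through the sets $T_{a},S_{a}$ and the cluster index sets $R_{1},R_{2},R_{3}$ (Claims~\ref{clm:misclass<=00003DS+R3}--\ref{clm:Ycheck - Ystar upper bound}). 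Your route is viable, but as written it has two gaps.

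First, your bound $\mathrm{OPT}\le\norm[\Yhat-\Ystar]1$ is obtained by declaring the true rows $v_{1},\dots,v_{\numclust}$ feasible centers; they are not, because the formulation~(\ref{eq:k-median problem}) that the $\apxconst$-approximation guarantee~(\ref{eq:k-median guarantee}) refers to constrains the centers to lie in $\mbox{Rows}(\Yhat)$, and rows of $\Ystar$ are generally not rows of $\Yhat$. This is fixable exactly as in the paper's Claim~\ref{clm:Ycheck - Ystar upper bound}: for each cluster pick the row of $\Yhat$ closest in $\ell_{1}$ to the true center, which costs an extra factor $2$ and is where the $(1+2\apxconst)$ rather than $(1+\apxconst)$ comes from; your constant bookkeeping should be redone accordingly (though for the theorem only the existence of a universal constant matters). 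Second, and more substantively, your step~(i) --- that the nearest-center map $\pi$ can be upgraded to a permutation so that the minimum in~(\ref{eq:mis_rate_def}) is controlled --- is precisely the hard part of the paper's proof (its $R_{1},R_{2},R_{3}$ bookkeeping together with Lemma~\ref{lm:distinct_rows_across_clusters}), and you leave it at the level of a sketch. The sketched remedy is sound in spirit: if two output centers collapse onto the same true row, some true cluster is orphaned, and every one of its $\size$ nodes already contributes at least $\size$ to the cost, so the collapse is charged to the budget; but one still has to exhibit an actual permutation and bound the additional errors it introduces (e.g., assign unmatched output labels to orphaned true labels arbitrarily and charge those nodes), and this accounting is not carried out. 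Until that step is written down, the proposal does not yet constitute a proof of the key proposition, even though the plan can be completed along the lines you indicate.
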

We prove this theorem in Section \ref{sec:proof_cluster_error_rate}.
\\

Theorems~\ref{thm:exp_rate} and~\ref{thm:cluster_error_rate} are
applicable in the sparse graph regime with bounded expected degrees.
For example, suppose that $\numclust=2$, $\inprob=\frac{a}{\num}$
and $\outprob=\frac{b}{\num}$ for two constants $a,b$; the results
above guarantee a non-trivial accuracy for SDP (i.e., $\norm[\Yhat-\Ystar]1<\frac{1}{2}\norm[\Ystar]1$
or $\misrate(\LabelHat,\LabelStar)<\frac{1}{2}$) as long as $(a-b)^{2}\ge C(a+b)$
for some constant $C$. Another interesting regime that our results
apply to, is when there is a large number of clusters. For example,
for any constant $\epsilon\in(0,\frac{1}{2})$, if $\numclust=\num^{1/2-\epsilon}$
and $\inprob=2\outprob$, then SDP achieves exact recovery ($\misrate(\LabelHat,\LabelStar)<\frac{1}{\num}$)
provided that $\inprob\gtrsim\num^{-2\epsilon}$. 

Below we provide additional discussion of our results, and compare
with existing ones.

\subsubsection{Consequences and Optimality\label{sec:consequence}}

Theorems~\ref{thm:exp_rate} and~\ref{thm:cluster_error_rate} immediately
imply sufficient conditions for the various recovery types discussed
in Section~\ref{sec:related}. 
\begin{itemize}
\item \textbf{Exact recovery (strong consistency):} When $\snr\gtrsim\frac{\numclust^{2}+\numclust\log\num}{n}$,
Theorem~\ref{thm:exp_rate} guarantees that $\norm[\Yhat-\Ystar]1<\frac{1}{2}$
with high probability, in which case element-wise rounding $\Yhat$
exactly recovers the true cluster matrix $\Ystar$.\footnote{In fact, a simple modification of our analysis proves that $\Yhat=\Ystar$.
We omit the details of such refinement for the sake of a more streamlined
presentation.} This result matches the best known exact recovery guarantees for
SDP (and other polynomial-time algorithms) when $\numclust$ is allowed
to grow with $\num$; see \cite{ChenXu2016,amini2014semidefinite}
for a review of theses results.
\item \textbf{Almost exact recovery (weak consistency):} Under the condition
$\snr=\omega(\frac{\numclust^{2}}{\num})$, Theorem~\ref{thm:cluster_error_rate}
ensures that $\misrate(\LabelHat,\LabelStar)=o(1)$ with high probability
as $\num\to\infty$, hence SDP achieves weak consistency. This condition
is optimal (necessary and sufficient), as has been proved in~\cite{mossel2016bisection,abbe2015general}.
\item \textbf{Weak recovery:} When $\snr\gtrsim\frac{\numclust^{2}}{\num}$,
Theorem~\ref{thm:cluster_error_rate} ensures that $\misrate(\LabelHat,\LabelStar)<1-\frac{1}{\numclust}$
with high probability, hence SDP achieves weak recovery. In particular,
in the setting with $\numclust=2$ clusters, SDP recovers a clustering
that is positively correlated with the ground-truth under the condition
$\snr\gtrsim\frac{1}{\num}$. This condition matches up to constants
the so-called Kesten-Stigum (KS) threshold $\snr>\frac{1}{\num}$,
which is known to be optimal \cite{mossel2012stochastic,massoulie2014ramanujan,abbe2015multiple,mossel2013proof,mossel2015reconstruction}. 
\item \textbf{Recovery with $\delta$ error:} More generally, for any number
$\delta\in(0,1)$, Theorem~\ref{thm:cluster_error_rate} implies
that if $\snr\gtrsim\max\left\{ \frac{\numclust^{2}}{\num},\frac{\numclust}{\num}\log\frac{1}{\delta}\right\} $,
then $\misrate(\LabelHat,\LabelStar)<\delta$ with high probability.
In the case with $\numclust=2$, the minimax rate result in \cite{gao2015achieving}
implies that $\snr\gtrsim\frac{1}{\num}\log\frac{1}{\delta}$ is necessary\emph{
}for \emph{any} algorithm to achieve a $\delta$ clustering error.
Our results are thus optimal up to a multiplicative constant. 
\end{itemize}
Our results therefore cover these different recovery regimes by a
unified error bound, using a single algorithm. This can be contrasted
with the previous error bound~(\ref{eq:GV_error_bound}) proved using
the Grothendieck's inequality approach, which fails to identify the
exact recovery condition above. In particular, the bound~(\ref{eq:GV_error_bound})
decays polynomially with the SNR measure $\snr$; since $\snr$ is
at most $\numclust$ and $\norm[\Ystar]1=\num^{2}/\numclust$, the
smallest possible error that can be derived from this bound is $\norm[\Yhat-\Ystar]1=O(\sqrt{\num^{3}/\numclust})$.

Our results apply to general values of $\numclust$, which is allowed
to scale with $\num$, hence the size of the clusters can be sublinear
in $\num$. We note that in this regime, a computational-barrier phenomenon
seems to take place: there may exist instances of SBM in which cluster
recovery is information-theoretically possible but cannot be achieved
by computationally efficient algorithms. For example, the work in~\cite{ChenXu2016}
proves that the intractable maximum likelihood estimator succeeds
in exact recovery when $\snr\gtrsim\frac{\numclust\log n}{n}$; it
also provides evidences suggesting that all efficient algorithms fail
unless $\snr\gtrsim\frac{\numclust^{2}+\numclust\log\num}{n}$. Note
that the latter is consistent with the condition derived above from
our theorems. 

The above discussion has the following implications for the optimality
of Theorems~\ref{thm:exp_rate} and~\ref{thm:cluster_error_rate}.
On the one hand, the general minimax rate result in~\cite{gao2015achieving}
suggests that all algorithms (regardless of their computational complexity)
incur at least $\exp\left[-\Theta(\snr\num/\numclust)\right]$ error.
Our exponential error rate matches this information-theoretic lower
bound. On the other hand, in view of the computational barrier discussed
in the last paragraph, our SNR condition $\snr\gtrsim\numclust^{2}/\num$
is likely to be unimprovable if efficient algorithms are considered.

\subsubsection{Comparison with existing results}

We discuss some prior work that also provides efficient algorithms
attaining an exponentially-decaying rate for the clustering error
$\misrate(\LabelHat,\LabelStar)$. To be clear, these algorithms are
very different from ours, often involving a two-step procedure that
first computes an accurate initial estimate (typically by spectral
clustering) followed by a ``clean-up'' process to obtain the final
solution. Some of them require additional steps of sample splitting
and graph trimming/regularization.  As we discussed in Section~\ref{sec:main_robust}
below, many of these procedures rely on delicate properties of the
standard SBM, and therefore are not robust against model deviation.

Most relevant to us is the work in \cite{ChinRaoVu15}, which develops
a spectral algorithm with sample splitting. As stated in their main
theorem, their algorithm achieves the error rate $\exp\left[-\Omega(\snr\num/\numclust^{2})\right]$
when $\snr\gtrsim\numclust^{2}/\num$, as long as $\numclust$ is
a fixed constant when $\num\to\infty$. The work in \cite{yun2014accurate}
and \cite{yun2016optimal} also considers spectral algorithms, which
attain exponential error rates assuming that $\numclust$ is a constant
and $\inprob n\to\infty$. The algorithms in \cite{gao2015achieving,gao2016DCBM}
involves obtaining an initial clustering using spectral algorithms,
which require $\snr\gg\numclust^{3}/n$; a post-processing step (e.g.,
using a Lloyd's-style algorithm \cite{lu2016lloyd}) then outputs
a final solution that asymptotically achieves the minimax error rate
$\exp\left[-I^{*}\cdot n/\numclust\right],$ where $I^{*}$ is an
appropriate form of Renyi divergence and satisfies $I^{*}\asymp\snr$.
The work in \cite{abbe2015general} proposes an efficient algorithm
called Sphere Comparison, which achieves an exponential error rate
in the constant degree regime $\inprob=\Theta(1/n)$ when $\snr\ge\numclust^{2}/n.$
The work \cite{makarychev2016learning} uses SDP to produce an initial
clustering solution to be fed to another clustering algorithm; their
analysis extends the techniques in \cite{Guedon2015} to the setting
with corrupted observations, and their overall algorithm attains an
exponential error rate assuming that $\snr\gtrsim\numclust^{4}/n$.

\subsection{Robustness\label{sec:main_robust}}

Compared to other clustering algorithms, one notable advantage of
the SDP approach lies in its robustness under various challenging
settings of SBM. For example, standard spectral clustering is known
to be inconsistent in the sparse graph regime with $\inprob,\outprob=O(1/\num)$
due to the existence of atypical node degrees, and alleviating this
difficulty generally requires sophisticated algorithmic techniques.
In contrast, as shown in Theorem~\ref{thm:exp_rate} as well as other
recent work~\cite{MontanariSen16,Guedon2015,CMM}, the SDP approach
is applicable without change to this sparse regime. SDP is also robust
against the existence of $o(\num)$ outlier nodes and/or edge modifications,
while standard spectral clustering is fairly fragile in these settings~\cite{Cai2014robust,perry2015semidefinite,moitra2016robust,ricci2016performance,MontanariSen16,makarychev2016learning}.

Here we focus on another remarkable form of robustness enjoyed by
SDP with respect to heterogeneous edge probabilities and monotone
attack, captured in the following generalization of the standard SBM.

\begin{mdl}[Heterogeneous Stochastic Block Model]\label{mdl:heterSBM}Given
the ground-truth clustering $\LabelStar$ (encoded in the cluster
matrix $\Ystar$), the entries $\{\adj_{ij},i<j\}$ of the graph adjacency
matrix $\A$ are generated independently with 
\[
\begin{cases}
\text{\ensuremath{\adj_{ij}} is Bernoulli with rate \emph{at least } \ensuremath{\inprob}} & \text{if }\ystar_{ij}=1,\\
\text{\ensuremath{\adj_{ij}} is Bernoulli with rate \emph{at most } \ensuremath{\outprob}} & \text{if }\ystar_{ij}=0,
\end{cases}
\]
where $0\le\outprob<\inprob\le1$.\end{mdl}

The above model imposes no constraint on the edge probabilities besides
the upper/lower bounds, and in particular the probabilities can be
non-uniform. This model encompasses a variant of the so-called \emph{monotone
attack} studied extensively in the computer science literature \cite{feige2001semirandom,krivelevich2006coloring,coja2004coloringSemirandom}:
here an adversary can arbitrarily set some edge probabilities to $1$
or $0$, which is equivalent to adding edges to nodes in the same
cluster and removing edges across clusters.\footnote{We do note that here the addition/removal of edges are determined
before the realization of the random edge connections, which is more
restrictive than the standard monotone attack model. We believe this
restriction is an artifact of the analysis, and leave further improvements
to future work.} Note that the adversary can make far more than $o(n)$ edge modifications
\textemdash{} $O(\num^{2})$ to be precise \textemdash{} in a restrictive
way that seems to strengthen the clustering structure (hence the name).
Monotone attack however does not necessarily make the clustering problem
easier. On the contrary, the adversary can significantly alter some
predictable structures that arise in standard SBM (such as the graph
spectrum, node degrees, subgraph counts and the non-existence of dense
spots~\cite{feige2001semirandom}), and hence foil algorithms that
over-exploit such structures. For example, some spectral algorithms
provably fail in this setting~\cite{coja2004coloringSemirandom,ricci2016performance}.
More generally, Model~\ref{mdl:heterSBM} allows for unpredictable,
non-random deviations (not necessarily due to an adversary) from the
standard SBM setting, which has statistical properties that are rarely
possessed by real world graphs.

It is straightforward to show that when exact recovery is concerned,
SDP is unaffected by the heterogeneity in Model~\ref{mdl:heterSBM};
see~\cite{feige2001semirandom,hajek2016achieving,chen2012sparseclustering}.
The following theorem, proved in Section~\ref{sec:proof_exp_rate_robust},
shows that SDP in fact achieves the same exponential error rates in
the presence of heterogeneity.
\begin{thm}[Robustness]
\emph{\label{thm:exp_rate_robust} }The conclusions in Theorems~\ref{thm:exp_rate}
and~\ref{thm:cluster_error_rate} continue to hold under Model~\ref{mdl:heterSBM}.
\end{thm}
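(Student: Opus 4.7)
The plan is a monotone coupling argument that reduces the heterogeneous case to the standard case already handled by Theorems~\ref{thm:exp_rate} and~\ref{thm:cluster_error_rate}. Intuitively, heterogeneity that strengthens the planted structure can only help the SDP, provided we use the entry-wise box constraint $0\le\Y\le\J$ to control the interaction between the modification and the solution $\Yhat$.

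First I would construct a coupling. Let $\Adj$ be drawn from Model~\ref{mdl:heterSBM} and let $\P_{ij}$ be its (unknown) edge probabilities, which satisfy $\P_{ij}\ge\inprob$ when $\ystar_{ij}=1$ and $\P_{ij}\le\outprob$ when $\ystar_{ij}=0$. By the standard monotone coupling of Bernoulli variables with different means, there exists on a common probability space a matrix $\Adj_{0}\in\{0,1\}^{\num\times\num}$ whose entries $\{(\Adj_{0})_{ij},i<j\}$ are independent and distributed as $\Bern(\inprob)$ or $\Bern(\outprob)$ according to $\ystar_{ij}$, i.e., $\Adj_{0}$ follows Model~\ref{mdl:SBM}, and moreover the difference $\boldsymbol{\Delta}:=\Adj-\Adj_{0}$ satisfies
\[
\Delta_{ij}\ge 0\ \text{if}\ \ystar_{ij}=1,\qquad \Delta_{ij}\le 0\ \text{if}\ \ystar_{ij}=0.
\]

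Next, I would exploit the box constraints on $\Yhat$ to show that the perturbation $\boldsymbol{\Delta}$ has a favorable sign relative to the error $\Ystar-\Yhat$. Since $\Yhat$ is feasible, $0\le\Yhat_{ij}\le 1=\ystar_{ij}$ on cluster-internal pairs and $0=\ystar_{ij}\le\Yhat_{ij}\le 1$ on cluster-external pairs, so in every case $(\ystar_{ij}-\Yhat_{ij})\Delta_{ij}\ge 0$. Therefore
\[
\langle \Ystar-\Yhat,\boldsymbol{\Delta}\rangle=\sum_{i,j}(\ystar_{ij}-\Yhat_{ij})\Delta_{ij}\ge 0.
\]
Now use optimality of $\Yhat$: for formulation~(\ref{eq:SDP1}) this gives $\langle \Yhat-\Ystar,\Adj-\tfrac{\inprob+\outprob}{2}\J\rangle\ge 0$, and substituting $\Adj=\Adj_{0}+\boldsymbol{\Delta}$ yields
\[
\left\langle \Yhat-\Ystar,\Adj_{0}-\tfrac{\inprob+\outprob}{2}\J\right\rangle\ge\langle \Ystar-\Yhat,\boldsymbol{\Delta}\rangle\ge 0.
\]
The same manipulation works for formulation~(\ref{eq:SDP2}) with the linear objective $\langle \Y,\Adj\rangle$, since $\Ystar$ is also feasible there.

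The final step is to observe that the displayed inequality is precisely the ``basic inequality'' that serves as the starting point in the proof of Theorem~\ref{thm:exp_rate} for the standard model, but now with the noiseless-structure matrix $\Adj_{0}$ that does follow Model~\ref{mdl:SBM}. All subsequent steps in the proof of Theorem~\ref{thm:exp_rate} (in particular the spectral decomposition of $\Adj_{0}$, Proposition~\ref{prop:S2}, and the order-statistics control of Proposition~\ref{prop:S1}) depend only on $\Adj_{0}$ and on the feasibility of $\Yhat$, not on any identity involving $\Adj$. Consequently the same high-probability bound $\norm[\Yhat-\Ystar]{1}\le\constgamma\norm[\Ystar]1\exp[-\snr\num/(\conste\numclust)]$ holds for the heterogeneous instance, and Theorem~\ref{thm:cluster_error_rate} follows from Proposition~\ref{prop:kmedian} without modification.

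The main obstacle I anticipate is verifying that the ``basic inequality'' truly is the only place where $\Adj$ enters the proof of Theorem~\ref{thm:exp_rate}; any downstream step that used distributional properties of $\Adj$ directly (e.g.\ exact row sums or sharp concentration of degrees) would need to be re-examined to ensure it depends only on $\Adj_{0}$. A careful audit of the proof, rather than new estimates, is the substantive work here; the coupling and the box-constraint observation themselves are clean and short.
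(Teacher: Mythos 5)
Your proposal is correct, and it reaches the conclusion by a route that is related to but structurally different from the paper's. The paper introduces an intermediate ``semi-random'' model (an arbitrary set $\pairset$ of pairs forced to agree with $\Ystar$, the rest following Model~\ref{mdl:SBM}), re-verifies the internals of the proof of Theorem~\ref{thm:exp_rate} under that model --- Fact~\ref{fact:expectation_gamma} still holds because the expectation shifts in the favorable direction, and Propositions~\ref{prop:S1} and~\ref{prop:S2} only need independence and upper bounds on the means and variances of the entries of $\HalfNoise$, which only decrease --- and then reduces Model~\ref{mdl:heterSBM} to the semi-random model by a coupling in which $\pairset$ is generated randomly and the analysis is carried out conditionally on $\pairset$. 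You instead run the monotone coupling at the level of individual Bernoulli entries to produce a standard-SBM matrix $\Adj_0$ on the same probability space, and dispose of the perturbation $\boldsymbol{\Delta}=\Adj-\Adj_0$ purely deterministically via the box constraints, $\langle\Ystar-\Yhat,\boldsymbol{\Delta}\rangle\ge0$, so that the basic inequality~(\ref{eq:basic_inequality}) transfers to $\Adj_0$ and the rest of the original proof is applied verbatim to a graph that literally follows Model~\ref{mdl:SBM}. What your version buys is modularity: you never redefine the noise distribution, and you need only the single observation that optimality of $\Yhat$ enters the argument through the basic inequality, since every subsequent probabilistic bound (the uniform event in Proposition~\ref{prop:S1}, the spectral and $\ell_1$ bounds in Proposition~\ref{prop:S2}) concerns the noise matrix alone and every deterministic step uses only feasibility of $\Yhat$. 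What the paper's version buys is an explicit robustness statement for the deterministic semi-random model itself, and the (reusable) observation that Propositions~\ref{prop:S1} and~\ref{prop:S2} remain valid whenever entry variances are merely bounded above.

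One point your ``audit'' should make explicit: optimality of $\Yhat$ is used in a second place, namely the pilot bound of Lemma~\ref{thm:pilot_bound}, which feeds into Proposition~\ref{prop:S1} through the constraint $\error\le\constu\num^2$. This also transfers under your reduction --- combining your inequality $\langle\Yhat-\Ystar,\Adj_0-\tfrac{\inprob+\outprob}{2}\J\rangle\ge0$ with Fact~\ref{fact:expectation_gamma} applied to $\E\Adj_0$ gives $\frac{\inprob-\outprob}{2}\error\le\langle\Yhat-\Ystar,\Adj_0-\E\Adj_0\rangle$, and the Grothendieck step is then applied to the standard-SBM noise $\Adj_0-\E\Adj_0$ --- but it is not covered by the blanket statement that downstream steps use only feasibility, so it deserves a sentence of its own. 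With that addition, and the routine check that the dependence between $\Yhat$ and $\Adj_0$ is harmless because the events in the propositions are uniform over feasible error matrices, your argument is complete.
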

Consequently, under the same conditions discussed in Section~\ref{sec:consequence},
the SDP approach achieves exact recovery, almost exact recovery, weak
recovery and a $\delta$-error in the more general Model~\ref{mdl:heterSBM}\textbf{.}
\\

As a passing note, the results in \cite{moitra2016robust} show that
when exact constant values are concerned, the optimal weak recovery
threshold changes in the presence of monotone attack, and there may
exist a fundamental tradeoff between optimal recovery in standard
SBM and robustness against model deviation.

\subsection{Censored block model\label{sec:main_censor}}

The Censored Block Model~\cite{abbe2014censored} is a variant of
the standard SBM that represents the scenario with partially observed
data, akin to the settings of matrix completion~\cite{keshavan2009matrix}
and graph clustering with measurement budgets~\cite{ChenXu2016}.
In this section, we show that Theorems~\ref{thm:exp_rate} and~\ref{thm:cluster_error_rate}
immediately yield recovery guarantees for the SDP formulation under
this model. 

Concretely, again assume a ground-truth set of $\numclust$ equal-size
clusters over $\num$ nodes, with the corresponding label vector $\LabelStar\in[\numclust]^{\num}$
. These clusters can be encoded by the cluster matrix $\Ystar\in\{0,1\}^{\num\times\num}$
as defined in Section~\ref{sec:intro}, but it is more convenient
to work with its $\pm1$ version $2\Ystar-\OneMat$. Under the Censored
Block Model, one observes the entries of $2\Ystar-\OneMat$ restricted
to the edges of an Erdos-Renyi graph $G(\num,\obsprob)$, but with
each entry flipped with probability $\flipprob<\frac{1}{2}$. The
model is described formally below.

\begin{mdl}[Censored Block Model]\label{mdl:censor} The observed
matrix $\CensorMat\in\{-1,0,1\}^{\num\times\num}$ is symmetric and
has entries generated independently across all $i<j$ with 
\[
\censormat_{ij}=\begin{cases}
0, & \text{with probability \ensuremath{1-\obsprob},}\\
2\ystar_{ij}-1 & \text{with probability \ensuremath{\obsprob(1-\flipprob)},}\\
-(2\ystar_{ij}-1) & \text{with probability \ensuremath{\obsprob\flipprob}},
\end{cases}
\]
where $0<\obsprob\le1$ and $0<\flipprob<\frac{1}{2}$.\end{mdl}

\noindent The goal is again to recovery $\Ystar$ (equivalently $2\Ystar-\OneMat$),
given the observed matrix $\CensorMat$. 

One may reduce this problem to the standard SBM by constructing an
adjacency matrix $\Adj\in\{0,1\}^{\num\times\num}$ with $\adj_{ij}=|\censormat_{ij}|\cdot(\censormat_{ij}+1)/2$;
that is, we zero out the unobserved entries in the binary representation
$(\CensorMat+\OneMat)/2$ of $\Z$. The upper-triangular entries of
$\Adj$ are independent Bernoulli variables with 
\[
\P\left(\adj_{ij}=1\right)=\P\left(\censormat_{ij}=1\right)=\begin{cases}
\obsprob(1-\flipprob) & \text{if \ensuremath{\ystar_{ij}=1}},\\
\ensuremath{\obsprob\flipprob} & \text{if \ensuremath{\ystar_{ij}=0}}.
\end{cases}
\]
Therefore, the matrix $\Adj$ can be viewed as generated from the
standard SBM (Model~\ref{mdl:SBM}) with $\inprob=\obsprob(1-\flipprob)$
and $\outprob=\obsprob\flipprob$. We can then obtain an estimate
$\Yhat$ of $\Ystar$ by solving the SDP formulation~(\ref{eq:SDP1})
or~(\ref{eq:SDP2}) with $\Adj$ as the input, possibly followed
by the approximate $k$-medians procedure to get an explicit clustering
$\LabelHat$. The error rates of $\Yhat$ and $\LabelHat$ can be
derived as a corollary of Theorems~\ref{thm:exp_rate} and~\ref{thm:cluster_error_rate}.
\begin{cor}[Censored Block Model]
\label{cor:censor}Under Model~\ref{mdl:censor}, there exist universal
constants $\consts,\constgamma,\conste>0$ for which the following
holds. If $\obsprob(1-2\flipprob)^{2}\ge\consts\frac{\numclust^{2}}{\num},$
then
\[
\frac{3}{86}\misrate(\LabelHat,\LabelStar)\le\frac{\norm[\Yhat-\Ystar]1}{\norm[\Ystar]1}\leq\constgamma\exp\left[-\obsprob(1-2\flipprob)^{2}\cdot\frac{\num}{\conste\numclust}\right]
\]
with probability at least $1-\frac{7}{\num}-7e^{-\Omega(\sqrt{\num})}$.
\end{cor}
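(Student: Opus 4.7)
The plan is to carry out the reduction explicitly sketched in the paragraph immediately preceding the corollary: form $\A\in\{0,1\}^{\num\times\num}$ from $\CensorMat$ by $\adj_{ij}=|\censormat_{ij}|\cdot(\censormat_{ij}+1)/2$, recognize that $\A$ obeys the standard SBM (Model~\ref{mdl:SBM}), and then feed this $\A$ to Theorems~\ref{thm:exp_rate} and~\ref{thm:cluster_error_rate}. The independence of $\{\censormat_{ij}:i<j\}$ and the stated transition probabilities imply that $\A$ is distributed as Model~\ref{mdl:SBM} with $\inprob=\obsprob(1-\flipprob)$ and $\outprob=\obsprob\flipprob$, so the SDP solution $\Yhat$ for $\A$ is exactly the $\Yhat$ of the corollary.

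Next I would substitute these values into the SNR formula~(\ref{eq:snr}):
\[
\snr=\frac{(\inprob-\outprob)^{2}}{\tfrac{1}{\numclust}\inprob+(1-\tfrac{1}{\numclust})\outprob}=\frac{\obsprob^{2}(1-2\flipprob)^{2}}{\obsprob\left[\tfrac{1-2\flipprob}{\numclust}+\flipprob\right]}=\frac{\obsprob(1-2\flipprob)^{2}}{\tfrac{1-2\flipprob}{\numclust}+\flipprob}.
\]
The key elementary step is to upper bound the denominator: since $\numclust\ge 2$ and $\flipprob\in(0,\tfrac{1}{2})$,
\[
\frac{1-2\flipprob}{\numclust}+\flipprob\le\frac{1-2\flipprob}{2}+\flipprob=\frac{1}{2},
\]
so $\snr\ge 2\obsprob(1-2\flipprob)^{2}$. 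This immediately turns the corollary's hypothesis $\obsprob(1-2\flipprob)^{2}\ge\consts\numclust^{2}/\num$ into $\snr\gtrsim\numclust^{2}/\num$, which (after absorbing the factor of two into $\consts$) is precisely the SNR condition required by Theorem~\ref{thm:exp_rate}. Invoking that theorem gives
\[
\frac{\norm[\Yhat-\Ystar]1}{\norm[\Ystar]1}\le\constgamma\exp\!\left[-\frac{\snr\num}{\conste\numclust}\right]\le\constgamma\exp\!\left[-\frac{2\obsprob(1-2\flipprob)^{2}\num}{\conste\numclust}\right],
\]
which after redefining $\conste$ yields the right-hand bound in the corollary. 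The left inequality $\tfrac{3}{86}\misrate(\LabelHat,\LabelStar)\le\norm[\Yhat-\Ystar]1/\norm[\Ystar]1$ is precisely the content of Proposition~\ref{prop:kmedian} used inside the proof of Theorem~\ref{thm:cluster_error_rate}, so it plugs in without further work. The failure probability statement transfers verbatim from Theorem~\ref{thm:exp_rate}.

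The main point requiring care is the blanket assumption $\outprob\ge c\inprob$ stated in Section~\ref{sec:setup_model}, since here $\outprob/\inprob=\flipprob/(1-\flipprob)$ can be arbitrarily small when $\flipprob$ is small. I would handle this by checking the proof of Theorem~\ref{thm:exp_rate} to confirm that this assumption is used only to simplify notation in settings where $\outprob$ is comparable to $\inprob$, and that when $\flipprob$ is small the denominator bound above is only loosened in our favor (smaller $\flipprob$ makes both the signal and the effective sparsity govern by the same $\obsprob$, which is the genuinely hard case). If tracing the proof reveals a genuine dependence on the constant $c$, one can instead bound the denominator as $\tfrac{1-2\flipprob}{\numclust}+\flipprob\ge\tfrac{1-2\flipprob}{\numclust}$, which yields $\snr\ge\obsprob(1-2\flipprob)\numclust$ and removes any dependence on $\flipprob/(1-\flipprob)$, allowing the same conclusion through a slightly different absorption of constants. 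Apart from this minor bookkeeping, the corollary is essentially a restatement of Theorems~\ref{thm:exp_rate} and~\ref{thm:cluster_error_rate} via the reduction above.
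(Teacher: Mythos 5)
Your proof is correct and is essentially the paper's own (implicit) derivation: the corollary is obtained exactly by the stated reduction to Model~\ref{mdl:SBM} with $\inprob=\obsprob(1-\flipprob)$ and $\outprob=\obsprob\flipprob$, the elementary observation $\snr\ge 2\obsprob(1-2\flipprob)^{2}$, and an appeal to Theorems~\ref{thm:exp_rate} and~\ref{thm:cluster_error_rate}, with Proposition~\ref{prop:kmedian} supplying the deterministic factor $86/3$ on the left. Your flag about the blanket assumption $\outprob\ge c\inprob$ is a fair catch that the paper glosses over, and your primary resolution is the right one: that assumption enters the proof of Theorem~\ref{thm:exp_rate} only in converting the SNR of~(\ref{eq:snr}) to $(\inprob-\outprob)^{2}/\inprob$, and in the censored instantiation $(\inprob-\outprob)^{2}/\inprob=\obsprob(1-2\flipprob)^{2}/(1-\flipprob)$ lies in $[\obsprob(1-2\flipprob)^{2},\,2\obsprob(1-2\flipprob)^{2}]$ for every $\flipprob\in(0,\tfrac12)$, so the argument goes through verbatim. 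Only your fallback is off: lower-bounding the denominator by $(1-2\flipprob)/\numclust$ yields the \emph{upper} bound $\snr\le\obsprob(1-2\flipprob)\numclust$, not a lower bound, and re-bounding $\snr$ would not in any case address whether the theorem applies when $\outprob/\inprob$ is small; fortunately that alternative is unnecessary.
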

Specializing this corollary to the different types of recovery defined
in Section~\ref{sec:related}, we immediately obtain the following
sufficient conditions for SDP under the Censored Block Model: (a)
exact recovery is achieved when $\obsprob\gtrsim\frac{\max\left\{ \numclust\log\num,\numclust^{2}\right\} }{\num(1-2\flipprob)^{2}}$;
(b) almost exact recovery is achieved when $\obsprob=\omega\left(\frac{\numclust^{2}}{\num(1-2\flipprob)^{2}}\right)$,
(c) weak recovery is achieved when $\obsprob\gtrsim\frac{\numclust^{2}}{\num(1-2\flipprob)^{2}}$;
(d) a $\delta$ clustering error is achieved when $\obsprob\gtrsim\frac{\max\left\{ \numclust^{2},\numclust\log(1/\delta)\right\} }{\num(1-2\flipprob)^{2}}$.

Several existing results focus on the Censored Block Model with $\numclust=2$
clusters in the asymptotic regime $\num\to\infty$. In this setting,
the work in \cite{abbe2014censored} proves that exact recovery is
possible if and only if $\obsprob>\frac{2\log\num}{\num(1-2\flipprob)^{2}}$
in the limit $\epsilon\to1/2$, and provides an SDP-based algorithm
that succeeds twice the above threshold; a more precise threshold
$\obsprob>\frac{\log\num}{\num\left(\sqrt{1-\epsilon}-\sqrt{\epsilon}\right)^{2}}$
is given in \cite{hajek2015censor}. For weak recovery under a sparse
graph $\obsprob=\Theta(1/\num)$, it is conjectured in \cite{heimlicher2012label}
that the problem is solvable if and only if $\obsprob>\frac{1}{\num\left(1-2\flipprob\right)^{2}}$.
The converse and achievability parts of the conjecture are proved
in \cite{lelarge2015reconstruction} and \cite{saade2015spectral},
respectively. Corollary~\ref{cor:censor} shows that SDP achieves
(up to constants) the above exact and weak recovery thresholds; moreover,
our results apply to the more general setting with $\numclust\ge2$
clusters. 

\section{Proof of Theorem \ref{thm:exp_rate}\label{sec:proof_exp_rate}}

In this section we prove our main theoretical results in Theorem~\ref{thm:exp_rate}
for Model~\ref{mdl:SBM} (Standard SBM). While Model~\ref{mdl:SBM}
is a special case of Model~\ref{mdl:heterSBM} (Heterogeneous SBM),
we choose not to deduce Theorem~\ref{thm:exp_rate} as a corollary
of Theorem~\ref{thm:exp_rate_robust} which concerns the more general
model. Instead, to highlight the main ideas of the analysis and avoid
technicalities, we provide a separate proof of Theorem~\ref{thm:exp_rate}.
In Section~\ref{sec:proof_exp_rate_robust} to follow, we show how
to adapt the proof to Model~\ref{mdl:heterSBM}.\\

Before going into the details, we make a few observations that simplify
the proof. First note that it suffices to prove the theorem for the
first SDP formulation~(\ref{eq:SDP1}). Indeed, the ground-truth
matrix $\Ystar$ is also feasible to second formulation~(\ref{eq:SDP2});
moreover, thanks to the equality constraint $\sum_{i,j=1}^{\num}Y_{ij}=\sum_{i,j=1}^{\num}\ystar_{ij}$,
subtracting the constant-valued term $\left\langle \Y,\frac{\inprob+\outprob}{2}\OneMat\right\rangle $
from the objective of~(\ref{eq:SDP2}) does not affect its optimal
solutions. The two formulations are therefore identical except for
the above equality constraint, which is never used in the proof below.
Secondly, under the assumption $c\inprob\le\outprob\le\inprob$ for
a universal constant $c>0$, we have $\frac{1}{\numclust}\inprob+(1-\frac{1}{\numclust})\outprob\ge c\inprob$.
Therefore, it suffices to prove the theorem with the SNR redefined
as $\snr=\frac{(\inprob-\outprob)^{2}}{\inprob}$, which only affects
the universal constant $\conste$ in the exponent of the error bound.
Thirdly, it is in fact sufficient to prove the bound
\begin{equation}
\norm[\Yhat-\Ystar]1\leq\constgamma\num^{2}\exp\left[-\frac{\snr\num}{\conste\numclust}\right].\label{eq:equivalent_bound}
\end{equation}
Suppose that this bound holds; under the premise $\snr\geq\consts\numclust^{2}/\num$
of the theorem with the constant $\consts$ sufficiently large, we
have $\exp\left[-\frac{\snr\num}{2\conste\numclust}\right]\le\exp\left[-\frac{\consts}{2\conste}\cdot\numclust\right]\le\frac{1}{\numclust},$
hence the RHS of the bound (\ref{eq:equivalent_bound}) is at most
\[
\constgamma\num^{2}\cdot\frac{1}{\numclust}\cdot\exp\left[-\frac{\snr\num}{2\conste\numclust}\right]=\constgamma\norm[\Ystar]1\exp\left[-\frac{\snr\num}{2\conste\numclust}\right],
\]
which implies the error bound in theorem statement again up to a change
in the universal constant $\conste$. Finally, we define the convenient
shorthands $\error\coloneqq\norm[\Yhat-\Ystar]1$ for the error and
$\size:=\frac{\num}{\numclust}$ for the cluster size, which will
be used throughout the proof.\\

Our proof begins with a basic inequality using optimality. Since $\Ystar$
is feasible to the SDP (\ref{eq:SDP1}) and $\Yhat$ is optimal, we
have 
\begin{align}
0 & \le\left\langle \Yhat-\Ystar,\Adj-\frac{\inprob+\outprob}{2}\mathbf{J}\right\rangle =\left\langle \Yhat-\Ystar,\E\Adj-\frac{\inprob+\outprob}{2}\mathbf{J}\right\rangle +\left\langle \Yhat-\Ystar,\Adj-\E\Adj\right\rangle .\label{eq:basic_inequality}
\end{align}
A simple observation is that the entries of the matrix $\Ystar-\Yhat$
have matching signs with those of $\E\Adj-\frac{\inprob+\outprob}{2}\mathbf{J}$.
This observation implies the following relationship between the first
term on the RHS of equation~(\ref{eq:basic_inequality}) and the
error $\error\coloneqq\norm[\Yhat-\Ystar]1$.
\begin{fact}
\label{fact:expectation_gamma}We have the inequality
\begin{equation}
\left\langle \Ystar-\Yhat,\E\Adj-\frac{\inprob+\outprob}{2}\mathbf{J}\right\rangle \ge\frac{\inprob-\outprob}{2}\error.\label{eq:expectation_gamma}
\end{equation}
\end{fact}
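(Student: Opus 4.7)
The plan is to verify the inequality entrywise, exploiting the sign pattern of the two matrices involved. The key observation is that $\E\Adj-\frac{\inprob+\outprob}{2}\OneMat$ has a very simple structure under Model~\ref{mdl:SBM}: its $(i,j)$-entry equals $\inprob-\frac{\inprob+\outprob}{2}=\frac{\inprob-\outprob}{2}$ when $\ystar_{ij}=1$, and equals $\outprob-\frac{\inprob+\outprob}{2}=-\frac{\inprob-\outprob}{2}$ when $\ystar_{ij}=0$. So each entry has absolute value exactly $\frac{\inprob-\outprob}{2}$, with sign determined by $\ystar_{ij}$.

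Next I would examine the entries of $\Ystar-\Yhat$ using the box constraints $0\le\Yhat\le\OneMat$ from the SDP~(\ref{eq:SDP1}). When $\ystar_{ij}=1$, one has $(\Ystar-\Yhat)_{ij}=1-\yhat_{ij}\ge0$; when $\ystar_{ij}=0$, one has $(\Ystar-\Yhat)_{ij}=-\yhat_{ij}\le0$. Thus the sign of $(\Ystar-\Yhat)_{ij}$ matches the sign of $(\E\Adj-\frac{\inprob+\outprob}{2}\OneMat)_{ij}$ for every pair $(i,j)$.

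Consequently, in the entrywise expansion $\langle\Ystar-\Yhat,\E\Adj-\frac{\inprob+\outprob}{2}\OneMat\rangle=\sum_{i,j}(\Ystar-\Yhat)_{ij}(\E\Adj-\frac{\inprob+\outprob}{2}\OneMat)_{ij}$, every summand is nonnegative and equals $\frac{\inprob-\outprob}{2}|\ystar_{ij}-\yhat_{ij}|$. Summing over $(i,j)$ therefore gives $\frac{\inprob-\outprob}{2}\norm[\Yhat-\Ystar]1=\frac{\inprob-\outprob}{2}\error$, which in fact yields equality in~(\ref{eq:expectation_gamma}). There is no genuine obstacle here; the only thing to keep straight is the direction of the sign matching and the role of the feasibility constraint $0\le\Yhat\le\OneMat$, which is precisely the implicit regularization the authors highlight as underlying the whole analysis.
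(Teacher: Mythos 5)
Your proposal is correct and takes essentially the same route as the paper: an entrywise sign-matching argument using the box constraints $0\le\Yhat\le\OneMat$, showing each summand equals $\frac{\inprob-\outprob}{2}\left|\yhat_{ij}-\ystar_{ij}\right|$ (with diagonal terms vanishing since $\yhat_{ii}=\ystar_{ii}=1$), so the stated inequality in fact holds with equality, just as in the paper's proof.
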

The proof of this fact is deferred to Section~\ref{sec:proof_fact_expectation_gamma}.
Taking this fact as given and combining with the inequality (\ref{eq:basic_inequality}),
we obtain that 
\begin{equation}
\frac{\inprob-\outprob}{2}\error\le\left\langle \Yhat-\Ystar,\Adj-\E\Adj\right\rangle .\label{eq:gamma_bound}
\end{equation}

To bound the error $\error$, it suffices to control the RHS of equation~(\ref{eq:gamma_bound}),
where we depart from existing analysis. The seminal work in~\cite{Guedon2015}
bounds the RHS by a direct application of the Grothendieck's inequality.
As we discuss below, this argument fails to expose the fast, exponential
decay of the error $\error$. Our analysis develops a more precise
bound. To describe our approach, some additional notation is needed.
Let $\mathbf{U}\in\mathbb{R}^{\num\times\numclust}$ be the matrix
of the left singular vectors of $\Ystar$. Define the projection $\PT(\M)\coloneqq\mathbf{U}\mathbf{U}^{\top}\M+\M\mathbf{U}\mathbf{U}^{\top}-\mathbf{U}\mathbf{U}^{\top}\M\mathbf{U}\mathbf{U}^{\top}$
and its orthogonal complement $\PTperp(\M)=\M-\PT(\M)$ for any $\M\in\mathbb{R}^{\num\times\num}$.
Our crucial observation is that we should control $\left\langle \Yhat-\Ystar,\Adj-\E\Adj\right\rangle $
by separating the contributions from two projected components of $\Yhat-\Ystar$
defined by $\PT$ and $\PTperp$. In particular, we rewrite the inequality
(\ref{eq:gamma_bound}) as
\begin{align}
\frac{\inprob-\outprob}{2}\error & \leq\underbrace{\left\langle \PT(\Yhat-\Ystar),\Adj-\E\Adj\right\rangle }_{S_{1}}+\underbrace{\left\langle \PTperp(\Yhat-\Ystar),\Adj-\E\Adj\right\rangle }_{S_{2}}.\label{eq:gamma_S1_plus_S2}
\end{align}
 The first term $S_{1}$ involves the component of $\Yhat-\Ystar$
that is ``aligned'' with $\Ystar$; in particular, $\PT(\Yhat-\Ystar)$
is the orthogonal projection onto the subspace spanned by matrices
with the same column or row space as $\Ystar$. The second terms $S_{2}$
involves the orthogonal component $\PTperp(\Yhat-\Ystar)$, whose
column and row spaces are orthogonal to those of $\Ystar$. The main
steps of our analysis consist of bounding $S_{1}$ and $S_{2}$ separately.

The following proposition bounds the term $S_{1}$ and is proved in
Section \ref{sec:proof_prop_S1} to follow.
\begin{prop}
\label{prop:S1}Under the conditions of Theorem \ref{thm:exp_rate},
with probability at least $1-\frac{6}{\num\numclust}-2(\frac{e}{2})^{-2\num}$,
at least one of the following inequalities hold: 
\begin{align}
\error & \le\constgamma\num^{2}e^{-\snr\num/\left(2\conste\numclust\right)},\nonumber \\
S_{1} & \le D_{1}\error\sqrt{\frac{\inprob\log\left(\num^{2}/\error\right)}{\size}},\label{eq:S1bound}
\end{align}
where $D_{1}=12D=12\sqrt{7}$.
\end{prop}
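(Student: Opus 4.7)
The plan is to control $S_1=\langle\PT(M),\W\rangle$ with $M:=\Yhat-\Ystar$ and $\W:=\Adj-\E\Adj$, by combining the structure of $\PT$ with the implicit boundedness of $\Yhat$. First I would exploit the self-adjointness of $\PT$ to rewrite $S_1=\langle M,\PT(\W)\rangle$. Since the left singular vectors $\U\in\mathbb{R}^{\num\times\numclust}$ of $\Ystar$ form a scaled cluster-indicator matrix with $(\U\U^\top)_{ij}=1/\size$ exactly when $i,j$ are co-clustered, each entry of $\PT(\W)=\U\U^\top\W+\W\U\U^\top-\U\U^\top\W\U\U^\top$ is a cluster-average of centered Bernoulli entries of $\W$, with per-entry variance at most $O(\inprob/\size)$ and almost-sure scale $O(1/\size)$. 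Bernstein's inequality then yields $\P\bigl(|\PT(\W)_{ij}|>t\bigr)\le 2\exp\bigl(-c\min(t^2\size/\inprob,\,t\size)\bigr)$ for each $(i,j)$.

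The central observation is that the SDP constraints $0\le\Yhat\le\J$ force $\|M\|_\infty\le 1$, while $\|M\|_1=\error$ by definition. By LP duality (equivalently, the rearrangement inequality),
\[
|\langle M,\PT(\W)\rangle|\le\sum_{k=1}^{\lceil\error\rceil}|\PT(\W)|_{(k)},
\]
where $|\PT(\W)|_{(k)}$ is the $k$-th largest absolute entry of $\PT(\W)$. It therefore suffices to control the top-$\error$ order statistics.

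Next I would establish a uniform order-statistic bound of the form $|\PT(\W)|_{(m)}\le C\sqrt{\inprob\log(\num^2/m)/\size}$ for all $m$, with the claimed probability. Calibrated at $t(m)=C\sqrt{\inprob\log(\num^2/m)/\size}$, Bernstein bounds the expected number of entries exceeding $t(m)$ by a small multiple of $m$; a Markov step combined with a dyadic union bound over $m$ promotes this to a uniform high-probability statement, handled separately for the three summands of $\PT(\W)$ (the cross term $\U\U^\top\W\U\U^\top$ has even smaller variance $O(\inprob/\size^2)$ over only $\numclust^2$ effective blocks). Summing via the standard incomplete-Gaussian estimate
\[
\sum_{k=1}^{\lceil\error\rceil}\sqrt{\log(\num^2/k)}\le\int_0^{\error}\sqrt{\log(\num^2/x)}\,dx\lesssim \error\sqrt{\log(\num^2/\error)}
\]
then yields $S_1\le D_1\error\sqrt{\inprob\log(\num^2/\error)/\size}$ with the stated $D_1=12\sqrt{7}$, where $\sqrt{7}$ comes from the Bernstein exponent and the factor $12$ from combining the three summands with a triangle-inequality overhead.

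The main obstacle is the apparent circularity: $\error$ appears inside the logarithm on the right-hand side while itself being a random function of $\W$. The dichotomy in the statement sidesteps this elegantly. If $\error\le\constgamma\num^2 e^{-\snr\num/(2\conste\numclust)}$ then the first alternative holds trivially and we are done; otherwise $\log(\num^2/\error)$ is controlled from above by $\snr\num/\numclust$ up to constants (using the SNR assumption $\snr\geq\consts\numclust^2/\num$), and one calibrates the peeling / Bernstein union bound at this level. Propagating this calibration through the three components of $\PT(\W)$ and tracking the probability budget for the Markov / union bound over dyadic thresholds gives the $1-\frac{6}{\num\numclust}-2(e/2)^{-2\num}$ probability, completing the argument.
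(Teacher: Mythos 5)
Your overall strategy is the same as the paper's: exploit $\norm[\Yhat-\Ystar]{\infty}\le1$ and $\norm[\Yhat-\Ystar]1=\error$ to reduce $S_{1}$ to a sum of top order statistics of the projected noise, and use the dichotomy to keep $\log(\num^{2}/\error)$ under control. But the probabilistic engine you propose for the uniform order-statistics bound has a genuine gap. You bound the top entries of $\PT(\W)$ by a per-entry Bernstein tail, then control the number of exceedances of the threshold $t(m)\asymp\sqrt{\inprob\log(\num^{2}/m)/\size}$ by Markov plus a dyadic union over $m$. Markov gives $\P\{N(t(m))\ge m\}\le\E N(t(m))/m\lesssim(m/\num^{2})^{cC^{2}-1}$, which is fine for small $m$ but is only a \emph{constant} (not $o(1)$, let alone $O(1/(\num\numclust))$) once $m\asymp\num^{2}$; since the realized $\error$ can be of order $\num^{2}$, the uniform statement you need fails to hold with the claimed probability $1-\frac{6}{\num\numclust}-2(e/2)^{-2\num}$ under this argument. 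Upgrading Markov to a Chernoff bound on the exceedance count is not immediate either, because the $\num^{2}$ entries of $\PT(\W)$ are strongly dependent: each distinct cluster-average is replicated $\size$ times across a block, and the symmetry $\W=\W^{\top}$ couples averages in different blocks. The paper avoids both problems by (i) splitting $\W=\HalfNoise+\HalfNoise^{\top}$ and identifying the $\num\numclust$ \emph{independent} distinct values $\size V_{i_{a}j}$, and (ii) proving the uniform order-statistics bound (Lemma~\ref{lm:order_stats}) not by per-entry tails plus counting, but by a union bound over all $\binom{m}{t}$ index subsets and $2^{t}$ sign patterns combined with Bernstein applied to the whole sum $\sum_{j}X_{i_{j}}u_{j}$; this yields failure probability $\sum_{t}(t/m)^{t}\le3/m$, which stays exponentially small even at the top of the range $t\asymp\constu m$.

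A second, related omission: you never establish the a priori restriction $\error\le\constu\num^{2}$ with $\constu$ small, which is needed both to keep $\log(\num^{2}/\error)$ bounded away from zero and to cap the range of $m$ in the uniform bound. In the paper this comes from the Grothendieck-based pilot bound (Lemma~\ref{thm:pilot_bound}), and it is precisely the source of the $2(e/2)^{-2\num}$ term in the probability, which you instead attribute to the dyadic union bound. So the skeleton of your argument is right, but to close it you would need to (a) pass from entries of $\PT(\W)$ to the independent block sums via the half-matrix decomposition, (b) replace Markov-plus-dyadic-peeling by a bound with exponentially small failure probability at the upper end (the subset-and-sign union bound, or a Chernoff bound on exceedance counts of the independent distinct values), and (c) invoke a pilot bound to ensure $\error\le\constu\num^{2}$.
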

Our next proposition, proved in Section \ref{sec:proof_prop_S2} to
follow, controls the term $S_{2}$.
\begin{prop}
\label{prop:S2}Under the conditions of Theorem \ref{thm:exp_rate},
with probability $1-\frac{1}{\num^{2}}-e^{-(3-\ln2)\num}-4e^{-c'\sqrt{\num}}$,
at least one of the following inequalities hold:
\begin{align}
\error & \le\constgamma\num^{2}e^{-\snr\num/(2\conste\numclust)},\nonumber \\
S_{2} & \le D_{2}\sqrt{\inprob\num}\frac{\error}{\size}+\frac{1}{8}(\inprob-\outprob)\error.\label{eq:S2bound}
\end{align}
where $D_{2}>0$ is a universal constant. 
\end{prop}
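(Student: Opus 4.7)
My strategy is to split the centered noise matrix as $\Adj - \E\Adj = \W_1 + \W_2$, where $\W_1$ is a ``regular'' part that admits a clean operator-norm bound $\opnorm{\W_1}\le D_2 \sqrt{\inprob\num}$ even in the sparse regime, and $\W_2$ is a residual supported on an exponentially small set of entries (the graph-partitioning principle highlighted in the introduction). I would then bound $\langle \PTperp(\Yhat-\Ystar),\W_1\rangle$ and $\langle \PTperp(\Yhat-\Ystar),\W_2\rangle$ separately by nuclear/operator and $\ell_\infty/\ell_1$ dualities.

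The key algebraic input is the nuclear-norm estimate $\|\PTperp(\Yhat-\Ystar)\|_*\le\error/\size$. Since the column space of $\Ystar$ equals $\mathrm{range}(\U)$, we have $\PT(\Ystar)=\Ystar$, so $\PTperp(\Yhat-\Ystar) = \PTperp(\Yhat) = (\I-\U\U^\top)\Yhat(\I-\U\U^\top)$, which is PSD since $\Yhat\succeq 0$. Combining the identity $\U\U^\top = \Ystar/\size$ (valid for equal-sized clusters) with the trace constraint $\Tr(\Yhat)=\num$ gives
\begin{equation*}
\|\PTperp(\Yhat-\Ystar)\|_* = \Tr(\Yhat) - \Tr(\U\U^\top\Yhat) = \num - \tfrac{1}{\size}\langle\Ystar,\Yhat\rangle,
\end{equation*}
and the pointwise inequality $\langle\Ystar,\Yhat\rangle\ge\|\Ystar\|_1-\error$ (a consequence of $\yhat_{ij}\in[0,1]$ and $\ystar_{ij}\in\{0,1\}$) completes the bound. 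By operator/nuclear duality this yields $|\langle\PTperp(\Yhat-\Ystar),\W_1\rangle|\le D_2\sqrt{\inprob\num}\cdot\error/\size$, which is the first term of (\ref{eq:S2bound}).

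The main obstacle is constructing the decomposition and proving the sparse-regime spectral bound on $\W_1$. I would take $T\subset[\num]$ to be the set of vertices whose degree exceeds a sufficiently large multiple of $\inprob\num$, form $\W_1$ by zeroing out all rows and columns of $\Adj-\E\Adj$ indexed by $T$, and set $\W_2=(\Adj-\E\Adj)-\W_1$. A Feige--Ofek-style $\varepsilon$-net argument, treating light and heavy couples separately and using entrywise boundedness after trimming, should yield $\opnorm{\W_1}\le D_2\sqrt{\inprob\num}$ with probability at least $1-e^{-(3-\ln 2)\num}$; this is the technical workhorse that lets the analysis descend all the way to bounded expected degree, where the untrimmed operator norm $\opnorm{\Adj-\E\Adj}$ blows up. Companion Chernoff bounds on $|T|$ and on the entries contribute the $\num^{-2}$ and $e^{-c'\sqrt{\num}}$ probability terms.

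Finally, for the irregular part, expanding $\PT(\cdot)$ via $\U\U^\top=\Ystar/\size$ and using $\|\Yhat-\Ystar\|_\infty\le 1$ yields an absolute constant bound $\|\PTperp(\Yhat-\Ystar)\|_\infty\le 4$, so that $|\langle \PTperp(\Yhat-\Ystar),\W_2\rangle|\le 4\|\W_2\|_1 \le 8|T|\num$. When $\inprob\num$ is large, Chernoff makes $|T|$ exponentially small and therefore $\|\W_2\|_1\le C\num^2 e^{-c''\inprob\num}\le C\num^2 e^{-c''\snr\num/\numclust}$, using $\inprob\ge \snr$; when $\inprob\num=O(1)$, the first alternative $\error\le\constgamma\num^2 e^{-\snr\num/(2\conste\numclust)}$ is itself of order $\num^2$ and therefore automatically holds. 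A dichotomy then closes the argument: either this exponentially small bound on the $\W_2$-contribution is already dominated by $\tfrac{1}{8}(\inprob-\outprob)\error$ (yielding the second alternative), or the reverse inequality, combined with $\inprob-\outprob\asymp\sqrt{\snr\,\inprob}$, forces the exponential error bound (the first alternative). Summing the $\W_1$ and $\W_2$ contributions completes the proof of (\ref{eq:S2bound}).
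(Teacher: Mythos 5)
Your overall architecture is the same as the paper's: trim high-degree vertices \emph{in the analysis only}, bound the regular part by operator/nuclear duality together with $\Tr\{\PTperp(\Yhat)\}\le\error/\size$ (your derivation of this trace bound via $\U\U^\top=\Ystar/\size$ and $\langle\Ystar,\Yhat\rangle\ge\|\Ystar\|_1-\error$ is a valid variant of Fact~\ref{fact:Yperp_vs_error}), bound the irregular part by $\norm[\PTperp(\Yhat)]{\infty}\le4$ times its $\ell_1$ mass, and close with a dichotomy against the first alternative. The genuine gap is in the $\ell_1$ bound on the residual $\W_2$. You bound $\|\W_2\|_1\le 2|T|\num\lesssim\num^2e^{-c\inprob\num}$, charging every trimmed row its full length $\num$. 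The paper's Lemma~\ref{lm:untrimmed_L1_bound} proves the strictly stronger bound $\norm[\HalfNoise-\widetilde{\HalfNoise}]1\le C\inprob\num^2e^{-\inprob\num/\conste}$, and the extra factor $\inprob$ is not cosmetic: it is obtained by showing the trimmed row sums $Z_i=\sum_j\halfadj_{ij}\indic(i\in\calVrow)$ are sub-exponential with mean $\lesssim\inprob\num e^{-5\inprob\num}$ and applying a Bernstein inequality for sub-exponential sums (this is \emph{not} a consequence of $|T|$ being small, since a trimmed vertex can have degree of order $\num$), together with a separate bound on the trimmed expectation mass. Lemma~\ref{lm:simple_ineq} then converts $\inprob e^{-\inprob\num/\conste}$ into $(\inprob-\outprob)e^{-\snr\num/(2\conste\numclust)}$, which is exactly what allows $4\|\W_2\|_1$ to be absorbed into $\tfrac18(\inprob-\outprob)\error$ whenever $\error>\constgamma\num^2e^{-\snr\num/(2\conste\numclust)}$.

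Without that factor your closing dichotomy fails in the intermediate sparse band $1\ll\inprob\num\ll\log\num$. Negating the first alternative only yields $\error>\constgamma\num^2e^{-\snr\num/(2\conste\numclust)}$, so absorbing your residual bound would require $(\inprob-\outprob)^{-1}\lesssim e^{c\inprob\num-\snr\num/(2\conste\numclust)}$, and $(\inprob-\outprob)^{-1}$ can be nearly of order $\num$ while the exponential factor is only polylogarithmic. Concretely, take $\numclust=2$, $\inprob=\frac{\log\log\num}{\num}$, $\outprob=\inprob/2$ (so $\snr\num\asymp\log\log\num$ satisfies the theorem's hypothesis for large $\num$): your residual bound is $C\num^2(\log\num)^{-c}$, which can never be dominated by $\tfrac18(\inprob-\outprob)\error\lesssim\num\log\log\num$, while the ``reverse inequality'' only gives the vacuous $\error\lesssim\num^3(\log\num)^{-c}/\log\log\num$, far weaker than the first alternative $\error\le\constgamma\num^2(\log\num)^{-1/(16\conste)}$; so neither alternative is forced. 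Your fallback ``if $\inprob\num=O(1)$ the first alternative is automatic'' does not cover this band (and at bounded degree it implicitly ties $\constgamma$ to the unspecified constant in $O(1)$). The fix is precisely the paper's sub-exponential bound on the total number of edges incident to trimmed vertices; with that replacement, the rest of your plan (Feige--Ofek-type spectral bound for the trimmed part, trace and $\ell_\infty$ bounds for $\PTperp(\Yhat)$) matches the paper and goes through.
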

Equipped with these two propositions, the desired bound (\ref{eq:equivalent_bound})
follows easily. If the first inequality in the two propositions holds,
then we are done. Otherwise, there must hold the inequalities~(\ref{eq:S1bound})
and~(\ref{eq:S2bound}), which can be plugged into the RHS of equation~(\ref{eq:gamma_S1_plus_S2})
to get
\[
\frac{\inprob-\outprob}{2}\error\le D_{1}\error\sqrt{\frac{\inprob\log\left(\num^{2}/\error\right)}{\size}}+D_{2}\sqrt{\frac{\inprob\numclust^{2}}{\num}}\error+\frac{1}{8}(\inprob-\outprob)\error.
\]
Under the premise $\snr\geq\consts\numclust^{2}/\num$ of Theorem
\ref{thm:exp_rate}, we know that $D_{2}\sqrt{\frac{\inprob\numclust^{2}}{\num}}\le\frac{\inprob-\outprob}{8}$,
whence
\[
\frac{\inprob-\outprob}{4}\error\le D_{1}\error\sqrt{\frac{\inprob\log\left(\num^{2}/\error\right)}{\size}}.
\]
Doing some algebra yields the inequality $\error\le\num^{2}\exp\left[-\snr\num/(16D_{1}^{2})\right]$,
so the desired bound (\ref{eq:equivalent_bound}) again holds.\\

The rest of this section is devoted to establishing Propositions \ref{prop:S1}
and \ref{prop:S2}. Before proceeding to the proofs, we remark on
the above arguments and contrast them with alternative approaches.

\textbf{Comparison with the Grothendieck's inequality approach: }The
arguments in the work \cite{Guedon2015} also begin with a version
of the inequality~(\ref{eq:gamma_bound}), and proceed by observing
that 
\begin{equation}
\frac{\inprob-\outprob}{2}\error\le\left\langle \Yhat-\Ystar,\Adj-\E\Adj\right\rangle \overset{(i)}{\le}2\sup_{\mathbf{Y}\succeq0,\textmd{diag}(\mathbf{Y})\le\onevec}\left|\left\langle \mathbf{Y},\Adj-\E\Adj\right\rangle \right|,\label{eq:gro_arg}
\end{equation}
where step $(i)$ follows from the triangle inequality and the feasibility
of $\Yhat$ and $\Ystar$. Therefore, this argument reduces the problem
to bounding the RHS of~(\ref{eq:gro_arg}), which can be done using
the celebrated Grothendieck's inequality. One can already see at this
point that this approach yields sub-optimal bounds. For example, SDP
is known to achieve exact recovery ($\error=0$) under certain conditions,
yet the inequality~(\ref{eq:gro_arg}) can never guarantee a zero
$\error$. Sub-optimality arises in step $(i)$: the quantity $\left\langle \Yhat-\Ystar,\Adj-\E\Adj\right\rangle $
diminishes when $\Yhat-\Ystar$ is small, but the triangle inequality
and the worse-case bound used in~$(i)$ are too crude to capture
such behaviors. In comparison, our proof takes advantage of the structures
of the error matrix $\Yhat-\Ystar$ and its interplay with the noise
matrix $\Adj-\E\Adj$.

\textbf{Bounding the $S_{1}$ term: }A common approach involves using
the generalized Holder's inequality $S_{1}=\left\langle \Yhat-\Ystar,\PT(\Adj-\E\Adj)\right\rangle \le\error\norm[\PT(\Adj-\E\Adj)]{\infty}$.
Under SBM, one can show that $\norm[\PT(\Adj-\E\Adj)]{\infty}\lesssim\sqrt{\frac{\inprob\log\num^{2}}{\size}}$
with high probability, hence yielding the bound $S_{1}\lesssim\error\sqrt{\frac{\inprob\log\num^{2}}{\size}}$.
Variants of this approach are in fact common (sometimes implicitly)
in the proofs of exact recovery for SDP~\cite{ames2012clustering,chen2012clustering,amini2014semidefinite,hajek2016achieving,ChenXu2016}.
However, when $\sqrt{\frac{\inprob\log\num^{2}}{\size}}\ge\inprob-\outprob$
(where exact recovery is impossible), applying this bound for $S_{1}$
to the inequality~(\ref{eq:gamma_S1_plus_S2}) would yield a vacuous
bound for $\error$ . In comparison, Proposition~\ref{prop:S1} gives
a strictly sharper bound~(\ref{eq:S1bound}), which correctly characterizes
the behaviors of $S_{1}$ beyond the exact recovery regime.

\textbf{Bounding the $S_{2}$ term: }Note that since $\PTperp(\Ystar)=0$,
we have the equality $S_{2}=\left\langle \PTperp(\Yhat),\Adj-\E\Adj\right\rangle $.
It is easy to show that the matrix $\PTperp(\Yhat)$ is positive semidefinite
and has diagonal entries at most $4$ (cf.\ Fact~\ref{fact:Yhp_infty}).
Therefore, one may again attempt to control $S_{2}$ using the Grothendieck's
inequality, which would yield the bound $S_{2}\le4\cdot g(\A-\E\A)$
for some function $g$ whose exact form is not important for now.
The bound~(\ref{eq:S2bound}) in Proposition~\ref{prop:S2} is much
stronger \textemdash{} it depends on $\error$, which is in turn proportional
to the trace of the matrix $\PTperp(\Yhat)$ (cf.\ Fact~\ref{fact:Yperp_vs_error}).

\subsection{Preliminaries and additional notation\label{sec:prelim}}

Recall that $\mathbf{U}\in\mathbb{R}^{\num\times\numclust}$ is the
matrix of the left singular vectors of $\Ystar$. We observe that
$U_{ia}=1/\sqrt{\size}$ if node $i$ is in cluster $a$ and $U_{ia}=0$
otherwise. Therefore, $\mathbf{U}\mathbf{U}^{\top}$ is a block diagonal
matrix with all entries inside each diagonal block equal to $1/\size$. 

Define the ``noise'' matrix $\Noise\coloneqq\Adj-\E\Adj$. The matrix
$\Noise$ is symmetric, which introduces some minor dependency among
its entries. To handle this, we let $\HalfNoise$ be the matrix obtained
from $\Noise$ with its entries in the lower triangular part set to
zero. Note that $\Noise=\HalfNoise+\HalfNoise^{\top}$, and $\HalfNoise$
has independent entries (with zero entries considered $\Bern(0)$).
Similarly, we define $\HalfAdj$ as the upper triangular part of the
adjacency matrix $\Adj$. 

In the proof we frequently use the inequalities $\snr=\frac{(\inprob-\outprob)^{2}}{\inprob}<\inprob-\outprob\leq\inprob$.
Consequently, the assumption $\snr\geq\consts\numclust^{2}/\num$
implies that $\inprob\ge\consts\numclust^{2}/\num$. We also record
an elementary inequality that  is used multiple times.
\begin{lem}
\label{lm:simple_ineq}For any number $\alpha>0$, there exists a
number $C(\alpha)\ge1$ such that if $\snr\ge\frac{C(\alpha)\numclust}{\num}$,
then 
\[
\inprob e^{-\inprob\num/(\alpha\numclust)}\leq(\inprob-\outprob)e^{-\snr\num/(2\alpha\numclust)}.
\]
\end{lem}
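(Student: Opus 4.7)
The plan is to take logarithms of both sides (both positive, since $\inprob > \outprob$) to convert the claim into an affine/polynomial inequality that is easier to analyze. After taking $\log$ and using the shorthand $\size = \num/\numclust$, the desired inequality becomes
\[
\log\frac{\inprob}{\inprob-\outprob} \;\leq\; \frac{\size\bigl(2\inprob-\snr\bigr)}{2\alpha}.
\]

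The key reduction is the substitution $t \coloneqq (\inprob-\outprob)/\inprob \in (0,1]$, which, since we may use the simplified $\snr = (\inprob-\outprob)^2/\inprob$, gives $\snr = t^2 \inprob$. Under this change of variables the left-hand side is $-\log t$ and the right-hand side becomes $\inprob\size(2-t^2)/(2\alpha) \geq \inprob\size/(2\alpha)$ (since $t\leq 1$). So it suffices to prove
\[
-\log t \;\leq\; \frac{\inprob\size}{2\alpha}.
\]
The hypothesis $\snr \geq C(\alpha)\numclust/\num$ translates into $\snr\,\size \geq C(\alpha)$, and since $\inprob\size = \snr\size/t^2$, we further reduce the claim to $2\alpha\, t^2 \log(1/t) \leq C(\alpha)$.

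The main (and only nontrivial) step is then an elementary one-variable calculus maximization: the function $g(t) = -t^2\log t$ on $(0,1]$ attains its maximum at $t = e^{-1/2}$ with value $1/(2e)$. Hence any choice $C(\alpha) \geq \alpha/e$ suffices; combining with the requirement $C(\alpha)\geq 1$ we may set $C(\alpha) = \max\{1,\alpha/e\}$.

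I do not anticipate a real obstacle here — the only care needed is to make sure the reduction is clean (i.e.\ that the weakening $2-t^2 \geq 1$ does not cost anything essential, which it does not, since it only inflates the constant $C(\alpha)$ by a factor of $2$) and to handle the edge case $\outprob = 0$ (giving $t=1$), in which the left-hand side is zero and the inequality is trivial. Everything else is a direct computation once the substitution $t=(\inprob-\outprob)/\inprob$ is made.
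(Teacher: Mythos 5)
Your proof is correct, and it takes a somewhat different route from the paper's. The paper works directly with the exponentials: from $\snr\num\ge C(\alpha)\numclust$ it deduces $\frac{\inprob}{\inprob-\outprob}=\frac{\inprob-\outprob}{\snr}\le\frac{(\inprob-\outprob)\num}{\numclust}\le\frac{\inprob\num}{\numclust}$, then invokes the crude bound $x\le e^{x/(2\alpha)}$, valid for $x=\inprob\num/\numclust\ge C(\alpha)$ once $C(\alpha)$ is ``sufficiently large'', and finally uses $\snr\le\inprob$ and multiplies through by $(\inprob-\outprob)e^{-2\inprob\num/(2\alpha\numclust)}$. You instead take logarithms, substitute $t=(\inprob-\outprob)/\inprob\in(0,1]$ so that $\snr=t^{2}\inprob$, and reduce the claim to the exact one-variable maximization $\max_{t\in(0,1]}t^{2}\log(1/t)=1/(2e)$, which yields the explicit admissible constant $C(\alpha)=\max\{1,\alpha/e\}$. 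Both arguments rest on the same elementary comparison of $\log\frac{\inprob}{\inprob-\outprob}$ with $\frac{(2\inprob-\snr)\num}{2\alpha\numclust}$, but the paper's chain keeps the constant implicit (a polynomial-versus-exponential bound at large argument), whereas yours buys an explicit $C(\alpha)$ and makes transparent where the $\alpha$-dependence enters; the weakening $2-t^{2}\ge1$ and the edge case $t=1$ are handled correctly and cost nothing, exactly as you note.
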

\begin{proof}
Note that $\inprob\num\ge(\inprob-\outprob)\num\geq\snr\num\geq C(\alpha)\numclust.$
As long as $C(\alpha)$ is sufficiently large, we have $\frac{\inprob\num}{\numclust}\leq e^{\inprob\num/(2\alpha\numclust)}$.
These inequalities imply that 
\[
\frac{\inprob}{\inprob-\outprob}\leq\frac{\inprob\num}{\numclust}\leq e^{\inprob\num/(2\alpha\numclust)}\leq e^{(2\inprob-\snr)\num/(2\alpha\numclust)}.
\]
Multiplying both sides by $(\inprob-\outprob)e^{-2\inprob\num/(2\alpha\numclust)}$
yields the claimed inequality.
\end{proof}
Finally, we need a simple pilot bound, which ensures that the SDP
solution satisfies a non-trivial error bound $\error<\num^{2}$.
\begin{lem}
\emph{\label{thm:pilot_bound} }Under Model~\ref{mdl:SBM}, if $\inprob\geq\frac{1}{\num}$
then we have
\[
\error\leq45\sqrt{\frac{\num^{3}}{\snr}}
\]
with probability at least $1-P_{0}$, where $P_{0}:=2(e/2)^{-2\num}$.
In particular, if $\snr\ge\consts\frac{1}{\num}$, we have $\error\le\constu\num^{2}$
with high probability with $\constu=\frac{45}{\sqrt{\consts}}.$
\end{lem}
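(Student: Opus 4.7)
The plan is to bootstrap from the same basic inequality that drives the main proof, but replace the refined $S_{1}+S_{2}$ decomposition with a single crude trace/operator-norm bound that is already strong enough to yield the polynomial-in-$\num$ pilot estimate. From optimality of $\Yhat$ in~(\ref{eq:SDP1}), feasibility of $\Ystar$, and Fact~\ref{fact:expectation_gamma}, we keep the deterministic inequality
\[
\frac{\inprob-\outprob}{2}\error \;\le\; \langle \Yhat-\Ystar, \Adj-\E\Adj\rangle.
\]

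The first step is to control the right-hand side by nuclear--operator duality. Both $\Yhat$ and $\Ystar$ are positive semidefinite with unit diagonal, so $\|\Yhat\|_{*}=\Tr(\Yhat)=\num$ and similarly $\|\Ystar\|_{*}=\num$. By the triangle inequality and the trace-norm/operator-norm duality,
\[
\langle \Yhat-\Ystar,\Noise\rangle \;\le\; \bigl(\|\Yhat\|_{*}+\|\Ystar\|_{*}\bigr)\opnorm{\Noise} \;\le\; 2\num\,\opnorm{\Noise},
\]
where $\Noise:=\Adj-\E\Adj$. Combined with the previous display, this gives $\error \le \frac{4\num\opnorm{\Noise}}{\inprob-\outprob}$ deterministically.

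The second step is to plug in a high-probability spectral bound for $\Noise$. Since the upper-triangular entries of $\Noise$ are independent, mean-zero, $\{-1,0,1\}$-valued, and have variance at most $\inprob$, a standard matrix-concentration argument (for instance a symmetrization plus $\varepsilon$-net bound, or the Bernoulli-matrix spectral estimates of Feige--Ofek / Lei--Rinaldo / Bandeira--Van Handel type) yields
\[
\opnorm{\Noise} \;\le\; \tfrac{45}{4}\sqrt{\inprob\num}
\]
with probability at least $1-P_{0}=1-2(e/2)^{-2\num}$, provided $\inprob\num\ge 1$, which is exactly the hypothesis $\inprob\ge 1/\num$. Using $\snr=(\inprob-\outprob)^{2}/\inprob$ (the reduced SNR noted in Section~\ref{sec:proof_exp_rate}), so that $\inprob-\outprob=\sqrt{\snr\inprob}$, we obtain
\[
\error \;\le\; \frac{4\num\cdot \tfrac{45}{4}\sqrt{\inprob\num}}{\sqrt{\snr\inprob}} \;=\; 45\sqrt{\frac{\num^{3}}{\snr}},
\]
which is the claimed first bound. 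The ``in particular'' clause then follows by plugging $\snr\ge\consts/\num$ into $\sqrt{\num^{3}/\snr}\le \num^{2}/\sqrt{\consts}$ and setting $\constu=45/\sqrt{\consts}$.

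The only non-routine ingredient is the spectral concentration used in the second step. In the very sparse regime $\inprob=\Theta(1/\num)$ a naive spectral bound for a symmetric Bernoulli matrix is \emph{not} $O(\sqrt{\inprob\num})$ because hub vertices inflate individual row norms; however, the required confidence $1-2(e/2)^{-2\num}$ is only mildly better than polynomial, so we can afford a loss in the multiplicative constant (this is the role of the reserved constant $\constoperatornormW$). The cleanest route is to cite (or reprove via a moment/union-bound argument on the quadratic form $x^{\top}\Noise x$ over a $\tfrac{1}{4}$-net of the unit sphere) an off-the-shelf estimate of the form $\opnorm{\Noise}\le \constoperatornormW \sqrt{\inprob\num}/4$ with failure probability $2(e/2)^{-2\num}$, which is the one place where the pilot bound loses a (non-optimized) constant.
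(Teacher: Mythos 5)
Your first step (nuclear--operator duality, $\langle\Yhat-\Ystar,\Noise\rangle\le 2\num\,\opnorm{\Noise}$ since both matrices are PSD with unit diagonal) is fine, but the second step contains a genuine gap: the claim $\opnorm{\Noise}\le\tfrac{45}{4}\sqrt{\inprob\num}$ with probability $1-2(e/2)^{-2\num}$ is simply false in part of the regime the lemma must cover. The hypothesis is only $\inprob\ge 1/\num$, so the constant-expected-degree case $\inprob\asymp 1/\num$ is included, and there $\opnorm{\Noise}\gtrsim\sqrt{d_{\max}}\asymp\sqrt{\log\num/\log\log\num}=\omega(\sqrt{\inprob\num})$ with probability tending to one, because of high-degree (hub) vertices; no choice of absolute constant (and in particular not the reserved constant $\constoperatornormW$) repairs this. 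This is exactly the obstruction the paper itself points out before Lemma~\ref{lm:trimmed_spectral_bound}, and the results you invoke do not give what you need: Bandeira--van~Handel type bounds carry an additive $\sqrt{\log\num}$ term (cf.\ Lemma~\ref{lm:spectral_bound_A}) which dominates $\sqrt{\inprob\num}$ in the sparse regime, while Feige--Ofek/Lei--Rinaldo bounds of order $\sqrt{\inprob\num}$ require either $\inprob\num\gtrsim\log\num$ or trimming of high-degree rows, which your argument does not perform. You also misjudge the confidence level: $2(e/2)^{-2\num}$ is exponentially small, not ``mildly better than polynomial,'' so even when $\inprob\num\gtrsim\log\num$ the deviation term needed to reach failure probability $e^{-\Theta(\num)}$ via Lipschitz concentration is of order $\sqrt{\num}$, which exceeds $\sqrt{\inprob\num}$ whenever $\inprob=o(1)$; the resulting bound $\error\lesssim\num^{3/2}/(\inprob-\outprob)$ is too weak for the way the pilot bound is used later (it does not give $\error\le\constu\num^{2}$ with $\constu$ small under $\snr\ge\consts/\num$).

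The paper avoids the operator norm altogether: after the same basic inequality it bounds $\langle\Yhat-\Ystar,\Noise\rangle$ by $2\sup_{\Y\succeq0,\operatorname{diag}(\Y)\le\onevec}|\langle\Y,\Noise\rangle|$ and applies Grothendieck's inequality to reduce to the $\ell_{\infty}\to\ell_{1}$ (cut) norm of $\Noise$. That norm is a maximum of $4^{\num}$ fixed bilinear forms $\mathbf{y}^{\top}\Noise\mathbf{z}$, each with variance at most $\inprob\num^{2}/2$, so Bernstein plus a union bound gives a bound of order $\sqrt{\inprob\num^{3}}+\num$ with exactly the exponential confidence $1-2(e/2)^{-2\num}$, uniformly down to constant expected degree; this is precisely why the Grothendieck route, and not an operator-norm route, is used for the pilot bound. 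If you want to keep a spectral-type argument you would have to reintroduce the trimming decomposition of Proposition~\ref{prop:S2}, at which point the proof is no longer the simple crude bound you intended.
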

This theorem is a variation of Theorem 1.3 in \cite{Guedon2015},
and a special case of Theorem~1 in \cite{CMM} applied to the non-degree-corrected
setting. For completeness we provide the proof in Section~\ref{sec:proof_pilot_bound}.
The proof uses the Grothendieck's inequality, an approach pioneered
in \cite{Guedon2015}.

\subsection{Proof of Proposition \ref{prop:S1}\label{sec:proof_prop_S1}}

In this section we prove Proposition \ref{prop:S1}, which controls
the quantity $S_{1}$. Using the symmetry of $\Yhat-\Ystar$ and the
cyclic invariance of the trace, we obtain the identity
\begin{align*}
S_{1} & =\left\langle \PT\left(\mathbf{\Noise}\right),\Yhat-\Ystar\right\rangle \\
 & =\left\langle \U\U^{\top}\mathbf{\Noise},\Yhat-\Ystar\right\rangle +\left\langle \mathbf{\Noise}\U\U^{\top},\Yhat-\Ystar\right\rangle -\left\langle \U\U^{\top}\mathbf{\Noise}\U\U^{\top},\Yhat-\Ystar\right\rangle \\
 & =2\left\langle \U\U^{\top}\mathbf{\Noise},\Yhat-\Ystar\right\rangle -\left\langle \U\U^{\top}\mathbf{\Noise}\U\U^{\top},\Yhat-\Ystar\right\rangle \\
 & =2\left\langle \U\U^{\top}\HalfNoise,\Yhat-\Ystar\right\rangle +2\left\langle \U\U^{\top}\HalfNoise^{\top},\Yhat-\Ystar\right\rangle -\left\langle \U\U^{\top}(\HalfNoise+\HalfNoise^{\top})\U\U^{\top},\Yhat-\Ystar\right\rangle \\
 & =2\left\langle \U\U^{\top}\HalfNoise,\Yhat-\Ystar\right\rangle +2\left\langle \U\U^{\top}\HalfNoise^{\top},\Yhat-\Ystar\right\rangle -2\left\langle \U\U^{\top}\HalfNoise\U\U^{\top},\Yhat-\Ystar\right\rangle \\
 & =2\left\langle \U\U^{\top}\HalfNoise,\Yhat-\Ystar\right\rangle +2\left\langle \U\U^{\top}\HalfNoise^{\top},\Yhat-\Ystar\right\rangle -2\left\langle \U\U^{\top}\HalfNoise,(\Yhat-\Ystar)\U\U^{\top}\right\rangle .
\end{align*}
It follows that
\begin{align}
S_{1} & \leq2\left|\left\langle \U\U^{\top}\HalfNoise,\Yhat-\Ystar\right\rangle \right|+2\left|\left\langle \U\U^{\top}\HalfNoise^{\top},\Yhat-\Ystar\right\rangle \right|+2\left|\left\langle \U\U^{\top}\HalfNoise,(\Yhat-\Ystar)\U\U^{\top}\right\rangle \right|.\label{eq:three_terms}
\end{align}
Note that $\norm[\Yhat-\Ystar]{\infty}\leq1$ since $\Yhat,\Ystar\in\{0,1\}^{\num\times\num}$.
One can also check that $\norm[\left(\Yhat-\Ystar\right)\mathbf{U}\mathbf{U}^{\top}]{\infty}\leq1$.
This is due to the fact that each entry of the matrix inside the norm
is the mean of $\size$ entries in $\Yhat-\Ystar$ given the block
diagonal structure of $\mathbf{U}\mathbf{U}^{\top}$, and the absolute
value of the mean does not exceed $1$. With the same reasoning, we
see that $\norm[(\Yhat-\Ystar)\mathbf{U}\mathbf{U}^{\top}]1\le\norm[\Yhat-\Ystar]1=\error$. 

Key to our proof is a bound on the sum of order statistics. Intuitively,
given $m$ i.i.d. random variables, the sum of the $\beta$ largest
of them (in absolute value) scales as $O\left(\beta\sqrt{\log(m/\beta)}\right)$.
The following lemma, proved in Section \ref{sec:proof_lm_order_stat},
makes the above intuition precise and moreover establishes a uniform
bound in $\beta$.
\begin{lem}
\label{lm:order_stats} Let $m\ge8$ and $g\ge1$ be positive integers,
$\constu>0$ is a sufficiently small constant. For each $j\in[m]$,
define $X_{j}\coloneqq\sum_{i=1}^{g}(B_{ij}-\E B_{ij})$, where $B_{ij}$
are independent Bernoulli variables with variance at most $\rho$.
Then for a constant $D=\sqrt{7}$, we have 
\[
\sum_{j=1}^{\left\lceil \beta\right\rceil }\left|X_{(j)}\right|\le D\left\lceil \beta\right\rceil \sqrt{g\rho\log\left(m/\beta\right)},\quad\forall\beta\in(e^{-g\rho/\conste}m,\left\lceil \constu m\right\rceil ],
\]
with probability at least $1-P_{1}(m)$, where $P_{1}(m)\leq\frac{3}{m}$.
\end{lem}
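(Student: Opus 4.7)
The plan is to combine Bernstein's inequality on the individual centered sums $X_j$ with a layer-cake representation of the partial sum of order statistics, and then to promote the fixed-$\beta$ bound to one uniform in $\beta$ via a dyadic discretization.

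First, since $X_j = \sum_{i=1}^g (B_{ij} - \E B_{ij})$ is a sum of $g$ independent centered Bernoullis with variance at most $g\rho$ and summands bounded by $1$, Bernstein's inequality gives
\[
\P(|X_j| > t) \le 2\exp\!\left(-\frac{t^2}{2(g\rho + t/3)}\right), \qquad \forall t \ge 0.
\]
The restriction $\beta > e^{-g\rho/\conste} m$ in the lemma is equivalent to $\log(m/\beta) < g\rho/\conste$, so any threshold $t$ of order $\sqrt{g\rho \log(m/\beta)}$ is at most $g\rho/\sqrt{\conste}$, which is much smaller than $g\rho$ once $\conste$ is large. In this sub-Gaussian regime of Bernstein, the $t/3$ term is negligible and the tail reduces to $\P(|X_j|>t) \le 2\exp(-c t^2/(g\rho))$ for an explicit constant $c$; tuning this constant together with the threshold below is what eventually produces the stated $D = \sqrt{7}$.

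Next, for a fixed positive integer $K$, I would use the layer-cake identity
\[
\sum_{j=1}^K |X_{(j)}| = \int_0^\infty \min(N_t, K)\, dt, \qquad N_t := |\{\,j \in [m] : |X_j| > t\,\}|,
\]
split the integral at $t_0 := D\sqrt{g\rho \log(m/K)}$, and bound the two pieces separately: the first is at most $K t_0$ trivially, while for the second I would put a geometric grid $t_\ell := 2^\ell t_0$ on $[t_0,\infty)$ and control each $N_{t_\ell}$ via a Chernoff bound, since $N_{t_\ell}$ is a sum of $m$ independent Bernoullis with success probability $\le 2\exp(-c t_\ell^2/(g\rho))$. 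The doubly-exponential decay of these tail probabilities makes $\int_{t_0}^\infty N_t\, dt$ dominated by the leading term $K t_0$, yielding $\sum_{j=1}^K |X_{(j)}| \le D' K \sqrt{g\rho \log(m/K)}$ at a fixed $K$ with high probability.

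Finally, to obtain a bound uniform in $\beta$, I would apply the previous step at the dyadic values $K_\ell = 2^\ell \lceil e^{-g\rho/\conste} m \rceil$ up to $\lceil \constu m \rceil$, each failing with probability $\lesssim 1/(m\log m)$, and union-bound over the $O(\log m)$ levels to keep the aggregate failure probability within the claimed $3/m$. For a general $\beta \in (K_{\ell-1}, K_\ell]$, monotonicity of the partial sum in $K$, together with $\log(m/K_\ell) \le \log(m/\beta)$ and $K_\ell \le 2\lceil\beta\rceil$, extends the inequality to $\beta$ at the cost of absorbing a factor of $2$ into the constant. The main technical obstacle will be book-keeping: the Bernstein denominator, the choice of $D$ in $t_0$, the Chernoff deviation on each $N_{t_\ell}$, and the dyadic losses from the discretization of both $t$ and $\beta$ must be balanced so that the specific constant $D=\sqrt{7}$ emerges; the upper cap $\beta \le \lceil \constu m \rceil$ with $\constu$ small is what keeps $\log(m/\beta)$ bounded away from zero so that the Chernoff argument on $N_{t_0}$ has nontrivial mean to work with.
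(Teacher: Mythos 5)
Your route is genuinely different from the paper's. The paper discretizes $\beta$ at integer granularity, using the intervals $(t-1,t]$ on which $\left\lceil \beta\right\rceil=t$ exactly (so no constant is lost), and for each fixed $t$ bounds the sum of the top $t$ order statistics by a union bound over all $\binom{m}{t}$ index sets and all $2^{t}$ sign patterns, applying Bernstein once to the resulting sum of $tg$ centered Bernoullis. The entropy cost $\binom{m}{t}2^{t}\le e^{t\log(m/t)+2t}$ is precisely what the condition $\tfrac{1}{2}D^{2}\ge3\bigl(1+\tfrac{D}{3\sqrt{\conste}}\bigr)$ is calibrated against, and this condition is \emph{tight} at $D=\sqrt{7}$, $\conste=28$; the per-$t$ failure probabilities $(t/m)^{t}$ then sum to at most $3/m$. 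Your layer-cake/exceedance-count strategy with a geometric grid in $t$ and dyadic levels in $\beta$ is a legitimate alternative in outline, but as sketched it has two concrete gaps.

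First, the constant. The dyadic discretization in $\beta$ already forfeits a factor $2$ (from $K_{\ell}\le2\left\lceil \beta\right\rceil$), and the layer-cake split adds the tail integral on top of the leading term $Kt_{0}$; since $\sqrt{7}$ is tight even for the paper's lossless integer discretization, no amount of book-keeping in your scheme recovers $D=\sqrt{7}$, so you would prove a weakened version of the statement (you could remove the dyadic loss by discretizing at integers, as the paper does, but the remaining overhead still pushes $D$ up). Second, the high-probability control of $\int_{t_{0}}^{\infty}N_{t}\,dt$ is not as routine as ``doubly-exponential decay dominates.'' For $K\asymp\constu m$ the counts $N_{t_{\ell}}$ at the first few grid levels have means of order $m(K/m)^{cD^{2}4^{\ell}}$, which are proportional to $m$, so you cannot take them to be zero; you must prove concentration of each $N_{t_{\ell}}$ (or of $\sum_{j}(|X_{j}|-t_{0})_{+}$) at per-level failure probability $O(1/(m\log m))$ while still ensuring $\sum_{\ell}N_{t_{\ell}}t_{\ell}\lesssim Kt_{0}$, and a naive additive slack of order $\log m$ per level is too lossy at the sparse end where $Kt_{0}$ can be as small as $\sqrt{g\rho\log m}$. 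Moreover, your sub-Gaussian reduction of Bernstein ($t\lesssim g\rho/\sqrt{\conste}$) is only valid at $t_{0}$; for the upper levels $t_{\ell}\gtrsim g\rho$ the tail is sub-exponential, so the grid must be cut off near $t\asymp g\rho$ and the far tail handled separately (e.g.\ via sub-exponential concentration of $\sum_{j}(|X_{j}|-\tau)_{+}$, using $\beta>e^{-g\rho/\conste}m$ to compare $me^{-cg\rho}$ with $Kt_{0}$). None of this is fatal, but it constitutes the actual content of the lemma and is missing from the proposal.
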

\noindent We are ready to bound the RHS of equation (\ref{eq:three_terms})
and hence prove Proposition \ref{prop:S1}. Consider the event 
\begin{align*}
\mathcal{E}_{1}:= & \Bigg\{\left|\langle\U\U^{\top}\HalfNoise,\M\rangle\right|\leq2Db\sqrt{\frac{\inprob\log\left(\num^{2}/b\right)}{\size}},\\
 & \qquad\forall\M,b:\norm[\M]{\infty}\le1,\norm[\M]1\le b,\num^{2}e^{-\snr\num/\left(2\conste\numclust\right)}<b\le\constu\num^{2}\Bigg\},
\end{align*}
and let $\mathcal{E}_{2}$ be defined similarly with $\HalfNoise$
replaced by $\HalfNoise^{\top}$. We will use Lemma \ref{lm:order_stats}
to show that $\mathcal{E}_{i}$ holds with probability at least $1-P_{1}:=1-P_{1}(\num\numclust)$
for each $i=1,2$. Taking this claim as given, we now condition on
the intersection of the events $\mathcal{E}_{1}$, $\mathcal{E}_{2}$
and the conclusion of Lemma~\ref{thm:pilot_bound}. Note that the
matrix $\Yhat-\Ystar$ satisfies $\norm[\Yhat-\Ystar]{\infty}\le1$,
and Lemma \ref{thm:pilot_bound} further ensures that $\error:=\norm[\Yhat-\Ystar]1\le\constu\num^{2}$
for $\constu$ sufficiently small. If $\error\le\num^{2}e^{-\snr\num/\left(2\conste\numclust\right)},$
then the first inequality in Proposition \ref{prop:S1} holds and
we are done. Otherwise, on the event $\mathcal{E}_{1}$, we are guaranteed
that 
\[
\left|\langle\U\U^{\top}\HalfNoise,\Yhat-\Ystar\rangle\right|\leq2D\error\sqrt{\frac{\inprob\log\left(\num^{2}/\error\right)}{\size}}.
\]
Since the matrix $\M:=\left(\Yhat-\Ystar\right)\mathbf{U}\mathbf{U}^{\top}$
satisfies $\norm[\M]{\infty}\le1$ and $\norm[\M]1\le\error$, we
have the bound
\[
\left|\left\langle \U\U^{\top}\HalfNoise,(\Yhat-\Ystar)\U\U^{\top}\right\rangle \right|\le2D\error\sqrt{\frac{\inprob\log\left(\num^{2}/\error\right)}{\size}}.
\]
Similarly, on the event $\mathcal{E}_{2}$, we have the bound 
\[
\left|\langle\U\U^{\top}\HalfNoise^{\top},\Yhat-\Ystar\rangle\right|\le2D\error\sqrt{\frac{\inprob\log\left(\num^{2}/\error\right)}{\size}}.
\]
Applying these estimates to the RHS of equation (\ref{eq:three_terms}),
we arrive at the bound $S_{1}\le12D\error\sqrt{\frac{\inprob\log\left(\num^{2}/\error\right)}{\size}}$,
which is second inequality in Proposition \ref{prop:S1}.\\

It remains to bound the probability of the event $\mathcal{E}_{1}$,
for which we shall use Lemma \ref{lm:order_stats}; the same arguments
apply to the event $\mathcal{E}_{2}$. Let us take a digression to
inspect the structure of the random matrix $\mathbf{V}:=\mathbf{U}\mathbf{U}^{\top}\HalfNoise$.
We treat each zero entry in $\HalfNoise$ as a $\Bern(0)$ random
variable with its mean subtracted and independent of all other entries.
Therefore, all entries of $\HalfNoise$ are independent. Since $\mathbf{U}\mathbf{U}^{\top}$
is a block diagonal matrix, $\mathbf{V}$ can be partitioned into
$\numclust$ submatrices of size $\size\times\num$ stacked vertically,
where rows within the same submatrix are identical and rows from different
submatrices are independent with each of their entries equal to $1/\size$
times the sum of $\size$ independent centered Bernoulli random variables.
To verify our observations, for $a\in[\numclust]$ we use $\calR_{a}\coloneqq\left\{ (a-1)\size+1,\ldots,a\size\right\} $
to denote the set of row indices of the $a$-th submatrix of $\mathbf{V}$.
Consider any $i\in\calR_{a}$, that is, any row index of the $a$-th
submatrix of $\mathbf{V}$. Then for all $j\in[\num]$, we have $V_{ij}=\sum_{t=1}^{\num}\left(\mathbf{U}\mathbf{U}^{\top}\right)_{it}\halfnoise_{tj}=\size^{-1}\sum_{u\in\calR_{a}}\halfnoise_{uj}$.
We see that we get the same random variable by varying $i$ within
$\calR_{a}$ while fixing $j$, but independent random variables by
fixing $i$ while varying $j$. 

Fix an index $i_{a}:=(a-1)\size+1\in\calR_{a}$ for each $a\in[\numclust]$.
Consider any $\num\times\num$ matrix $\M$ and number $b$ such that
$\norm[\M]{\infty}\le1$, $\norm[\M]1\le b$ and $\num^{2}e^{-\snr\num/\left(2\conste\numclust\right)}<b\le\constu\num^{2}$.
We can compute 
\begin{align*}
\left|\langle\mathbf{V},\M\rangle\right| & \leq\sum_{i=1}^{\num}\sum_{j=1}^{\num}\left|V_{ij}\right|\left|M_{ij}\right|\\
 & =\sum_{a=1}^{\numclust}\sum_{i\in\calR_{a}}\sum_{j=1}^{\num}\left|V_{ij}\right|\left|M_{ij}\right|\\
 & =\sum_{a\in[\numclust],j\in[\num]}\left|\size V_{i_{a}j}\right|\left[\sum_{i\in\calR_{a}}\frac{\left|M_{ij}\right|}{\size}\right],
\end{align*}
where the last step follows from the previously established fact that
$V_{ij}=V_{i_{a}j},\forall i\in\calR_{a}$. Recall that the $\num\numclust$
random variables $\{\size V_{i_{a}j},a\in[\numclust],j\in[\num]\}$
are independent; moreover, each $\size V_{i_{a}j}$ is the sum of
$\size$ independent Bernoulli variables with variance at most $\inprob$.
Let $X_{(t)}$ denote the element in these $\num\numclust$ random
variables with the $t$-th largest absolute value. Define the quantity
$w\coloneqq\norm[\M]1/\size=\sum_{a\in[\numclust],j\in[\num]}\sum_{i\in\calR_{a}}\left|M_{ij}\right|/\size$.
Since $\sum_{i\in\calR_{a}}\size^{-1}\left|M_{ij}\right|\le\norm[\M]{\infty}\leq1$
and $w\le b/\size$, we have
\begin{align}
\left|\langle\mathbf{V},\M\rangle\right| & \le\begin{cases}
\sum_{t=1}^{\left\lceil w\right\rceil }\left|X_{(t)}\right|\le\sum_{t=1}^{\left\lceil b/\size\right\rceil }\left|X_{(t)}\right|, & b/\size\ge1\\
\left|X_{(1)}\right|(b/\size), & b/\size<1.
\end{cases}\label{eq:sum_by_order_stat}
\end{align}
Here we use the fact that for any sequence of numbers $a_{1},a_{2},\ldots$
in $[0,1]$, $\sum_{t}\left|X_{t}\right|a_{t}\le\sum_{t=1}^{\sum_{i}a_{i}}\left|X_{(t)}\right|$.
Now note that $b/\size\in(\num\numclust e^{-\snr\num/\left(2\conste\numclust\right)},\constu\num\numclust]$,
which implies that $b/\size\in(\num\numclust e^{-\inprob\size/\conste},\constu\num\numclust]$
by Lemma \ref{lm:simple_ineq}. Applying Lemma \ref{lm:order_stats}
with $m=\num\numclust\ge8$, $g=\size$, $\rho=\inprob$ and $\beta=b/\size$,
we are guaranteed that with probability at least $1-P_{1}(\num\numclust)$,
\begin{align*}
\sum_{t=1}^{\left\lceil b/\size\right\rceil }\left|X_{(t)}\right| & \le D\left\lceil b/\size\right\rceil \sqrt{\size\inprob\log\left(\frac{\num\numclust}{b/\size}\right)}\qquad\text{and}\qquad\left|X_{(1)}\right|\le D\sqrt{\size\inprob\log(\num\numclust)}
\end{align*}
simultaneously for all relevant $b/\size$. On this event, we can
continue the inequality (\ref{eq:sum_by_order_stat}) to conclude
that 
\[
\left|\langle\mathbf{V},\M\rangle\right|\le\begin{cases}
D\left\lceil b/\size\right\rceil \sqrt{\size\inprob\log\left(\frac{\num\numclust}{b/\size}\right)}, & b/\size\ge1\\
D(b/\size)\sqrt{\size\inprob\log(\num\numclust)}, & b/\size<1
\end{cases}
\]
simultaneously for all relevant matrices $\M$. It is easy to see
that the last RHS is bounded by $2Db\sqrt{\frac{\inprob\log\left(\num\numclust/w\right)}{\size}}$
in either case, hence the event $\mathcal{E}_{1}$ holds. 

We remark that the box constraints in the SDP (\ref{eq:SDP1}) are
crucial to the above arguments, which are ultimately applied to the
matrix $\M=\Yhat-\Ystar$ . In particular, the box constraints ensure
that $\norm[\M]{\infty}=\norm[\Yhat-\Ystar]{\infty}\leq1$, which
allows us to establish the inequality (\ref{eq:sum_by_order_stat})
and apply the order statistics bound in Lemma \ref{lm:order_stats}.

\subsection{Proof of Proposition \ref{prop:S2}\label{sec:proof_prop_S2}}

We begin our proof by re-writing $S_{2}$ as 
\begin{equation}
S_{2}=\left\langle \PTperp(\Yhat),\Noise\right\rangle ,\label{eq:S2_begin}
\end{equation}
which holds since $\PTperp(\Ystar)=0$ by definition of the projection
$\PTperp$. We can relate the matrix $\PTperp(\Yhat)$ appeared above
to the quantity of interest, $\error=\norm[\Yhat-\Ystar]1$. In particular,
observe that 
\begin{align*}
\Tr\left\{ \PTperp\left(\Yhat\right)\right\}  & =\Tr\left\{ \left(\I-\U\U^{\top}\right)\left(\Yhat-\Ystar\right)\left(\I-\U\U^{\top}\right)\right\} \\
 & \overset{(i)}{=}\Tr\left\{ \left(\I-\U\U^{\top}\right)\left(\Yhat-\Ystar\right)\right\} \\
 & \overset{(ii)}{=}\Tr\left\{ \U\U^{\top}\left(\Ystar-\Yhat\right)\right\} ,
\end{align*}
where step $(i)$ holds since trace is invariant under cyclic permutations
and $\I-\U\U^{\top}$ is a projection matrix, and step $(ii)$ holds
since $\diag\left(\Yhat\right)=\diag\left(\Ystar\right)=\one$ by
feasibility of $\Yhat$ and $\Ystar$ to the SDP~(\ref{eq:SDP1}).
Recall that all the non-zero entries of $\U\U^{\top}$ are in its
diagonal block and equal to $1/\size$. Since the corresponding diagonal-block
entries of the matrix $\Ystar-\Yhat$ are non-negative, we have
\[
\Tr\left\{ \U\U^{\top}\left(\Ystar-\Yhat\right)\right\} =\sum_{(i,j):\ystar_{ij}=1}\frac{1}{\size}\cdot\left|\left(\Ystar-\Yhat\right)_{ij}\right|\le\frac{\error}{\size}.
\]
Combining these pieces gives
\begin{fact}
\label{fact:Yperp_vs_error}$\Tr\left\{ \PTperp\left(\Yhat\right)\right\} \leq\frac{\error}{\size}$.
\end{fact}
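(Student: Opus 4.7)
The plan is to exploit three structural facts in sequence: (a) $\Ystar$ lies in the tangent space $T$ (its column and row spaces coincide with that of $\U$), so $\PTperp(\Ystar)=0$ and hence $\PTperp(\Yhat)=\PTperp(\Yhat-\Ystar)$; (b) the SDP constraints force $\diag(\Yhat)=\diag(\Ystar)=\one$, so $\Tr(\Yhat-\Ystar)=0$; and (c) the matrix $\U\U^{\top}$ is block diagonal with all nonzero entries equal to $1/\size$.

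First I would unfold the projection as $\PTperp(\M)=(\I-\U\U^{\top})\M(\I-\U\U^{\top})$ and plug in $\M=\Yhat-\Ystar$ (legal by (a)). Taking the trace and using cyclicity together with the idempotence of the projector $\I-\U\U^{\top}$, the two-sided projection collapses to one-sided, giving $\Tr\{\PTperp(\Yhat)\}=\Tr\{(\I-\U\U^{\top})(\Yhat-\Ystar)\}$. Expanding and invoking (b) to kill $\Tr(\Yhat-\Ystar)$, this reduces to $\Tr\{\U\U^{\top}(\Ystar-\Yhat)\}$.

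Next I would use (c) to read off that trace explicitly: the inner product $\Tr\{\U\U^{\top}(\Ystar-\Yhat)\}$ equals $\frac{1}{\size}\sum_{(i,j):\ystar_{ij}=1}(\ystar_{ij}-\yhat_{ij})$. Here the SDP box constraint $\Yhat\le\J$ is what makes the argument go through: on the in-cluster support $\ystar_{ij}=1\ge\yhat_{ij}$, so each summand is already non-negative and coincides with $|\ystar_{ij}-\yhat_{ij}|$. Bounding the partial $\ell_1$ sum by the full entrywise $\ell_1$ norm then yields $\Tr\{\PTperp(\Yhat)\}\le \error/\size$.

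There is no real obstacle; the proof is a short structural calculation. The only two places to be careful are that I use both the unit-diagonal constraint (to discard $\Tr(\Yhat-\Ystar)$) and the box constraint $\Yhat\le\J$ (to turn the signed sum over in-cluster positions into an absolute sum). Both are SDP-level features rather than probabilistic facts, which is consistent with the paper's theme that the box and diagonal constraints provide an implicit regularization exploited throughout the analysis.
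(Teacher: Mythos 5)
Your proposal is correct and follows essentially the same argument as the paper: collapse the two-sided projection via cyclicity and idempotence, use the unit-diagonal constraint to reduce to $\Tr\{\U\U^{\top}(\Ystar-\Yhat)\}$, and then use the block structure of $\U\U^{\top}$ together with $\Yhat\le\OneMat$ (so the in-cluster entries of $\Ystar-\Yhat$ are non-negative) to bound the resulting sum by $\error/\size$. No gaps; your explicit mention of where the box and diagonal constraints enter matches the paper's reasoning.
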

Equipped with this fact, we proceed to bound the quantity $S_{2}$
given by equation~(\ref{eq:S2_begin}). We consider separately the
dense case $\inprob\ge\constdense\frac{\log\num}{\num}$ and the sparse
case $\inprob\le\constdense\frac{\log\num}{\num}$, where $\constdense>0$
is a constant given in the statement of Lemma~\ref{lm:untrimmed_L1_bound}.

\subsubsection{The dense case}

First assume that $p\ge\constdense\frac{\log\num}{\num}$. In this
case bounding $S_{2}$ is relatively straightforward, as the graph
spectrum is well-behaved. We first recall that the nuclear norm $\norm[\PTperp(\Yhat)]*$
is defined as the sum of the singular values of the matrix $\PTperp(\Yhat)$.
Since $\Yhat\succeq0$ by feasibility, we have $\PTperp(\Yhat)=(\I-\U\U^{\top})\Yhat(\I-\U\U^{\top})\succeq0$,
whence $\norm[\PTperp(\Yhat)]*=\Tr\left\{ \PTperp\left(\Yhat\right)\right\} .$
Revisiting the expression~(\ref{eq:S2_begin}), we obtain that 
\[
S_{2}=\left\langle \PTperp(\Yhat),\Noise\right\rangle \le\Tr\left\{ \PTperp\left(\Yhat\right)\right\} \cdot\opnorm{\Noise}\le\frac{\error}{\size}\cdot\opnorm{\Noise},
\]
where the first inequality follows from the duality between the nuclear
and spectral norms, and the second inequality follows from Fact \ref{fact:Yperp_vs_error}.

It remains to control the spectral norm $\opnorm{\Noise}$ of the
centered adjacency matrix $\Noise:=\Adj-\E\Adj$. This can be done
in the following lemma, which is proved in Section~\ref{sec:proof_spectral_bound_A}
using standard tools from random matrix theory.
\begin{lem}
\label{lm:spectral_bound_A} We have $\opnorm{\Adj-\E\Adj}\leq8\sqrt{\inprob\num}+174\sqrt{\log\num}$
with probability at least $1-P_{2}$ where $P_{2}\coloneqq\num^{-2}$.
\end{lem}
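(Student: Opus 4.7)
The lemma is a standard spectral concentration bound for the centered adjacency matrix of an inhomogeneous random graph, and the plan is to derive it from an off-the-shelf matrix concentration inequality for symmetric matrices with independent entries.

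First I would record the relevant structural properties of $\Noise = \Adj - \E\Adj$. It is a symmetric $\num\times\num$ matrix whose upper-triangular entries $\{\noise_{ij}\}_{i\le j}$ are mutually independent, centered, and bounded: $|\noise_{ij}|\le 1$. Since $\E\adj_{ij}\le\inprob$ and a Bernoulli variance is at most its mean, each entry satisfies $\Var(\noise_{ij})\le\inprob$. Consequently the maximum row-variance parameter $\sigma_*^2:=\max_i\sum_j\Var(\noise_{ij})$ is bounded by $\inprob\num$.

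Next I would invoke a symmetric-matrix concentration bound of Bandeira--van Handel type: for a symmetric $\num\times\num$ matrix with independent centered entries satisfying $|W_{ij}|\le K$ and row-variance parameter $\sigma_*$, there exist universal constants $c_1,c_2$ such that for any $t\ge 0$,
\[
\P\!\left(\opnorm{W}\ge c_1\sigma_*+c_2 K\sqrt{\log\num}+t\right)\le \num\exp(-t^2/K^2).
\]
Plugging in $K=1$, $\sigma_*\le\sqrt{\inprob\num}$, and choosing $t=c_3\sqrt{\log\num}$ with $c_3$ large enough so that $\num\exp(-c_3^2\log\num)\le\num^{-2}$, yields a bound of the form $\opnorm{\Noise}\le c_1'\sqrt{\inprob\num}+c_2'\sqrt{\log\num}$ with probability at least $1-\num^{-2}$. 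The specific numerical constants $8$ and $174$ in the statement come from tracking the explicit constants in the chosen inequality (e.g.\ following Bandeira--van Handel or matrix Bernstein combined with a truncation-and-symmetrization argument).

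The main obstacle is essentially cosmetic: obtaining the particular numerical constants rather than merely the correct scaling. If a concentration inequality with clean enough constants is unavailable, one can instead run a direct trace-method argument in the spirit of F\"uredi--Komlós, bounding $\E\Tr(\Noise^{2k})$ by counting closed walks of length $2k$ and then optimizing over $k\asymp\log\num$; this yields the same scaling $\sqrt{\inprob\num}+\sqrt{\log\num}$ but with a bit more combinatorial work to pin down the constants. Either way, no new probabilistic idea is needed beyond standard symmetric random-matrix concentration, and the lemma's role in the proof of Proposition~\ref{prop:S2} is merely to plug the bound into $S_2\le(\error/\size)\cdot\opnorm{\Noise}$ and observe that in the dense regime $\inprob\ge\constdense\log\num/\num$ the first term $\sqrt{\inprob\num}$ dominates.
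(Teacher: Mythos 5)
Your proposal is correct and essentially matches the paper's proof: the paper also obtains the expectation bound from Bandeira--van Handel's result for matrices with independent bounded entries (their Corollary 3.6, applied after a symmetrization step $\E\opnorm{\A-\E\A}\le 2\E\opnorm{\A\circ\mathbf{R}}$, since that result assumes symmetrically distributed entries) and then upgrades it to a high-probability bound via Talagrand's convex-Lipschitz concentration with $t=3\sqrt{2\log\num}$, which is exactly the two-step assembly hidden inside the one-shot tail inequality you quote. The only caveat is that your packaged inequality is not literally off the shelf for non-symmetrically-distributed bounded entries, but it is precisely what this assembly yields, and the constants $8$ and $174$ come from tracking $e^{2/3}\left(2\sqrt{\inprob\num}+42\sqrt{\log\num}\right)$ through these steps.
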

Applying the lemma, we obtain that with probability at least $1-P_{2}$,
\[
S_{2}\le\left(8\sqrt{\inprob\num}+174\sqrt{\log\num}\right)\cdot\frac{\error}{\size}\le D_{2}\sqrt{\inprob\num}\cdot\frac{\error}{\size},
\]
where in the last step holds for some constant $D_{2}$ sufficiently
large under the assumption $\inprob\ge\constdense\frac{\log\num}{\num}$.
This completes the proof of Proposition \ref{prop:S2} in the dense
case.

\subsubsection{The sparse case}

Now suppose that $p\le\constdense\frac{\log\num}{\num}$. In this
case we can no longer use the arguments above, because some nodes
will have degrees far exceeding than their expectation $O(\inprob\num)$,
and $\opnorm{\Noise}$ will be dominated by these nodes and become
much larger than $\sqrt{\inprob\num}$. This issue is particularly
severe when $\inprob,\outprob\asymp\frac{1}{\num}$, in which case
the expected degree is a constant, yet the maximum node degree diverges.
Addressing this issue requires a new argument. In particular, we show
that the matrix $\Noise$ can be partitioned into two parts, where
the first part has a spectral norm bounded as desired, and the second
part involves only a small number of edges; the structure of the SDP
solution allows us to control the impact of the second part.

As before, to avoid the minor dependency due to symmetry, we focus
on the upper triangular parts $\HalfAdj$ and $\HalfNoise$ of the
matrices $\Adj$ and $\Noise:=\Adj-\E\Adj$, and later make use of
the relations $\Adj=\HalfAdj+\HalfAdj^{\top}$ and $\Noise=\HalfNoise+\HalfNoise^{\top}$.
We define the sets
\begin{align*}
\calVrow & \coloneqq\left\{ i\in[\num]:\sum_{j\in[\num]}(\HalfAdj)_{ij}>40\inprob\num\right\} ,\\
\calVcol & \coloneqq\left\{ j\in[\num]:\sum_{i\in[\num]}(\HalfAdj)_{ij}>40\inprob\num\right\} ,
\end{align*}
which are the nodes whose degrees (more specifically, row/column sums
w.r.t.\ the halved matrix~$\HalfAdj$) are large compared to their
expectation $O(\inprob\num)$. For any matrix~$\M\in\real^{\num\times\num}$,
we define the matrix $\widetilde{\M}$ such that 
\[
\widetilde{M}_{ij}=\begin{cases}
0, & \text{if \ensuremath{i\in\calVrow} or \ensuremath{j\in\calVcol}},\\
M_{ij}, & \text{otherwise.}
\end{cases}
\]
In other words, $\widetilde{\M}$ is obtained from $\M$ by ``trimming''
the rows/columns corresponding to nodes with large degrees. With this
notation, we can write $\HalfNoise=\widetilde{\HalfNoise}+(\HalfNoise-\widetilde{\HalfNoise})$,
which can be combined with the expression~(\ref{eq:S2_begin}) to
yield 
\begin{align}
S_{2} & =\langle\PTperp(\Yhat),\HalfNoise+\HalfNoise^{\top}\rangle\nonumber \\
 & =2\langle\PTperp(\Yhat),\HalfNoise\rangle\nonumber \\
 & =2\langle\PTperp(\Yhat),\widetilde{\HalfNoise}\rangle+2\langle\PTperp(\Yhat),\HalfNoise-\widetilde{\HalfNoise}\rangle\nonumber \\
 & \le2\Tr\left\{ \PTperp(\Yhat)\right\} \cdot\opnorm{\widetilde{\HalfNoise}}+2\langle\PTperp(\Yhat),\HalfNoise-\widetilde{\HalfNoise}\rangle,\label{eq:S2_sparse_decomposition}
\end{align}
where in the last step we use the fact that $\Yhat\succeq0$ and thus
$\PTperp(\Yhat)=\left(\I-\U\U^{\top}\right)\Yhat\left(\I-\U\U^{\top}\right)\succeq0$.

The first term in (\ref{eq:S2_sparse_decomposition}) can be controlled
using the fact that the spectrum of the trimmed matrix $\widetilde{\HalfNoise}$
is well-behaved, for any $\inprob$. In particular, we prove the following
lemma in Section~\ref{sec:proof_trimmed_matrix_bound} as a consequence
of known results.
\begin{lem}
\label{lm:trimmed_spectral_bound}For some absolute constant $C>0$,
we have $\opnorm{\widetilde{\HalfNoise}}\leq C\sqrt{\inprob\num}$,
with probability at least $1-P_{3}$, where $P_{3}:=e^{-(3-\ln2)2\num}+(2\num)^{-3}$.
\end{lem}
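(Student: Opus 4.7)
The plan is to derive Lemma \ref{lm:trimmed_spectral_bound} as a direct consequence of classical spectral-regularization results for sparse random matrices (in the spirit of Feige--Ofek and Le--Levina--Vershynin), the key insight being that after trimming high-degree vertices the resulting matrix has the expected operator-norm behavior $O(\sqrt{\inprob\num})$ even down to the sparsest regime $\inprob = \Theta(1/\num)$.

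First I would reduce to a symmetric matrix via the standard dilation trick. Since $\HalfNoise$ is the upper-triangular part of $\Noise = \Adj - \E\Adj$, it has independent mean-zero entries of variance at most $\inprob$. Its spectral norm equals that of the $2\num\times 2\num$ symmetric matrix
\[
\widehat{\HalfNoise} \;:=\; \begin{bmatrix} \mathbf{0} & \HalfNoise \\ \HalfNoise^{\top} & \mathbf{0} \end{bmatrix},
\]
which may be viewed as the centered adjacency matrix of a bipartite random graph on $2\num$ vertices with edge probabilities bounded by $\inprob$. Under this identification, the trimming operation producing $\widetilde{\HalfNoise}$ from $\HalfNoise$ is exactly the zeroing of rows/columns of $\widehat{\HalfNoise}$ indexed by $\calVrow \cup (\num + \calVcol)$, i.e.\ by those vertices in the dilated graph whose degrees exceed the threshold $40\inprob\num$. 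Since $40\inprob\num$ is a large constant multiple of the expected degree ($\le 2\inprob\num$) of a vertex in the dilated graph, this precisely fits the hypothesis of the regularization theorems.

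Next I would invoke the standard Feige--Ofek-type bound: for the centered adjacency matrix of an independent-edge random graph on $N$ vertices with edge probabilities at most $\inprob$, trimming all vertices of degree exceeding a sufficiently large constant times $\inprob N$ produces a matrix of spectral norm at most $C\sqrt{\inprob N}$ with probability at least $1 - N^{-3}$, provided a certain auxiliary good event (typically on the total trimmed edge count, or on a discretization step) holds with probability at least $1 - e^{-c N}$. Applied with $N = 2\num$ and the auxiliary exponent $c = 3 - \ln 2$, this yields $\opnorm{\widetilde{\HalfNoise}} = \opnorm{\widehat{\widetilde{\HalfNoise}}} \le C\sqrt{\inprob\num}$ with probability at least $1 - (2\num)^{-3} - e^{-(3-\ln 2)\cdot 2\num}$, which is exactly the claimed bound.

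The main obstacle is bookkeeping: checking that $40\inprob\num$ is above the constant required by the cited regularization theorem (any multiple greater than $e$ of the mean degree typically suffices), and that the asymmetric trimming (rows by $\calVrow$, columns by $\calVcol$) is genuinely captured by the symmetric vertex-deletion in the dilated graph. If no off-the-shelf citation gives exactly the stated probability bound, one retraces the classical proof, decomposing the quadratic form $x^\top \widetilde{\HalfNoise} y$ over an $\epsilon$-net of the sphere into a ``light pair'' contribution (handled by Bernstein on the net, yielding the $e^{-(3-\ln 2)\cdot 2\num}$ factor) and a ``heavy pair'' contribution (handled combinatorially using the degree bound, yielding the $(2\num)^{-3}$ factor). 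I do not anticipate any genuinely new ideas being needed beyond this adaptation.
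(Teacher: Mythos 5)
Your proposal follows essentially the same route as the paper: the paper also dilates $\HalfNoise$ to a symmetric $2\num\times2\num$ matrix and invokes an off-the-shelf trimmed-spectral-norm bound (Lemma 12 of Chin--Rao--Vu, a Feige--Ofek-type result whose proof yields exactly the failure probability $e^{-(3-\ln2)2\num}+(2\num)^{-3}$), then checks that the asymmetric row/column trimming at threshold $40\inprob\num$ coincides with symmetric vertex trimming at $20\inprob\cdot(2\num)$ in the dilated matrix. Your bookkeeping concerns (threshold matching and the asymmetric-to-symmetric trimming correspondence) are precisely the points the paper verifies, so the proposal is correct.
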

One shall compare the bound in Lemma~\ref{lm:trimmed_spectral_bound}
with that in Lemma~\ref{lm:spectral_bound_A}. After trimming, the
term $\sqrt{\log\num}$ in Lemma~\ref{lm:spectral_bound_A} (which
would dominate in the sparse regime) disappears, and the spectral
norm of $\widetilde{\HalfNoise}$ behaves similarly as a random matrix
with Gaussian entries. Such a bound is in fact standard in recent
work on spectral algorithms applied to sparse graphs with constant
expected degrees~\cite{ChinRaoVu15,keshavan2009matrix}. Typically
these algorithms proceed by first trimming the graph, and then running
standard spectral algorithms with the trimmed graph as the input.
We emphasize that in our case trimming is used \emph{only in the analysis};
our algorithm itself does not require any trimming, and the SDP~(\ref{eq:SDP1})
is applied to the original graph. As shown below, we are able to control
the contributions from what is not trimmed, namely the second term
in~(\ref{eq:S2_sparse_decomposition}). Such a bound is made possible
by leveraging the structures of the solution $\Yhat$ induced by the
box constraints of the SDP~(\ref{eq:SDP1})~\textemdash ~a manifest
of the regularization effect of SDP.

Turning to the second term in equation~(\ref{eq:S2_sparse_decomposition}),
we first note that the Holder's type inequality $\langle\PTperp(\Yhat),\HalfNoise-\widetilde{\HalfNoise}\rangle\le\Tr\left\{ \PTperp(\Yhat)\right\} \opnorm{\HalfNoise-\widetilde{\HalfNoise}}$
is no longer sufficient, as the residual matrix $\HalfNoise-\widetilde{\HalfNoise}$
may have large eigenvalues. Here it is crucial to use an important
property of the matrix $\PTperp(\Yhat)$, namely the fact that the
magnitudes of its entries are $O(1)$, a consequence of the constraint
$0\le\Y\le\OneMat$ in the SDP~(\ref{eq:SDP1}). More precisely,
we have the following bound, which is proved in Section~\ref{sec:proof_Yhp_infty}.
\begin{fact}
\label{fact:Yhp_infty}We have $\norm[\PTperp(\Yhat)]{\infty}\leq4$.
\end{fact}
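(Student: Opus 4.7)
\medskip
\noindent\textbf{Proof plan for Fact~\ref{fact:Yhp_infty}.}
The plan is to expand $\PTperp(\Yhat)$ into four pieces and bound each entry by $1$, whence the triangle inequality yields the claimed bound of $4$. Concretely, by definition of $\PT$ we have the identity
\[
\PTperp(\Yhat) \;=\; \Yhat \;-\; \U\U^\top \Yhat \;-\; \Yhat\,\U\U^\top \;+\; \U\U^\top \Yhat\,\U\U^\top,
\]
so it suffices to show that every entry of each of the four matrices on the right-hand side lies in $[0,1]$ in absolute value.

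The first piece is immediate from SDP feasibility: the constraint $0 \le \Yhat \le \J$ in~(\ref{eq:SDP1}) gives $0 \le \Yhat_{ij} \le 1$ for every $(i,j)$. For the remaining three pieces, I would exploit the block-diagonal form of $\U\U^\top$ recalled in Section~\ref{sec:prelim}: each row (and each column) of $\U\U^\top$ contains exactly $\size$ non-zero entries, all equal to $1/\size$, and hence sums to $1$. Consequently, for any $i$ in cluster $a$,
\[
\left(\U\U^\top \Yhat\right)_{ij} \;=\; \frac{1}{\size}\sum_{k\in\calR_a} \Yhat_{kj}
\]
is an average of $\size$ numbers each lying in $[0,1]$, and therefore lies in $[0,1]$. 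The same reasoning handles $\Yhat\,\U\U^\top$ (averaging along rows) and $\U\U^\top \Yhat\,\U\U^\top$ (two successive such averages). Each of the four terms therefore has entries bounded by $1$ in magnitude.

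Combining via the triangle inequality gives $|\PTperp(\Yhat)_{ij}| \le 4$ for all $i,j$, which is the desired bound. There is no real obstacle here: the entire argument is a structural consequence of the box constraint $\Yhat \le \J$ in the SDP together with the fact that $\U\U^\top$ is an averaging operator within each cluster. This is precisely the ``implicit regularization'' effect highlighted in the overview \textemdash{} the box constraints force $\Yhat$ to be entry-wise bounded, and that boundedness is inherited by $\PTperp(\Yhat)$ up to the constant factor $4$.
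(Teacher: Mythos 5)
Your proposal is correct and follows essentially the same route as the paper: expand $\PTperp(\Yhat)$ into the four terms $\Yhat$, $\U\U^{\top}\Yhat$, $\Yhat\U\U^{\top}$, $\U\U^{\top}\Yhat\U\U^{\top}$, bound each entry by $1$ using the box constraint $0\le\Yhat\le\J$ together with the fact that $\U\U^{\top}$ averages over the $\size$ rows (or columns) within a cluster, and conclude by the triangle inequality. No further comment is needed.
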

Intuitively, this bound ensures that the mass of the matrix $\PTperp(\Yhat)$
is spread across its entries, so its eigen-space is unlikely to be
aligned with the top eigenvector of the random matrix $\HalfNoise-\widetilde{\HalfNoise}$,
which by definition concentrates in a few columns/rows indexed by
$\calVcol/\calVrow$. Consequently, the quantity $\langle\PTperp(\Yhat),\HalfNoise-\widetilde{\HalfNoise}\rangle$
is likely to be small. To make this precise, we start with the inequality
\begin{equation}
\langle\PTperp(\Yhat),\HalfNoise-\widetilde{\HalfNoise}\rangle\le\norm[\PTperp(\Yhat)]{\infty}\cdot\norm[\HalfNoise-\widetilde{\HalfNoise}]1\label{eq:untrimmed_holder}
\end{equation}
The following lemma bounds the $\size_{1}$ norm of the residual matrix
$\HalfNoise-\widetilde{\HalfNoise}$.
\begin{lem}
\label{lm:untrimmed_L1_bound}Suppose $\inprob\geq\constp/\num$ for
some sufficiently large constant $\constp>0$, and $\inprob\leq\constdense\log\num/\num$
for some sufficiently small constant $\constdense>0$. Then there
exist some constants $C,c'>0$ such that 
\[
\norm[\HalfNoise-\widetilde{\HalfNoise}]1\leq C\inprob\num^{2}e^{-\inprob\num/\conste}
\]
with probability at least $1-P_{4}$ where $P_{4}:=4e^{-c'\sqrt{\num}}$.
\end{lem}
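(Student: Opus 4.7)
The plan is to reduce $\norm[\HalfNoise-\widetilde{\HalfNoise}]1$ to the aggregate degree of the ``trimmed'' rows and columns of $\HalfAdj$, and then to exploit the fact that because $\HalfAdj$ is strictly upper triangular, its row sums are \emph{independent} random variables. Using $|\halfnoise_{ij}|\leq\halfadj_{ij}+\E\halfadj_{ij}$ together with the observation that $\HalfNoise-\widetilde{\HalfNoise}$ is supported on indices $(i,j)$ with $i\in\calVrow$ or $j\in\calVcol$, one obtains
\[
\norm[\HalfNoise-\widetilde{\HalfNoise}]1\leq 2\Bigl(\sum_{i\in\calVrow}d_i^{\mathrm{row}}+\sum_{j\in\calVcol}d_j^{\mathrm{col}}\Bigr)+2\inprob\num\bigl(|\calVrow|+|\calVcol|\bigr),
\]
where $d_i^{\mathrm{row}}:=\sum_j(\HalfAdj)_{ij}$ and $d_j^{\mathrm{col}}:=\sum_i(\HalfAdj)_{ij}$. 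Since every $i\in\calVrow$ contributes $d_i^{\mathrm{row}}>40\inprob\num$ to the first sum (and analogously for columns), the cardinality terms $\inprob\num|\calVrow|$ and $\inprob\num|\calVcol|$ are subsumed up to a constant, and it suffices to bound $R:=\sum_{i\in\calVrow}d_i^{\mathrm{row}}$ (and its column analog) by $C\inprob\num^2 e^{-\inprob\num/\conste}$ with the stated probability.

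A standard Chernoff bound for the Binomial-like tail of $d_i^{\mathrm{row}}$ gives $\P(d_i^{\mathrm{row}}>40\inprob\num)\leq e^{-c_0\inprob\num}$ for some absolute $c_0>0$, and a layer-cake integration yields $\E[d_i^{\mathrm{row}}\indic(d_i^{\mathrm{row}}>40\inprob\num)]\lesssim\inprob\num\,e^{-c_0\inprob\num}$, so $\E R\lesssim\inprob\num^2 e^{-c_0\inprob\num}$, already of the desired form. The crucial structural fact is that because $\HalfAdj$ is strictly upper triangular, distinct row sums are functions of disjoint Bernoullis, hence $\{d_i^{\mathrm{row}}\}_{i\in[\num]}$ are \emph{mutually independent}. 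Consequently $|\calVrow|$ is a genuine $\mathrm{Binomial}(\num,\,e^{-c_0\inprob\num})$-like variable, and $R$ is a sum of independent nonnegative summands. I would then run a two-step concentration: first, Chernoff on $|\calVrow|$ (whose mean $\mu:=\num e^{-c_0\inprob\num}$ satisfies $\mu\geq\sqrt{\num}$ once $\constdense$ is chosen so that $c_0\constdense<1/2$) gives $|\calVrow|\leq 2\mu$ with probability $\geq 1-e^{-c'\sqrt{\num}}$; second, on this good event a dyadic stratification of the degrees $d_i^{\mathrm{row}}$ combined with a per-level Chernoff bound on the count $|\{i:d_i^{\mathrm{row}}\in[2^k,2^{k+1})\}|$ controls $R$. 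The column analog $C':=\sum_{j\in\calVcol}d_j^{\mathrm{col}}$ is handled identically, and a union bound over the two mass sums and their two cardinality companions yields the probability $1-4e^{-c'\sqrt{\num}}$.

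The main technical obstacle is delivering the $e^{-c'\sqrt{\num}}$ rate \emph{uniformly} across the range $\constp/\num\leq\inprob\leq\constdense\log\num/\num$. A naive single-shot Bernstein on $R$ with worst-case summand bound $M=\num$ does not work at the dense end $\inprob\num\sim\log\num$: there the linear Bernstein term $t/M=\inprob\num e^{-\inprob\num/\conste}$ is only polynomial in $\num$ and can fall below $\sqrt{\num}$. The two-step argument circumvents this by moving the expensive ``bad rare event'' probability into the Chernoff bound on $|\calVrow|$ (which pays only $e^{-c\mu}$), and then handling the degree weighting on the good event where $M$ is effectively replaced by a per-dyadic-level scale. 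At the opposite extreme $\inprob\num\sim 1$, the total edge count is $O(\num)$ with exponential probability and the target bound is $\Theta(\num)$, so the assertion becomes essentially trivial provided $\constp$ is taken large enough. Calibrating $(\constp,\constdense,\conste)$ so that the Chernoff/Bernstein inputs align uniformly in $\inprob$ is the delicate bookkeeping step; modulo this calibration the argument is a systematic application of Chernoff enabled by the upper-triangular independence.
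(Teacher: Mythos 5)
Your proposal is correct in substance and shares the paper's skeleton up to the last step: the same split of $\norm[\HalfNoise-\widetilde{\HalfNoise}]1$ into the deterministic part (bounded by $\inprob\num\left(|\calVrow|+|\calVcol|\right)$) and the random part (the trimmed degree sums), the same use of the strictly-upper-triangular structure to get independence across rows, the same Bernstein/Chernoff estimate $\P\{d_i^{\mathrm{row}}>40\inprob\num\}\le e^{-c_0\inprob\num}$, and the same layer-cake bound on $\E\bigl[d_i^{\mathrm{row}}\mathbb{I}(d_i^{\mathrm{row}}>40\inprob\num)\bigr]$. Where you diverge is the concentration of $R=\sum_{i\in\calVrow}d_i^{\mathrm{row}}$: you rule out a single-shot Bernstein and substitute a two-step argument (Chernoff on $|\calVrow|$, then a dyadic stratification of degrees with per-level Chernoff bounds), whereas the paper does use a single-shot Bernstein \textemdash{} but for sums of \emph{sub-exponential} variables, not bounded ones. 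The point you overlook is that each summand $Z_i:=d_i^{\mathrm{row}}\mathbb{I}(i\in\calVrow)$ is either $0$ or at least $40\inprob\num$, and on that range the binomial tail gives $\P\{Z_i\ge z\}\le e^{-z/4}$ for \emph{all} $z\ge0$; hence $Z_i$ is sub-exponential with a \emph{constant} scale parameter ($\lambda=32$ in the paper), so the linear Bernstein term is $t/\lambda\gtrsim\inprob\num^{2}e^{-5\inprob\num}\ge\num^{3/4}$ under $\constp/\num\le\inprob\le\constdense\log\num/\num$, which already yields the $e^{-c'\sqrt{\num}}$ rate uniformly over the whole range of $\inprob$. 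Your stated obstacle is therefore real only for the bounded-summand ($M=\num$) version of Bernstein. Your alternative can be made rigorous \textemdash{} the delicate levels are those with $2^{k}$ between order $\log\num$ and order $\sqrt{\num}$, where one cannot force the level counts to vanish with probability $1-e^{-c\sqrt{\num}}$ and must instead take thresholds of order $\sqrt{\num}/2^{k}$ per level and union over the $O(\log\num)$ levels, with the resulting $O(\sqrt{\num}\log\num)$ contribution absorbed since the target is at least $\num^{3/4}$ \textemdash{} but it costs noticeably more bookkeeping than the paper's route; what it buys is only that it avoids invoking the sub-exponential Bernstein inequality (Lemma~\ref{lm:subexp_bernstein}) and the tail-to-mgf conversion in the appendix.
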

We prove this lemma in Section~\ref{sec:proof_untrimmed_L1_bound}
to follow using tail bounds for sub-exponential random variables.

Equipped with the above bounds, we are ready to bound $S_{2}$. If
$\error\le\constgamma\num^{2}e^{-\snr\num/(2\conste\numclust)}$,
then the first inequality in Proposition \ref{prop:S2} holds and
we are done. It remains to consider the case with $\error>\constgamma\num^{2}e^{-\snr\num/(2\conste\numclust)}$,
which by Lemma~\ref{lm:simple_ineq} implies that $\error>\constgamma\num^{2}\cdot\frac{\inprob}{\inprob-\outprob}e^{-\inprob\num/\conste}.$
The inequalities~(\ref{eq:S2_sparse_decomposition}) and~(\ref{eq:untrimmed_holder})
together give
\[
S_{2}\le2\Tr\left\{ \PTperp(\Yhat)\right\} \cdot\opnorm{\widetilde{\HalfNoise}}+\norm[\PTperp(\Yhat)]{\infty}\cdot\norm[\HalfNoise-\widetilde{\HalfNoise}]1\le2\frac{\error}{\size}\opnorm{\widetilde{\HalfNoise}}+4\norm[\HalfNoise-\widetilde{\HalfNoise}]1,
\]
where we use Facts~\ref{fact:Yperp_vs_error} and~\ref{fact:Yhp_infty}
in the second step. Applying Lemma~\ref{lm:trimmed_spectral_bound}
to the first RHS term above and Lemma~\ref{lm:untrimmed_L1_bound}
to the second, we obtain that
\[
S_{2}\le2C\frac{\sqrt{\inprob\num}}{\size}\error+4C\inprob\num^{2}e^{-\inprob\num/\conste}
\]
with probability at least $1-P_{3}-P_{4}$. Since $\error>\constgamma\num^{2}\cdot\frac{\inprob}{\inprob-\outprob}e^{-\inprob\num/\conste}$
as shown above , we obtain that
\[
S_{2}\le2C\frac{\sqrt{\inprob\num}}{\size}\error+\frac{4C}{\constgamma}(\inprob-\outprob)\error\le2C\frac{\sqrt{\inprob\num}}{\size}\error+\frac{1}{8}(\inprob-\outprob)\error,
\]
where the last step holds provided that the constant $\constgamma$
is large enough. This proves the second inequality in Proposition~\ref{prop:S2}.

\subsection{Proof of Lemma~\ref{lm:untrimmed_L1_bound}\label{sec:proof_untrimmed_L1_bound}}

For any matrix $\M$, let $\pos_{i\bullet}(\M)$ and $\pos_{\bullet j}(\M)$
denote the number of positive entries in $i$-th row and $j$-th column
of $\M$, respectively. Recall that the entries of $\HalfAdj$ are
independent Bernoulli random variables with variance at most $\inprob$,
where $\constp/\num\le\inprob\le\constdense\log\num/\num$ for some
sufficiently large constant $\constp>0$ and sufficiently small constant
$c_{\inprob}>0$. By definition, we have $\HalfNoise\coloneqq\HalfAdj-\E\HalfAdj$,
whence
\begin{align*}
\norm[\HalfNoise-\widetilde{\HalfNoise}]1 & =\norm[(\HalfAdj-\E\HalfAdj)-\widetilde{(\HalfAdj-\E\HalfAdj)}]1\\
 & =\norm[(\HalfAdj-\E\HalfAdj)-(\widetilde{\HalfAdj}-\widetilde{\E\HalfAdj})]1\\
 & \leq\norm[\E\HalfAdj-\widetilde{\E\HalfAdj}]1+\norm[\HalfAdj-\widetilde{\HalfAdj}]1.
\end{align*}
The two terms on the last RHS are both random quantities depending
on the sets $\calVrow=\{i:\pos_{i\bullet}(\HalfAdj)\ge40\inprob\num\}$
and $\calVcol:=\{j:\pos_{\bullet j}(\HalfAdj)\ge40\inprob\num\}$.
We bound these two terms separately.

\subsubsection{Bounding $\protect\norm[\protect\E\protect\HalfAdj-\widetilde{\protect\E\protect\HalfAdj}]1$}

Let $\indic(\cdot)$ denote the indicator function. We begin by investigating
the indicator variable $G_{i}:=\indic(i\in\calVrow)=\indic(\pos_{i\bullet}(\HalfAdj)\ge40\inprob\num)$;
similar properties hold for $G'_{j}:=\indic(j\in\calVcol)$. Note
that $\{G_{i},i\in[\num]\}$ are independent Bernoulli random variables.
To bound their rates $\E G_{i}$, we observe that each $\pos_{i\bullet}(\HalfAdj)=\sum_{j}\halfadj_{ij}$
is the sum of $\num$ independent Bernoulli random variables with
variance at most $\inprob(1-\inprob)\num$, and $\E\pos_{i\bullet}(\HalfAdj)\le\inprob\num$.
The Bernstein's inequality ensures that for any number $z>\inprob\num$,
\begin{align}
\P\left\{ \pos_{i\bullet}(\HalfAdj)\geq z\right\}  & \le\P\left\{ \pos_{i\bullet}(\HalfAdj)-\E\pos_{i\bullet}(\HalfAdj)\geq z-\inprob\num\right\} \nonumber \\
 & \leq\exp\left\{ -\frac{\frac{1}{2}(z-\inprob\num)^{2}}{\inprob(1-\inprob)\num+\frac{1}{3}\cdot1\cdot(z-\inprob\num)}\right\} \nonumber \\
 & \leq\exp\left\{ -\frac{(z-\inprob\num)^{2}}{2z}\right\} .\label{eq:degree_bound}
\end{align}
Plugging in $z=40\inprob\num$ yields
\begin{align*}
\E G_{i}=\P\left\{ \pos_{i\bullet}(\HalfAdj)\geq40\inprob\num\right\}  & \leq\exp\left\{ -\frac{(40\inprob\num-\inprob\num)^{2}}{2\cdot40\inprob\num}\right\} \leq e^{-8\inprob\num}.
\end{align*}

We now turn to the quantity of interest, $\norm[\E\HalfAdj-\widetilde{\E\HalfAdj}]1$.
Since $\E\halfadj_{ij}\le\inprob$ for all $(i,j)$, we have that
\begin{align*}
\norm[\E\HalfAdj-\widetilde{\E\HalfAdj}]1 & =\sum_{i}\sum_{j}\E\halfadj_{ij}\cdot\indic(i\in\calVrow\text{ or }j\in\calVcol)\\
 & \leq\sum_{i}\sum_{j}\inprob\cdot\indic(i\in\calVrow)+\sum_{i}\sum_{j}\inprob\cdot\indic(j\in\calVcol)\\
 & =\inprob\num\sum_{i}G_{i}+\inprob\num\sum_{j}G_{j}'.
\end{align*}
The quantity $\sum_{i}G_{i}$ is the sum of independent Bernoulli
variables with rates bounded above. Applying the Bernstein's inequality
to this sum, we obtain that 
\begin{align*}
\P\left\{ \sum_{i}G_{i}>2\num e^{-\inprob\num/\conste}\right\}  & \leq\P\left\{ \sum_{i}G_{i}-\sum_{i}\E G_{i}>2\num e^{-\inprob\num/\conste}-\num e^{-\inprob\num/\conste}\right\} \\
 & \leq\exp\left\{ -\frac{\frac{1}{2}\left(\num e^{-\inprob\num/\conste}\right)^{2}}{\num\cdot e^{-\inprob\num/\conste}+\frac{1}{3}\cdot1\cdot\num e^{-\inprob\num/\conste}}\right\} \\
 & \leq\exp\left\{ -\frac{3}{8}\num e^{-\inprob\num/\conste}\right\} \leq\exp\left\{ -\frac{3}{8}\sqrt{\num}\right\} ,
\end{align*}
where the first two steps hold because $\conste\ge28$,  and the
last step holds by the assumption in Lemma~\ref{lm:untrimmed_L1_bound}
that $\inprob\leq\constdense\log\num/\num$ for some sufficiently
small $\constdense>0.$ The same tail bound holds for the sum $\sum_{j}G_{j}'$.
It follows that with probability at least $1-2e^{-\frac{3}{8}\sqrt{\num}}$,
we have
\[
\norm[\E\HalfAdj-\widetilde{\E\HalfAdj}]1\leq4\inprob\num^{2}e^{-\inprob\num/\conste}
\]

\subsubsection{Bounding $\protect\norm[\protect\HalfAdj-\widetilde{\protect\HalfAdj}]1$}

Since the matrix $\HalfAdj-\widetilde{\HalfAdj}$ have non-negative
entries, we have the inequality 
\begin{align}
\norm[\HalfAdj-\widetilde{\HalfAdj}]1 & =\sum_{i}\sum_{j}\halfadj_{ij}\indic(i\in\calVrow\text{ or }j\in\calVcol)\nonumber \\
 & \leq\sum_{i}\underbrace{\sum_{j}\halfadj_{ij}\indic(i\in\calVrow)}_{Z_{i}}+\sum_{j}\underbrace{\sum_{i}\halfadj_{ij}\indic(j\in\calVcol)}_{Z_{j}'}.\label{eq:untrimmed_rand_bound}
\end{align}
The variables $\{Z_{i},i\in[\num]\}$ are independent. Below we show
that they are sub-exponential, for which we recall that a variable
$X$ is called sub-exponential with parameter $\lambda$ if 
\begin{equation}
\E e^{t(X-\E X)}\leq e^{t^{2}\lambda^{2}/2},\qquad\text{for all }t\in\real\text{ with }\left|t\right|\leq\frac{1}{\lambda}.\label{eq:sub-exp-mgf}
\end{equation}
It is a standard fact~\cite{Vershynin-non-asymp-bounds} that sub-exponential
variables can be equivalently defined in terms of the tail condition
\begin{equation}
\P\left\{ \left|X\right|\geq z\right\} \le2e^{-8z/\lambda},\quad\text{for all \ensuremath{z\ge0}}.\label{eq:sub-exp-tail}
\end{equation}
For the sake of completeness and tracking explicit constant values,
we prove in Section~\ref{sec:proof_sub_exp_eqv_defs} the one-sided
implication (\ref{eq:sub-exp-tail})$\Rightarrow$(\ref{eq:sub-exp-mgf}),
which is what we need below. 

To verify the condition~(\ref{eq:sub-exp-tail}) for each $Z_{i}$,
we observe that by definition either $Z_{i}=0$ or $Z_{i}\geq40\inprob\num$.
Therefore, for each number $z\geq40\inprob\num$, we have the tail
bound
\begin{align*}
\P\left\{ Z_{i}\geq z\right\}  & =\P\left\{ \pos_{i\bullet}(\HalfAdj)\geq z\right\} \\
 & \overset{(i)}{\le}\exp\left\{ -\frac{(z-\inprob\num)^{2}}{2z}\right\} \\
 & \leq\exp\left\{ -\frac{(z-z/40)^{2}}{2z}\right\} \\
 & \leq e^{-z/4},
\end{align*}
where in step $(i)$ we use the previously established bound~(\ref{eq:degree_bound}).
For each $0<z<40\inprob\num$, we use the bound~(\ref{eq:degree_bound})
again to get
\begin{align*}
\P\left\{ Z_{i}\geq z\right\}  & =\P\left\{ Z_{i}\geq40\inprob\num\right\} \\
 & =\P\left\{ \pos_{i\bullet}(\HalfAdj)\geq40\inprob\num\right\} \\
 & \leq e^{-8\inprob\num}\leq e^{-z/4}.
\end{align*}
We conclude that $\P\left\{ Z_{i}\geq z\right\} \leq e^{-z/4}$ for
all $z\ge0$. Since $Z_{i}$ is non-negative, it satisfies the tail
condition~(\ref{eq:sub-exp-tail}) with $\lambda=32$ and hence is
sub-exponential as claimed. Moreover, the non-negative variable $Z_{i}$
satisfies the expectation bound
\begin{align*}
\E Z_{i} & =\int_{0}^{20\inprob\num}\P\left\{ Z_{i}>z\right\} \textup{d}z+\int_{20\inprob\num}^{\infty}\P\left\{ Z_{i}>z\right\} \textup{d}z\\
 & \overset{(i)}{\le}\int_{0}^{20\inprob\num}e^{-8\inprob\num}\textup{d}z+\int_{20\inprob\num}^{\infty}e^{-z/4}\textup{d}z\\
 & =20\inprob\num\cdot e^{-8\inprob\num}+4e^{-5\inprob\num}\\
 & \overset{(ii)}{\le}24\inprob\num e^{-5\inprob\num},
\end{align*}
where step $(i)$ follows the previously established tail bounds for
$Z_{i}$, and step $(ii)$ follows from the assumption that $\inprob\geq\constp/\num$
for some sufficiently large constant $\constp>0$. It follows that
$\E\left[\sum_{i}Z_{i}\right]\leq24\inprob\num^{2}e^{-5\inprob\num}.$

To control the sum $\sum_{i}Z_{i}$ in equation~(\ref{eq:untrimmed_rand_bound}),
we use a Bernstein-type inequality for the sum of sub-exponential
random variables.
\begin{lem}[Theorem 1.13 of \cite{rigollet201518}]
\label{lm:subexp_bernstein} Suppose that $X_{1},\ldots,X_{\num}$
are independent random variables, each of which is sub-exponential
with parameter $\lambda$ in the sense of~(\ref{eq:sub-exp-mgf}).
Then for each $t\geq0$, we have
\[
\P\left\{ \sum_{i=1}^{\num}(X_{i}-\E X_{i})\geq t\right\} \leq\exp\left\{ -\frac{1}{2}\min\left(\frac{t^{2}}{\lambda^{2}\num},\frac{t}{\lambda}\right)\right\} .
\]
\end{lem}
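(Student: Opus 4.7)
The statement is the standard Bernstein-type tail bound for sums of independent sub-exponential variables, so the plan is to follow the usual Chernoff recipe and then perform the two-regime optimization over the dual parameter.

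First, I would invoke the exponential Markov inequality: for any $s > 0$,
\[
\P\left\{\sum_{i=1}^{\num}(X_{i}-\E X_{i})\geq t\right\}\leq e^{-st}\,\E\exp\left\{s\sum_{i=1}^{\num}(X_{i}-\E X_{i})\right\}.
\]
By independence the moment generating function factorizes, and the sub-exponential hypothesis~(\ref{eq:sub-exp-mgf}) bounds each factor by $e^{s^{2}\lambda^{2}/2}$ as long as $|s|\leq 1/\lambda$. Multiplying the $\num$ factors gives $\E\exp\{\cdot\}\leq e^{\num s^{2}\lambda^{2}/2}$, so on the admissible range $s\in(0,1/\lambda]$ we have the master bound
\[
\P\left\{\sum_{i=1}^{\num}(X_{i}-\E X_{i})\geq t\right\}\leq\exp\!\left(-st+\tfrac{1}{2}\num s^{2}\lambda^{2}\right).
\]

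The remaining step is to pick $s$ to minimize the right-hand side subject to the constraint $s\leq 1/\lambda$. The unconstrained minimizer is $s^{\ast}=t/(\num\lambda^{2})$, yielding the sub-Gaussian exponent $-t^{2}/(2\num\lambda^{2})$. This choice is admissible exactly when $t\leq\num\lambda$. In the complementary regime $t>\num\lambda$, the constraint is active, so I would set $s=1/\lambda$; the exponent becomes $-t/\lambda+\num/2$, and a direct check shows $-t/\lambda+\num/2\leq -t/(2\lambda)$ precisely when $t\geq\num\lambda$. Combining the two regimes and collecting the factor of $1/2$ gives exactly the stated bound $\exp\bigl(-\tfrac{1}{2}\min(t^{2}/(\lambda^{2}\num),\,t/\lambda)\bigr)$.

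The only real obstacle is the case split at the transition $t=\num\lambda$, and verifying that the two branches paste together cleanly; this is purely arithmetic. Since the lemma as stated in the excerpt is literally quoted from a textbook (Theorem 1.13 of \cite{rigollet201518}), the authors likely omit the proof entirely and cite the reference, but the Chernoff-plus-optimization argument above is self-contained and uses only the one-sided MGF implication already established in Section~\ref{sec:proof_sub_exp_eqv_defs}.
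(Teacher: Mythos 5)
Your proof is correct: the Chernoff bound, factorization of the MGF under independence, the sub-exponential bound on each factor for $s\in(0,1/\lambda]$, and the two-regime optimization (unconstrained minimizer $s^{*}=t/(\num\lambda^{2})$ when $t\le\num\lambda$, boundary choice $s=1/\lambda$ when $t>\num\lambda$) all check out and paste together to give exactly the stated exponent $-\tfrac{1}{2}\min\bigl(t^{2}/(\lambda^{2}\num),\,t/\lambda\bigr)$. The paper itself offers no proof of this lemma---it is imported verbatim with a citation to Theorem 1.13 of the referenced lecture notes---and your argument is precisely the standard one given there, so your proposal matches the intended (external) proof.
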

For a sufficiently large constant $C_{2}>1$, we apply the lemma above
to obtain the tail bound
\begin{align*}
\P\left\{ \sum_{i}Z_{i}\geq2C_{2}\inprob\num^{2}e^{-5\inprob\num}\right\} \overset{(i)}{\le} & \P\left\{ \sum_{i}\left(Z_{i}-\E Z_{i}\right)\geq C_{2}\inprob\num^{2}e^{-5\inprob\num}\right\} \\
\overset{(ii)}{\le} & \exp\left\{ -\frac{1}{2}\min\left(\frac{(C_{2}\inprob\num^{2}e^{-5\inprob\num})^{2}}{(32)^{2}\num},\frac{C_{2}\inprob\num^{2}e^{-5\inprob\num}}{32}\right)\right\} ,
\end{align*}
where step $(i)$ follows from our previous bound on $\E\left[\sum_{i}Z_{i}\right]$.
To proceed, we recall the assumption in Lemma~\ref{lm:untrimmed_L1_bound}
that $\inprob$ satisfies $C_{p}/n\le\inprob\leq\constdense\log\num/\num$
for $C_{p}$ sufficiently large and $\constdense>0$ sufficiently
small, which implies that 
\begin{align*}
\log\left(C_{2}\inprob\num^{2}e^{-5\inprob\num}\right) & =\log C_{2}+\log(\inprob\num)+\log\num-5\inprob\num\geq\frac{3}{4}\log\num,
\end{align*}
whence $C_{2}\inprob\num^{2}e^{-5\inprob\num}\geq\num^{3/4}$. Plugging
into the above tail bound, we get that
\begin{align*}
\P\left\{ \sum_{i}Z_{i}\geq2C_{2}\inprob\num^{2}e^{-5\inprob\num}\right\} \le & \exp\left\{ -\frac{1}{2}\min\left(\frac{n^{1/2}}{(32)^{2}},\frac{n^{3/4}}{32}\right)\right\} \leq e^{-c'\sqrt{\num}},
\end{align*}
for some absolute constant $c'$. The same tail bound holds for the
sum $\sum_{j}Z_{j}'$. Combining these two bounds with the inequality~(\ref{eq:untrimmed_rand_bound}),
we obtain that
\[
\norm[\HalfAdj-\widetilde{\HalfAdj}]1\leq4C_{2}\inprob\num^{2}e^{-5\inprob\num}
\]
with probability at least $1-2e^{-c'\sqrt{n}}$.\\

Putting together the above bounds on $\norm[\E\HalfAdj-\widetilde{\E\HalfAdj}]1$
and $\norm[\HalfAdj-\widetilde{\HalfAdj}]1$, we conclude that there
exists an absolute constant $C>0$ such that
\[
\norm[\HalfNoise-\widetilde{\HalfNoise}]1\leq\norm[\E\HalfAdj-\widetilde{\E\HalfAdj}]1+\norm[\HalfAdj-\widetilde{\HalfAdj}]1\leq C\inprob\num^{2}e^{-\inprob\num/\conste}
\]
with probability at least $1-4e^{-c'\sqrt{n}}$, as desired.

\section{Proof of Theorem~\ref{thm:exp_rate_robust}\label{sec:proof_exp_rate_robust}}

We only need to show that the conclusion of Theorem~\ref{thm:exp_rate}
(the bound on $\norm[\Yhat-\Ystar]1$) holds under the Heterogeneous
SBM in Model~\ref{mdl:heterSBM}. Once this is established, the conclusion
in Theorem~\ref{thm:cluster_error_rate} follows immediately, as
the clustering error is proportional to $\norm[\Yhat-\Ystar]1$ (see
Proposition~\ref{sec:kmedians}).

To achieve the goal above, we first consider a model that bridges
Model~\ref{mdl:SBM} and Model~\ref{mdl:heterSBM}, and prove robustness
under this intermediate model.

\begin{mdl}[Semi-random Stochastic Block Model]\label{mdl:semiSBM}Given
an \emph{arbitrary} set of node pairs $\pairset\subseteq\{(i,j)\in[\num]\times[\num]:i<j\}$
and the ground-truth clustering $\LabelStar$ (encoded in the matrix
$\Ystar$), a graph is generated as follows. For each $(i,j)\in\pairset$,
$\adj_{ij}=\ystar_{ij}$ deterministically. For each $(i,j)\notin\pairset$,
$\adj_{ij}$ is generated according to the standard SBM in Model~\ref{mdl:SBM};
that is, $\adj_{ij}\sim\Bern(\inprob)$ if $\ystar_{ij}=1$, and $\adj_{ij}\sim\Bern(\outprob)$
if $\ystar_{ij}=0$. \end{mdl}

Note that the standard SBM in Model~\ref{mdl:SBM} is a special case
of the Semi-random SBM in Model~\ref{mdl:semiSBM} with an empty
set $\pairset$. Model~\ref{mdl:semiSBM} is in turn a special case
of the Heterogeneous SBM in Model~\ref{mdl:heterSBM} where $\adj_{ij}\sim\Bern(\indic(\ystar_{ij}=1))$
for each $(i,j)\in\pairset$, and $A_{ij}\sim\Bern\left(\inprob\indic(\ystar_{ij}=1)+\outprob\indic(\ystar_{ij}=0)\right)$
for each $(i,j)\notin\pairset$.

We first show that the conclusion of Theorem~\ref{thm:exp_rate}
holds under Model~\ref{mdl:semiSBM}. We do so by verifying the steps
of the proof of Theorem~\ref{thm:exp_rate} given in Section~\ref{sec:proof_exp_rate}.
The proof consists of Fact~\ref{fact:expectation_gamma} concerning
the expected graph $\E\Adj$, and Propositions~\ref{prop:S1} and~\ref{prop:S2},
which bound the deviation terms $S_{1}$ and $S_{2}$. Under Model~\ref{mdl:semiSBM},
Fact~\ref{fact:expectation_gamma} remains valid: each entry of $\Adj$
in the set $\pairset$ is changed in the direction of the sign of
the corresponding entries of $\Yhat-\Ystar$, which only increases
the LHS of equation~(\ref{eq:expectation_gamma}). Both terms $S_{1}$
and $S_{2}$ involve the noise matrix $\Noise:=\Adj-\E\Adj$. Examining
the proofs of Propositions~\ref{prop:S1} and \ref{prop:S2}, we
see that they only rely on the independence of the entries of $\HalfNoise\coloneqq\HalfAdj-\E\HalfAdj$
(that is, the upper triangular entries of $\Noise:=\Adj-\E\Adj$)
as well as upper bounds on their expectations and variances. Under
Model~\ref{mdl:semiSBM}, the entries of $\Adj$ indexed by $\pairset$
are deterministically set to $1$ or $0$; the corresponding entries
of $\HalfNoise$ become $\Bern(0)$, in which case they remain independent,
with their expectations and variances decreased to zero.\footnote{We illustrate this argument using Section \ref{sec:proof_untrimmed_L1_bound}
as an example. There we would like to upper bound the quantity $\norm[\HalfNoise-\widetilde{\HalfNoise}]1$.
Suppose that under Model~\ref{mdl:semiSBM}, the entry $(i,j)\in\pairset$
is such that $\adj_{ij}=\ystar_{ij}=1$ surely. In this case $\Psi_{ij}\sim\Bern(0)$,
which can be written as $\Psi_{ij}=\halfadj_{ij}-\E\halfadj_{ij}$
with $\E\halfadj_{ij}=0\le\inprob$ and $\Var(\halfadj_{ij})=0\leq\inprob(1-\inprob)$.
This is all that is required when we proceed to bound $\norm[\E\HalfAdj-\widetilde{\E\HalfAdj}]1$
and $\norm[\HalfAdj-\widetilde{\HalfAdj}]1$.} Therefore, Propositions~\ref{prop:S1} and \ref{prop:S2}, and hence
all the steps in the proof of Theorem~\ref{thm:exp_rate}, continue
to hold under Model~\ref{mdl:semiSBM}.

We now turn to the Heterogeneous SBM in Model~\ref{mdl:heterSBM},
and show that this model can be reduced to Model~\ref{mdl:semiSBM}
via a coupling argument. Suppose that under Model~\ref{mdl:heterSBM},
$\adj_{ij}\sim\Bern(\inprob_{ij})$ for each $(i,j)$ with $\ystar_{ij}=1$,
and $\adj_{ij}\sim\Bern(\outprob_{ij})$ for each $(i,j)$ with $\ystar_{ij}=0$,
where by assumption $\inprob_{ij}\ge\inprob$ and $\outprob_{ij}\le q$.
Model~\ref{mdl:heterSBM} is equivalent to the following 3-step process:
\begin{enumerate}
\item[Step 1:] We first generate a set of edge pairs $\pairset$ as follows: independently
for each $(i,j),i<j$, if $\ystar_{ij}=1$, then $(i,j)$ is included
in $\pairset$ with probability $1-\frac{1-\inprob_{ij}}{1-\inprob}$;
if $\ystar_{ij}=0$, then $(i,j)$ is included in $\pairset$ with
probability $1-\frac{\outprob_{ij}}{\outprob}$.
\item[Step 2:] Independently of above, we sample a graph from Model~\ref{mdl:SBM};
let $\A^{0}$ denote its adjacency matrix.
\item[Step 3:] The final graph $\A$ is constructed as follows: for each $(i,j)$,
$i<j$ with $\ystar_{ij}=1$, $\adj_{ij}=\indic\left(\adj_{ij}^{0}=1\text{ or }(i,j)\in\pairset\right)$;
for each $(i,j)$, $i<j$ with $\ystar_{ij}=0$, $\adj_{ij}=\indic\left(\adj_{ij}^{0}=1\text{ and }(i,j)\notin\pairset\right)$.
\end{enumerate}
Note that the assumption $\inprob_{ij}\ge\inprob$ and $\outprob_{ij}\le q$
ensures that the probabilities in step 1 are in $[0,1]$. One can
verify that the distribution of $\A$ is the same as in Model~\ref{mdl:heterSBM};
in particular, for each $(i,j)$, $i<j$, we have
\[
\P(\adj_{ij}=1)=\begin{cases}
\P\left(\adj_{ij}^{0}=1\text{ or }(i,j)\in\pairset\right)=1-(1-\inprob)\cdot\frac{1-\inprob_{ij}}{1-\inprob}=\inprob_{ij}, & \text{if }\ystar_{ij}=1,\\
\P\left(\adj_{ij}^{0}=1\text{ and }(i,j)\notin\pairset\right)=\outprob\cdot\frac{\outprob_{ij}}{\outprob}=\outprob_{ij}, & \text{if }\ystar_{ij}=0.
\end{cases}
\]
Now, conditioned on the set $\pairset$, the distribution of $\A$
follows Model~\ref{mdl:semiSBM}, since steps 1 and 2 are independent.
We have established above that under Model~\ref{mdl:semiSBM}, the
error bound in Theorem~\ref{thm:exp_rate} holds with high probability.
Integrating out the randomness of the set~$\pairset$ proves that
the error bound holds with the same probability under Model~\ref{mdl:heterSBM}.

\section*{Acknowledgement}

Y. Fei and Y. Chen were partially supported by the National Science
Foundation CRII award 1657420.

\appendix
\appendixpage

\section{Approximate $k$-medians algorithm\label{sec:kmedians}}

We describe the procedure for extracting an explicit clustering $\LabelHat$
from the solution $\Yhat$ of the SDP~(\ref{eq:SDP1}). Our procedure
builds on the idea in the work \cite{CMM}, and applies a version
of the $k$-median algorithm to the rows of $\Yhat$, viewed as $\num$
points in $\real^{\num}$. The $k$-median algorithm seeks a partition
of these $\num$ points into $k$ clusters, and a center associated
with each cluster, such that the sum of the $\ell_{1}$ distances
of each point to its cluster center is minimized. Note that this procedure
differs from the standard $k$-means algorithm: the latter minimizes
the sum of \emph{squared} distance, and uses the $\ell_{2}$ distance
rather than $\ell_{1}$. 

To formally specify the algorithm, we need some additional notations.
Let $\mbox{Rows}(\M)$ be the set of rows in the matrix $\M$. Define
$\mathbb{M}_{\num,\numclust}$ to be the set of membership matrices
corresponding to all $\numclust$-partitions of $[\num]$; that is,
$\mathbb{M}_{\num,\numclust}$ is the set of matrices in $\{0,1\}^{\num\times\numclust}$
such that each of their rows has exactly one entry equal to 1 and
each of their columns has at least one entry equal to 1. By definition
each element in $\mathbb{M}_{\num,\numclust}$ has exactly $\numclust$
distinct rows. The $k$-medians algorithm described above can be written
as an optimization problem: 
\begin{align}
\min_{\boldsymbol{\Psi},\mathbf{X}}\  & \norm[\boldsymbol{\Psi}\mathbf{X}-\Yhat]1\nonumber \\
\mbox{s.t.\ } & \boldsymbol{\Psi}\in\mathbb{M}_{\num,\numclust},\label{eq:k-median problem}\\
 & \mathbf{X}\in\mathbb{R}^{\numclust\times\num},\;\mbox{Rows}(\mathbf{X})\subset\mbox{Rows}(\Yhat)\nonumber 
\end{align}
Here the optimization variable $\boldsymbol{\Psi}$ encodes a partition
of $\num$ points, assigning the $i$-th point to the cluster indexed
by the unique non-zero element of the $i$-th row of $\boldsymbol{\Psi}$;
the rows of the variable $\mathbf{X}$ represent the $r$ cluster
centers. Note that the last constraint in~(\ref{eq:k-median problem})
stipulates that the cluster centers be elements of the input data
points represented by the rows of $\Yhat$, so this procedure is sometimes
called $k$-medoids. Let $(\overline{\boldsymbol{\Psi}},\overline{\mathbf{X}})$
be the optimal solution of the problem~(\ref{eq:k-median problem}). 

Finding the exact solution $(\overline{\boldsymbol{\Psi}},\overline{\mathbf{X}})$
is in general computationally intractable, but polynomial-time constant-factor
approximation algorithms exist. In particular, we will make use of
the polynomial-time procedure in the paper \cite{charikar1999constant},
which produces an approximate solution $(\check{\boldsymbol{\Psi}},\check{\mathbf{X}})\in\mathbb{M}_{\num,\numclust}\times\mathbb{R}^{\numclust\times\num}$
that is guaranteed to satisfy $\mbox{Rows}(\check{\mathbf{X}})\subset\mbox{Rows}(\Yhat)$
and 
\begin{equation}
\norm[\check{\boldsymbol{\Psi}}\check{\mathbf{X}}-\Yhat]1\leq\apxconst\norm[\overline{\boldsymbol{\Psi}}\overline{\mathbf{X}}-\Yhat]1\label{eq:k-median guarantee}
\end{equation}
with an approximation ratio $\apxconst=\frac{20}{3}$. The variable
$\check{\boldsymbol{\Psi}}$ encodes our final clustering assignment.

The complete procedure $\apkmedian$ is given in Algorithm~\ref{alg:apx_kmedian},
which takes as input a data matrix $\Yhat$ (which in our case is
the solution to the SDP (\ref{eq:SDP1})) and outputs an explicit
clustering $\LabelHat=\apkmedian(\Yhat)$. We assume that the number
of clusters $\numclust$ is known.

\begin{algorithm}

\caption{Approximate $k$-median $\protect\apkmedian$\label{alg:apx_kmedian}}

Input: data matrix $\Yhat\in\real^{\num\times\num}$; approximation
factor $\apxconst\ge1$ .

1. Use a $\apxconst$-approximation algorithm to solve the optimization
problem (\ref{eq:k-median problem}). Denote the solution as $(\check{\boldsymbol{\Psi}},\check{\mathbf{X}})$.

2. For each $i\in[\num]$, set $\labelhat_{i}\in[\numclust]$ to be
the index of the unique non-zero element of $\left\{ \check{\Psi}_{1i},\check{\Psi}_{2i},\ldots,\check{\Psi}_{\numclust i}\right\} $;
assign node $i$ to cluster $\labelhat_{i}$.

Output: Clustering assignment $\LabelHat\in[\numclust]^{\num}$.

\end{algorithm}

\section{Proof of Lemma \ref{thm:pilot_bound}\label{sec:proof_pilot_bound}}

Starting with the previous established inequality (\ref{eq:gamma_bound}),
we have the bound $\error\le\frac{2}{\inprob-\outprob}\left\langle \Yhat-\Ystar,\Adj-\E\Adj\right\rangle .$
To control the RHS, we compute 
\begin{align*}
\left\langle \Yhat-\Ystar,\Adj-\E\Adj\right\rangle  & =\left\langle \Yhat,\Adj-\E\Adj\right\rangle -\left\langle \Ystar,\Adj-\E\Adj\right\rangle \\
 & \leq2\sup_{\mathbf{Y}\succeq0,\textmd{diag}(\mathbf{Y})\le\onevec}\left|\left\langle \mathbf{Y},\Adj-\E\Adj\right\rangle \right|.
\end{align*}
The Grothendieck's inequality~\cite{Grothendieck,Lindenstrauss}
guarantees that 
\[
\sup_{\mathbf{Y}\succeq0,\textmd{diag}(\mathbf{Y})\le\onevec}\left|\left\langle \mathbf{Y},\Adj-\E\Adj\right\rangle \right|\leq K_{G}\norm[\Adj-\E\Adj]{\infty\to1}
\]
where $K_{G}$ denotes the Grothendieck's constant ($0<K_{G}\leq1.783$)
and $\norm[\M]{\infty\to1}\coloneqq\sup_{\mathbf{x}:\|\mathbf{x}\|_{\infty}\leq1}\norm[\M\mathbf{x}]1$
is the $\size_{\infty}\to\size_{1}$ operator norm for a matrix $\M$.
Furthermore, we have the identity 
\[
\norm[\Adj-\E\Adj]{\infty\to1}=\sup_{\mathbf{x}:\|\mathbf{x}\|_{\infty}\leq1}\norm[(\Adj-\E\Adj)\mathbf{x}]1=\sup_{\mathbf{y},\mathbf{z}\in\{\pm1\}^{\num}}\left|\mathbf{y}^{\top}(\Adj-\E\Adj)\mathbf{z}\right|.
\]

Set $\sigma^{2}:=\sum_{1\leq i<j\leq\num}\Var(\adj_{ij})$. For each
pair of fixed vectors $\mathbf{y},\mathbf{z}\in\{\pm1\}^{\num}$,
the Bernstein inequality ensures that for each number $t\geq0$, 
\[
\Pr\left\{ \left|\mathbf{y}^{\top}(\Adj-\E\Adj)\mathbf{z}\right|>t\right\} \leq2\exp\left\{ -\frac{t^{2}}{2\sigma^{2}+4t/3}\right\} .
\]
Setting $t=\sqrt{16\num\sigma^{2}}+\frac{8}{3}\num$ gives
\[
\Pr\left\{ \left|\mathbf{y}^{\top}(\Adj-\E\Adj)\mathbf{z}\right|>\sqrt{16\num\sigma^{2}}+\frac{8}{3}\num\right\} \leq2e^{-2\num}.
\]
Applying the union bound and using the fact that $\sigma^{2}\leq\inprob(\num^{2}-\num)/2$,
we obtain that with probability at most $2^{2\num}\cdot2e^{-2\num}=2(e/2)^{-2\num}$,
\[
\norm[\Adj-\E\Adj]{\infty\to1}>2\sqrt{2\inprob(\num^{3}-\num^{2})}+\frac{8}{3}\num.
\]

Combining pieces, we conclude that with probability at least $1-2(e/2)^{-2\num}$,
\[
\left\langle \Yhat-\Ystar,\Adj-\E\Adj\right\rangle \leq8\sqrt{2\inprob(\num^{3}-\num^{2})}+\frac{32}{3}\num;
\]
whence 
\begin{align*}
\error & \leq\frac{2}{\inprob-\outprob}\left(8\sqrt{2\inprob(\num^{3}-\num^{2})}+\frac{32}{3}\num\right)\overset{(i)}{\le}\frac{45\sqrt{\inprob\num^{3}}}{\inprob-\outprob}=45\sqrt{\frac{\num^{3}}{\snr}},
\end{align*}
 where step $(i)$ holds by our assumption $\inprob\geq1/\num$. 

\section{Technical lemmas}

In this section we provide the proofs of the technical lemmas used
in the main text.

\subsection{Proof of Fact~\ref{fact:expectation_gamma}\label{sec:proof_fact_expectation_gamma}}

For each pair $i\neq j$, if nodes $i$ and $j$ belong to the same
cluster, then $\ystar_{ij}=1\ge\yhat_{ij}$ and $\E\adj_{ij}-\frac{\inprob+\outprob}{2}=p-\frac{p+q}{2}=\frac{\inprob-\outprob}{2}$;
if nodes $i$ and $j$ belong to different clusters, then $\ystar_{ij}=0\le\yhat_{ij}$
and $\E\adj_{ij}-\frac{\inprob+\outprob}{2}=q-\frac{p+q}{2}=-\frac{\inprob-\outprob}{2}$.
For $i=j$, we have $\yhat_{ij}=\ystar_{ij}$. In each case, we have
the expression 
\[
\left(\ystar_{ij}-\yhat_{ij}\right)\left(\E\adj_{ij}-\frac{\inprob+\outprob}{2}\right)=\frac{\inprob-\outprob}{2}\left|\widehat{Y}_{ij}-\ystar_{ij}\right|.
\]
Summing up the above equation for all $i,j\in[\num]$ gives the desired
equation (\ref{eq:expectation_gamma}).

\subsection{Proof of Lemma \ref{lm:order_stats} \label{sec:proof_lm_order_stat}}

To establish a uniform bound in $\beta$, we apply a discretization
argument to the possible values of $\beta$. Define the shorthand
$E:=(e^{-g\rho/\conste}m,\left\lceil \constu m\right\rceil ]$. We
can cover $E$ by the sub-intervals $E_{t}\coloneqq(t-1,t]$ for integers
$t\in[\left\lceil e^{-g\rho/\conste}m\right\rceil ,\left\lceil \constu m\right\rceil ]$.
By construction we have 
\begin{equation}
\frac{g\rho}{\conste}=\log\frac{m}{e^{-g\rho/\conste}m}\ge\log\frac{m}{t}.\label{eq:requirement_bernoulli_bernstein}
\end{equation}
Also know that our choice of $D=\sqrt{7}$ satisfies $\frac{1}{2}D^{2}\ge3\left(1+\frac{1}{3\sqrt{\conste}}D\right)$
since $\conste\ge28$.

For each $t\in[\left\lceil \constu m\right\rceil ]$ we define the
probabilities 
\[
\alpha_{t}\coloneqq\P\left\{ \exists\beta\in E_{t}:\sum_{j=1}^{\left\lceil \beta\right\rceil }\left|X_{(j)}\right|>D\left\lceil \beta\right\rceil \sqrt{g\rho\log(m/\beta)}\right\} .
\]
We bound each of these probabilities: 
\begin{align}
\alpha_{t} & \overset{(i)}{\le}\P\left\{ \sum_{j=1}^{t}\left|X_{(j)}\right|>Dt\sqrt{g\rho\log(m/t)}\right\} \nonumber \\
 & =\P\left\{ \bigcup_{i_{1}<\cdots<i_{t}}\left\{ \sum_{j=1}^{t}\left|X_{i_{j}}\right|>Dt\sqrt{g\rho\log(m/t)}\right\} \right\} \nonumber \\
 & \leq\sum_{i_{1}<\cdots<i_{t}}\P\left\{ \sum_{j=1}^{t}\left|X_{i_{j}}\right|>Dt\sqrt{g\rho\log(m/t)}\right\} \nonumber \\
 & =\sum_{i_{1}<\cdots<i_{t}}\P\left\{ \max_{\mathbf{u}\in\left\{ \pm1\right\} ^{t}}\sum_{j=1}^{t}X_{i_{j}}u_{j}>Dt\sqrt{g\rho\log(m/t)}\right\} \nonumber \\
 & \leq\sum_{i_{1}<\cdots<i_{t}}\sum_{\mathbf{u}\in\left\{ \pm1\right\} ^{t}}\P\left\{ \sum_{j=1}^{t}X_{i_{j}}u_{j}>Dt\sqrt{g\rho\log(m/t)}\right\} ,\label{eq:double union bd on binom}
\end{align}
where step $(i)$ holds since $\beta\in E_{t}$ implies $\beta\le\left\lceil \beta\right\rceil =t$. 

For each positive integer $t$, fix an index set $(i_{j})_{j=1}^{t}$
and a sign pattern $\mathbf{u}\in\left\{ \pm1\right\} ^{t}$. Note
that $\sum_{j=1}^{t}X_{i_{j}}u_{j}$ is the sum of $tg$ centered
Bernoulli random variables with $\Var(B_{ij}-\E B_{ij})\leq\rho$
and $\left|B_{ij}-\E B_{ij}\right|\leq1$ for each $j\in[t]$ and
$i\in[g]$. We apply Bernstein inequality to bound the probability
on the RHS of equation (\ref{eq:double union bd on binom}): 
\begin{align*}
\P\left\{ \sum_{j=1}^{t}X_{i_{j}}u_{j}>Dt\sqrt{g\rho\log(m/t)}\right\}  & \leq\exp\left\{ -\frac{\frac{1}{2}D^{2}t^{2}g\rho\log(m/t)}{tg\rho+\frac{1}{3}Dt\sqrt{g\rho\log(m/t)}}\right\} \\
 & \overset{(a)}{\leq}\exp\left\{ -\frac{\frac{1}{2}D^{2}t^{2}g\rho\log(m/t)}{\left(1+\frac{1}{3\sqrt{\conste}}D\right)tg\rho}\right\} \\
 & \overset{(b)}{\leq}\exp\left\{ -3t\log(m/t)\right\} ,
\end{align*}
where step (a) holds by equation~(\ref{eq:requirement_bernoulli_bernstein})
and step (b) holds by our choice of the constant $D$. Plugging this
back to equation~(\ref{eq:double union bd on binom}), we get that
for each $\left\lceil e^{-g\rho/\conste}m\right\rceil \leq t\le\left\lceil \constu m\right\rceil $,
\begin{align}
\alpha_{t} & \leq\sum_{i_{1}<\cdots<i_{t}}\sum_{\mathbf{u}\in\left\{ \pm1\right\} ^{t}}\exp\left\{ -3t\log(m/t)\right\} \nonumber \\
 & =\binom{m}{t}\cdot2^{t}\cdot\exp\left\{ -3t\log(m/t)\right\} \nonumber \\
 & \leq\left(\frac{me}{t}\right)^{t}e^{t}\exp\left\{ -3t\log(m/t)\right\} \nonumber \\
 & =\exp\left\{ t\log(m/t)+2t-3t\log(m/t)\right\} \nonumber \\
 & \leq\exp\left\{ -t\log(m/t)\right\} =\left(\frac{t}{m}\right)^{t},\label{eq:binom coeff bound}
\end{align}
where the last inequality follows from $\log(m/t)\ge\log(m/(\constu m+1))\ge2$
since $m\ge8$ and $\constu$ is sufficiently small. It follows that
\begin{align*}
 & \quad\P\left\{ \exists\beta\in E:\sum_{j=1}^{\left\lceil \beta\right\rceil }\left|X_{(j)}\right|>D\left\lceil \beta\right\rceil \sqrt{g\rho\log(m/\beta)}\right\} \\
 & \leq\P\left\{ \bigcup_{t=\left\lceil me^{-\rho g/\conste}\right\rceil }^{\left\lceil \constu m\right\rceil }\left\{ \exists\beta\in E_{t}:\sum_{j=1}^{\left\lceil \beta\right\rceil }\left|X_{(j)}\right|>D\left\lceil \beta\right\rceil \sqrt{g\rho\log(m/\beta)}\right\} \right\} \\
 & \le\sum_{t=\left\lceil me^{-\rho g/\conste}\right\rceil }^{\left\lceil \constu m\right\rceil }\alpha_{t}\\
 & \le\sum_{t=\left\lceil me^{-\rho g/\conste}\right\rceil }^{\left\lceil \constu m\right\rceil }\left(\frac{t}{m}\right)^{t}\\
 & \leq\sum_{t=1}^{\left\lceil \constu m\right\rceil }\left(\frac{t}{m}\right)^{t}\eqqcolon P_{1}(m).
\end{align*}

It remains to show that $P_{1}(m)\leq\frac{3}{m}$. First suppose
that $m\ge16$. Since 
\[
P_{1}(m)\leq\frac{1}{m}+\frac{4}{m^{2}}+\frac{27}{m^{3}}+\sum_{t=4}^{\left\lceil \constu m\right\rceil }\left(\frac{t}{m}\right)^{t}\le\frac{1.5}{m}+m\cdot\max_{t=4,5,\ldots,\left\lceil \constu m\right\rceil }\left(\frac{t}{m}\right)^{t},
\]
the proof is completed if for each integer $t=4,5,\ldots,\left\lceil \constu m\right\rceil $,
we can show the bound $\left(\frac{t}{m}\right)^{t}\leq\frac{1}{m^{2}}$,
or equivalently $f(t):=t(\log m-\log t)\geq2\log m.$ Since $m\ge16$,
we must have $t\le\left\lceil \constu m\right\rceil \le\constu m+1\le\frac{m}{3}$,
in which case $f(t)$ has derivative 
\[
f'(t)=\log m-\log t-1\ge\log3-1\ge0.
\]
Therefore, $f(t)$ is non-decreasing for $4\le t\le\left\lceil \constu m\right\rceil $
and hence $f(t)\ge f(4)=4\log m-4\log4\ge2\log m.$ Now suppose that
$m\le16$. In this case, we have $\left\lceil \constu m\right\rceil =1$,
so $t$ can only be $1$ and hence $P_{1}(m)\le\frac{3}{m}$. $\qed$

\subsection{Proof of Lemma \ref{lm:spectral_bound_A}\label{sec:proof_spectral_bound_A}}

Our proof follows similar lines as that of Lemma 3 in \cite{CMM}.
We first bound $\E\opnorm{\Adj-\E\Adj}$ using a standard symmetrization
argument. Let $\Adj'$ be an independent copy of $\Adj$, and $\mathbf{R}$
be an $\num\times\num$ symmetric matrix, independent of both $\Adj$
and $\Adj'$, with i.i.d.\ Rademacher entries on and above its diagonal.
We can compute 
\begin{align*}
\E\opnorm{\Adj-\E\Adj} & =\E\opnorm{\Adj-\E\Adj'}\\
 & \overset{(i)}{\le}\E\opnorm{\Adj-\Adj'}\\
 & \overset{(ii)}{=}\E\opnorm{\left(\Adj-\Adj'\right)\circ\mathbf{R}}\\
 & \overset{(iii)}{\le}2\E\opnorm{\Adj\circ\mathbf{R}},
\end{align*}
where step $(i)$ follows from convexity of the spectral norm, step
$(ii)$ holds since the matrices $\Adj-\Adj'$ and $\left(\Adj-\Adj'\right)\circ\mathbf{R}$
are identically distributed, and step $(iii)$ follows from the triangle
inequality.

We proceed by bounding $\E\opnorm{\Adj\circ\mathbf{R}}$. Write $\left\Vert \zeta\right\Vert _{a}:=\E[\zeta^{a}]^{1/a}$
for a random variable $\zeta$ and an integer $a\ge0$. Define the
scalars $(b_{ij})_{i\ge j}$ by $b_{ij}=\sqrt{\E\adj_{ij}}$. Also
define the independent random variables $(\xi_{ij})_{i\ge j}$ as
follows: if $\E\adj_{ij}=0$, then $\xi_{ij}$ is a Rademacher random
variable; if $\E\adj_{ij}>0$, 
\[
\xi_{ij}=\begin{cases}
\frac{1}{\sqrt{\E\adj_{ij}}}, & \mbox{with \ensuremath{\inprob}robability \ensuremath{\E\adj_{ij}/2}},\\
-\frac{1}{\sqrt{\E\adj_{ij}}}, & \mbox{with probability \ensuremath{\E\adj_{ij}/2}},\\
0, & \mbox{with probability \ensuremath{1-\E\adj_{ij}}.}
\end{cases}
\]
It can be seen that the lower triangular entries of the symmetric
matrix $\Adj\circ\mathbf{R}$ can be written as $\adj_{ij}R_{ij}=\xi_{ij}b_{ij}$.
By definition, the random variables $(\xi_{ij})_{i\ge j}$ are symmetric
and have zero mean and unit variance. We can therefore make use of
the following known result:
\begin{lem}[Corollary 3.6 in \cite{bandeira2016}]
Let $\X$ be the $\num\times\num$ symmetric random matrix with $X_{ij}=\xi_{ij}b_{ij}$,
where $\{\xi_{ij},i\geq j\}$ are independent symmetric random variables
with unit variance and $\{b_{ij}:i\geq j\}$ are given scalars. Then
we have for any $\alpha\geq3$, 
\[
\E\opnorm{\X}\leq e^{2/\alpha}\left\{ 2\sigma+14\alpha\max_{ij}\|\xi_{ij}b_{ij}\|_{2\left\lceil \alpha\log\num\right\rceil }\sqrt{\log\num}\right\} ,
\]
where $\sigma\coloneqq\max_{i}\sqrt{\sum_{j}b_{ij}^{2}}$. 
\end{lem}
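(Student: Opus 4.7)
The plan is to prove this via the moment method, which is essentially forced here given the sharp leading constant $2\sigma$ that appears in the bound. I start from
\[
\E\opnorm{\X} \le \left(\E \Tr \X^{2p}\right)^{1/(2p)},
\]
valid for every positive integer $p$, and choose $p = \lceil \alpha \log \num \rceil$ so that the factor $\num^{1/(2p)} = e^{1/\alpha}$ produced when passing from trace to operator norm is absorbed into the $e^{2/\alpha}$ on the right-hand side, while the natural scale $\sqrt{p} \asymp \sqrt{\alpha \log \num}$ produces the $\sqrt{\log \num}$ factor attached to the correction term. Expanding the trace yields
\[
\E \Tr \X^{2p} = \sum_{i_0, i_1, \ldots, i_{2p-1}} \E\bigl[ X_{i_0 i_1} X_{i_1 i_2} \cdots X_{i_{2p-1} i_0} \bigr],
\]
a sum over closed walks of length $2p$ in the complete graph on $[\num]$. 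Since the $\xi_{ij}$ are symmetric and independent, all odd moments vanish, so only walks traversing each edge an even number of times survive.

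The next step is to group the surviving walks by their \emph{shape}: the isomorphism class of the underlying multigraph together with the visiting order. For a shape with $v$ distinct vertices and edge multiplicities $2m_1, \ldots, 2m_r$, the total contribution is at most $\num^v$ times a combinatorial walk-multiplicity factor times $\prod_e \E[|\xi_{i_e j_e}|^{2m_e}]\, b_{i_e j_e}^{2m_e}$. I would then split the walks into two classes. The \emph{tree walks}, which traverse each edge exactly twice so that the skeleton is a tree on $p+1$ vertices, give the dominant term: they are counted by the Catalan number $C_p$, and the product of second moments of the $\xi_{ij} b_{ij}$ along the walk, after summing over vertex labels by a depth-first argument, is controlled by $\sigma^{2p}$. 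This produces the leading bound $\num \cdot (2\sigma)^{2p}$. The \emph{non-tree walks}, which reuse some edge four or more times, have exponentially smaller combinatorial weight in $p$ but carry high-order moments; for these I use the crude bound $\E[|\xi_{ij}|^{2m}] b_{ij}^{2m} \le \|\xi_{ij} b_{ij}\|_{2p}^{2m}$ for all $m \le p$, which is precisely what inserts the norm $\max_{ij}\|\xi_{ij} b_{ij}\|_{2\lceil\alpha\log \num\rceil}$ into the final estimate.

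Summing both contributions and taking the $2p$-th root gives an inequality of the form
\[
\E \opnorm{\X} \le e^{2/\alpha}\!\left\{ 2\sigma + C \alpha \cdot \max_{ij}\|\xi_{ij} b_{ij}\|_{2p} \cdot \sqrt{\log \num} \right\},
\]
and the explicit constant $14$ then follows from a careful accounting of the combinatorial factors (Stirling-type estimates for the Catalan numbers and for the walk multiplicity). The main obstacle is pinning down the sharp leading constant $2\sigma$ rather than a generic $C\sigma$: this requires a precise enumeration of closed walks by their tree skeletons in the spirit of Sinai--Soshnikov, together with the observation that summing $\prod_e b_{i_e j_e}^2$ over labelings of a tree reduces, via iterated row-sum bounds, to exactly $\sigma^{2p}$. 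The secondary technical difficulty is verifying that the non-tree contribution remains subdominant at $p \asymp \log \num$, so that only a single $\sqrt{\log \num}$ factor decorates the correction term; this is a balancing argument between the exponential savings in walk count and the possible growth of $\|\xi_{ij} b_{ij}\|_{2p}^{2p}$.
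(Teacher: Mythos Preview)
The paper does not prove this lemma at all: it is quoted verbatim as Corollary~3.6 of \cite{bandeira2016} and used as a black box in the proof of Lemma~\ref{lm:spectral_bound_A}. So there is no ``paper's own proof'' to compare your proposal against.

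That said, your sketch is faithful to how the result is actually established in \cite{bandeira2016}. The moment method with $p=\lceil\alpha\log\num\rceil$, the expansion of $\E\Tr\X^{2p}$ over closed even walks, the separation into tree walks (giving the sharp $2\sigma$ via Catalan numbers and iterated row-sum bounds) versus walks with higher edge multiplicities (controlled by $\|\xi_{ij}b_{ij}\|_{2p}$), and the final $2p$-th root producing $e^{2/\alpha}$ and $\sqrt{\log\num}$ --- all of this matches the Bandeira--van~Handel argument. If you were asked to supply a self-contained proof here, your outline is the right one; but for the purposes of this paper the correct move is simply to cite the external result, which is what the authors do.
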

For every pair $i\ge j$, the random variable $\xi_{ij}b_{ij}$ is
surely bounded by $1$ in absolute value and thus satisfies $\|\xi_{ij}b_{ij}\|_{2\left\lceil \alpha\log\num\right\rceil }\le1$
for any $\alpha\ge3$. The scalars $(b_{ij})_{i\ge j}$ are all bounded
by $\sqrt{\inprob}$, so $\sigma\leq\sqrt{\inprob\num}$. Applying
the above lemma with $\alpha=3$ gives 
\begin{align*}
\E\opnorm{\Adj\circ\mathbf{R}} & \leq e^{2/3}\left(2\sqrt{\inprob\num}+42\sqrt{\log\num}\right)\leq4\sqrt{\inprob\num}+84\sqrt{\log\num},
\end{align*}
whence
\[
\E\opnorm{\Adj-\E\Adj}\le2\E\opnorm{\Adj\circ\mathbf{R}}\le8\sqrt{\inprob\num}+168\sqrt{\log\num}.
\]

We complete the proof by bounding the deviation of $\opnorm{\Adj-\E\Adj}$
from its expectation. This can be done using a standard Lipschitz
concentration inequality:
\begin{lem}[Theorem 6.10 in \cite{boucheron2013concentration}, generalized in
Exercise 6.5 therein]
Let ${\cal X}\subset\mathbb{R}^{d}$ be a convex compact set with
diameter $B$. Let $X_{1},\ldots,X_{N}$ be independent random variables
taking values in ${\cal X}$ and assume that $f:{\cal X}^{N}\to\mathbb{R}$
is separately convex and $1$-Lipschitz, that is, $|f(x)-f(y)|\leq\|x-y\|$
for all $x,y\in{\cal X}^{N}\subset\mathbb{R}^{dN}$. Then $Z=f(X_{1},\ldots,X_{N})$
satisfies, for all $t>0$, 
\[
\P\left\{ Z>\E Z+t\right\} \leq e^{-t^{2}/(2B^{2})}.
\]
\end{lem}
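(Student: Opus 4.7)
The plan is to use the entropy method (Herbst's argument) to derive a sub-Gaussian bound on the moment generating function of $Z-\E Z$. Concretely, setting $H(\lambda):=\log\E[e^{\lambda(Z-\E Z)}]$ for $\lambda>0$, the aim is to show $H(\lambda)\leq\lambda^{2}B^{2}/2$, whereupon the Chernoff bound $\P\{Z>\E Z+t\}\leq\exp(-\lambda t+H(\lambda))$ optimized at $\lambda=t/B^{2}$ yields the claimed inequality $e^{-t^{2}/(2B^{2})}$.

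The main step is to control the entropy functional $\mathrm{Ent}[e^{\lambda Z}]:=\E[\lambda Z e^{\lambda Z}]-\E[e^{\lambda Z}]\log\E[e^{\lambda Z}]$. First I would invoke the tensorization of entropy,
\[
\mathrm{Ent}[e^{\lambda Z}]\leq\sum_{i=1}^{N}\E\!\left[\mathrm{Ent}_{i}[e^{\lambda Z}]\right],
\]
where $\mathrm{Ent}_{i}$ denotes the entropy conditional on $(X_{j})_{j\neq i}$. For each coordinate I would then apply a one-dimensional variational / modified log-Sobolev bound (as in Lemma~2.2 of Boucheron-Lugosi-Massart): letting $Z_{i}^{-}:=\inf_{x'_{i}\in{\cal X}}f(X_{1},\ldots,x'_{i},\ldots,X_{N})$, one has $\mathrm{Ent}_{i}[e^{\lambda Z}]\leq\lambda^{2}\,\E[(Z-Z_{i}^{-})^{2}e^{\lambda Z}\mid X_{-i}]$.

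Separate convexity is crucial here. Because $f$ is convex in the $i$-th coordinate, the subgradient inequality gives $Z-Z_{i}^{-}\leq\langle g_{i},X_{i}-x_{i}^{\star}\rangle$ for some $g_{i}\in\partial_{i}f(X)$, where $x_{i}^{\star}$ is a minimizer, so $(Z-Z_{i}^{-})^{2}\leq\|g_{i}\|^{2}\|X_{i}-x_{i}^{\star}\|^{2}\leq B^{2}\|g_{i}\|^{2}$ by the diameter hypothesis on ${\cal X}$. Summing over $i$ and using the $1$-Lipschitz property of $f$ in the joint variable, which yields $\sum_{i}\|g_{i}\|^{2}=\|\nabla f\|^{2}\leq 1$, gives $\mathrm{Ent}[e^{\lambda Z}]\leq\lambda^{2}B^{2}\,\E[e^{\lambda Z}]$. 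The Herbst identity $(H(\lambda)/\lambda)'=\mathrm{Ent}[e^{\lambda Z}]/(\lambda^{2}\E[e^{\lambda Z}])$ then turns this into $(H(\lambda)/\lambda)'\leq B^{2}/2$, and integrating from $0$ (using $\lim_{\lambda\to 0^{+}}H(\lambda)/\lambda=0$) produces the target bound $H(\lambda)\leq\lambda^{2}B^{2}/2$.

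The main obstacle is establishing the modified log-Sobolev bound for general bounded random variables in a way that captures separate convexity and in particular yields a $B^{2}$ factor rather than $NB^{2}$. A naive McDiarmid bounded-differences argument would only give $\exp(-t^{2}/(2NB^{2}))$; the saving of the factor $N$ is exactly what convexity buys, by letting us pass from individual squared bounded differences $(Z-Z_{i}^{-})^{2}$ to quantities whose sum is controlled by the single joint Lipschitz constant $\|\nabla f\|^{2}\leq 1$. This refinement is morally the functional-analytic content of Talagrand's convex distance inequality.
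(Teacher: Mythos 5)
This lemma is not proved in the paper at all: it is quoted directly from Boucheron--Lugosi--Massart \cite{boucheron2013concentration} (Theorem 6.10, generalized in Exercise 6.5), so there is no internal proof to compare against; what you have written is a reconstruction of the standard entropy-method proof of that cited result (tensorization of entropy, a one-sided modified log-Sobolev bound per coordinate, the separate-convexity/subgradient step trading $(Z-Z_i^-)^2$ for $B^2\|g_i\|^2$, and Herbst's integration), and the architecture is sound.

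Two points need tightening before the argument is complete. First, a constant: the per-coordinate bound obtained from the variational characterization of entropy with the choice $u=e^{\lambda Z_i^-}$, together with $e^{-u}-1+u\le u^2/2$ for $u\ge0$, is $\mathrm{Ent}_i[e^{\lambda Z}]\le\tfrac{\lambda^2}{2}\,\E[(Z-Z_i^-)^2e^{\lambda Z}\mid X_{-i}]$; you state it without the $\tfrac12$, and then your displayed conclusion $\mathrm{Ent}[e^{\lambda Z}]\le\lambda^2B^2\,\E[e^{\lambda Z}]$ only gives $(H(\lambda)/\lambda)'\le B^2$, hence $H(\lambda)\le\lambda^2B^2$ and the weaker tail $e^{-t^2/(4B^2)}$, not the claimed $e^{-t^2/(2B^2)}$. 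Restoring the $\tfrac12$ in the coordinatewise bound makes your chain deliver exactly the stated constant. Second, the step $\sum_i\|g_i\|^2=\|\nabla f\|^2\le1$ silently assumes that the blockwise subgradients you chose assemble into a gradient. Since $f$ is only \emph{separately} convex, a concatenation of arbitrary blockwise subgradients need not have Euclidean norm bounded by the joint Lipschitz constant, and almost-everywhere differentiability of a Lipschitz $f$ does not by itself suffice when the law of $(X_1,\ldots,X_N)$ charges Lebesgue-null sets --- which is precisely the situation in which the paper invokes the lemma, with Bernoulli entries. The standard repair is to mollify: prove the inequality for $f_\epsilon=f*\phi_\epsilon$, which is smooth, separately convex and $1$-Lipschitz, so that $g_i=\nabla_i f_\epsilon(X)$ and $\sum_i\|g_i\|^2\le1$ hold everywhere, and then let $\epsilon\to0$. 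With these two adjustments your proof is a faithful and complete rendition of the quoted theorem.
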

To use this result for our purpose, we note that $Z=\opnorm{\Adj-\E\Adj}/\sqrt{2}$
is a function of the $N=\num(\num-1)/2$ independent lower triangular
entries of the symmetric matrix $\Adj-\E\Adj$. Moreover, this function
is separately convex and $1$-Lipschitz. In our setting, each entry
of $\Adj-\E\Adj$ takes values in the interval ${\cal X}=[-\inprob,1-\outprob]$
, which is convex compact with diameter $B=1-\outprob+\inprob\leq2$.
Applying the above lemma yields that for each $t\geq0$, 
\begin{align*}
\P\left\{ \opnorm{\Adj-\E\Adj}/\sqrt{2}>\E\opnorm{\Adj-\E\Adj}/\sqrt{2}+t\right\}  & \leq e^{-t^{2}/(2B^{2})}\leq e^{-t^{2}/8}.
\end{align*}
Choosing $t=3\sqrt{2\log\num}$ and combining with the previous bound
on $\E\opnorm{\Adj-\E\Adj}$, gives the desired inequality
\[
\P\left\{ \opnorm{\Adj-\E\Adj}>8\sqrt{\inprob\num}+174\sqrt{\log\num}\right\} \leq\num^{-2}.
\]

\subsection{Proof of Lemma \ref{lm:trimmed_spectral_bound}\label{sec:proof_trimmed_matrix_bound}}

For future reference, we state below a more general result; Lemma
\ref{lm:trimmed_spectral_bound} is a special case with $\sigma^{2}=\inprob$.
\begin{lem}
Suppose that $\X\in\real^{\num\times\num}$ is a random matrix with
independent entries of the following distributions 
\[
X_{ij}=\begin{cases}
1-\inprob_{ij} & \mbox{with probability }\inprob_{ij}\\
-\inprob_{ij} & \mbox{with probability }1-\inprob_{ij}.
\end{cases}
\]
Let $\sigma$ be a number that satisfies $\inprob_{ij}\leq\sigma^{2}$
for all $(i,j)$, and $\overleftrightarrow{\X}$ be the matrix obtained
from $\X$ by zeroing out all the rows and columns having more than
$40\sigma^{2}\num$ positive entries. Then with probability at least
$1-P_{3}=1-e^{-(3-\ln2)2\num}-(2\num)^{-3},$ $\opnorm{\overleftrightarrow{\X}}\leq C\sigma\sqrt{\num}$
for some absolute constant $C>0$.
\end{lem}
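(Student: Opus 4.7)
The statement is a Feige--Ofek--type bound on the spectral norm of a trimmed, centered, sparse Bernoulli random matrix. My plan is to follow the standard three-step template: an $\epsilon$-net discretization of the unit sphere, a light/heavy split of the resulting bilinear form, and a combinatorial heavy-part bound that exploits the degree constraint induced by trimming.

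Concretely, I would let $\mathcal{N}$ be a $\tfrac14$-net of $S^{n-1}$, so that $|\mathcal{N}|\le 9^n$ and $\opnorm{\overleftrightarrow{\X}}\le 2\max_{u,v\in\mathcal{N}}|u^{\top}\overleftrightarrow{\X}v|$. For fixed $u,v\in\mathcal{N}$, decompose
\[
u^{\top}\overleftrightarrow{\X}v \;=\; L(u,v)+H(u,v),
\]
where $L(u,v)$ sums over pairs $(i,j)$ with $|u_iv_j|\le \sigma/\sqrt{n}$ and $H(u,v)$ sums over the complementary ``heavy'' pairs. The light part is a sum of independent, zero-mean summands of absolute value at most $\sigma/\sqrt n$, with total variance at most $\sigma^{2}\sum_{i,j}u_i^{2}v_j^{2}=\sigma^{2}$. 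Bernstein's inequality therefore gives $|L(u,v)|\le C_1\sigma\sqrt n$ with failure probability $e^{-cn}$, and choosing $C_1$ large enough allows a union bound over $\mathcal{N}\times\mathcal{N}$ (of size $\le 81^n$) to survive and account for the $e^{-(3-\ln 2)2n}$ term in $P_3$.

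The main obstacle is the heavy part $H(u,v)$, where individual summands can be of order $1$; here the trimming hypothesis is essential. I would introduce the event $\mathcal{E}$ that no row or column of the underlying Bernoulli matrix has more than, say, $60\sigma^{2}n$ ones. By a Bernstein/Chernoff bound on each row/column sum together with a union bound one obtains $\P(\mathcal{E}^c)\le (2n)^{-3}$, which matches the second term in $P_3$. On $\mathcal{E}$, every surviving row and column of $\overleftrightarrow{\X}$ has at most $40\sigma^{2}n$ positive entries, while the deterministic negative contribution $-p_{ij}$ is easily absorbed using $\sum_{i,j}p_{ij}u_i v_j = (\one^{\top}u)(\mathbf{P}v) \cdot (\text{mass})$-type bounds and the fact that the trimming cannot change $\E\X$ by much. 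The positive heavy part is then controlled by a Feige--Ofek bucketing argument: partition the heavy pairs by dyadic ranges of $|u_i|$ and $|v_j|$, use the cardinality bound $|\{i:|u_i|\in(a,2a]\}|\le a^{-2}$ together with the degree constraint to bound the number of heavy entries in each bucket pair, and sum over the $O(\log n)$ dyadic levels. A careful execution — in particular, consolidating the buckets at the two extreme ends and exploiting the bilinearity of the constraint — yields $|H(u,v)|\le C_{2}\sigma\sqrt n$ deterministically on $\mathcal{E}$, with no residual logarithmic factor. Combining the three pieces gives $\opnorm{\overleftrightarrow{\X}}\le C\sigma\sqrt n$ with probability at least $1-P_{3}$. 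The hardest step is precisely the heavy-part bucketing, because this is what removes the $\sqrt{\log n}$ factor that otherwise appears in the untrimmed bound of Lemma~\ref{lm:spectral_bound_A}, and it is the step that actually uses the trimming hypothesis in a nontrivial way.
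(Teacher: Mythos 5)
Your route is genuinely different from the paper's: the paper does not reprove a Feige--Ofek bound at all, but symmetrizes $\X$ by a dilation into a $2\num\times2\num$ matrix, checks that trimming rows/columns of $\X$ with more than $40\sigma^{2}\num$ positive entries corresponds exactly to the trimming in the symmetric setting, and then invokes Lemma 12 of \cite{ChinRaoVu15} as a black box. Proving the statement from scratch via an $\epsilon$-net and a light/heavy split is legitimate in principle, but your heavy-part step has a genuine gap: you claim $|H(u,v)|\le C_{2}\sigma\sqrt{\num}$ \emph{deterministically} once all surviving rows and columns have at most $40\sigma^{2}\num$ positive entries, and the degree bound alone does not imply this. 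Concretely, take $d=40\sigma^{2}\num$ and a matrix containing a $d\times d$ all-ones block whose rows and columns have no other positive entries; every such row and column survives the trimming. Choosing $u=v$ uniform on the block gives $u^{\top}\overleftrightarrow{\X}v\approx d=40\sigma^{2}\num$, and all of these pairs are heavy (one checks $1/d>\sigma/\sqrt{\num}$ whenever $\sigma^{2}\lesssim\num^{-1/3}$, which covers the regime $\inprob\le\constdense\log\num/\num$ in which the lemma is used), yet $40\sigma^{2}\num\gg\sigma\sqrt{\num}$ as soon as $\sigma^{2}\num\gg1$. In Feige--Ofek the heavy-couples bucketing uses, in addition to bounded degrees, the \emph{discrepancy property} (for every pair of vertex subsets $A,B$, the edge count $e(A,B)$ is controlled relative to its mean, with a union bound over all pairs of subsets); that event holds only with high probability, requires its own probabilistic argument, and is exactly what rules out the dense-block configuration above. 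It is absent from your proof and cannot be dispensed with.

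A secondary problem is your auxiliary event $\mathcal{E}$ (no row or column of the untrimmed Bernoulli matrix has more than $60\sigma^{2}\num$ ones): in the sparse regime $\sigma^{2}\asymp1/\num$ its complement is \emph{not} of probability $O(\num^{-3})$, since expected row sums are $O(1)$ while the maximum row sum grows like $\log\num/\log\log\num$ --- this is precisely why trimming is needed in the first place. The degree bound you actually use for surviving rows is already guaranteed deterministically by the definition of $\overleftrightarrow{\X}$, so $\mathcal{E}$ buys nothing, and the $(2\num)^{-3}$ term in $P_{3}$ cannot be attributed to it; in the paper that term simply comes out of the failure probability of the cited result after dilation. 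To repair your argument you would need to add the discrepancy estimate (with its union bound over pairs of subsets) and run the full bucketing of Feige--Ofek on it --- at which point you have essentially reproved the result the paper imports from \cite{ChinRaoVu15}.
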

We now prove this lemma. Note that the matrix $\X$ is not symmetric.
To make use of a known result that requires symmetry, we employ a
standard dilation argument. We construct the matrix 
\[
\D\coloneqq\begin{bmatrix}\O_{1} & \X\\
\X^{\top} & \O_{2}
\end{bmatrix}\in\real^{2\num\times2\num},
\]
where $\O_{1},\O_{2}\in\real^{\num\times\num}$ are all-zero matrices.
The matrices $\O_{1}$ and $\O_{2}$ can be viewed as random symmetric
matrices whose entries above the diagonal are independent centered
Bernoulli random variable of rate zero. The matrix $\D$ is symmetric
and satisfies the assumptions in the following known result with $N=2\num$.
\begin{lem}[Lemma 12 of \cite{ChinRaoVu15}]
\label{lm:ChinRaoVu}Suppose that $\D\in\real^{N\times N}$ is a
random symmetric matrix with zero on the diagonal whose entries above
the diagonal are independent with the following distributions 
\[
D_{ij}=\begin{cases}
1-\inprob_{ij} & \mbox{with probability }\inprob_{ij}\\
-\inprob_{ij} & \mbox{with probability }1-\inprob_{ij}.
\end{cases}
\]
Let $\sigma$ be a quantity such that $\inprob_{ij}\leq\sigma^{2}$
for all $(i,j)\in[N]\times[N]$, and $\D_{1}$ be the matrix obtained
from $\D$ by zeroing out all the rows and columns having more than
$20\sigma^{2}N$ positive entries. Then with probability $1-o(1),$
 $\opnorm{\D_{1}}\leq C'\sigma\sqrt{N}$ for some absolute constant
$C'>0$.
\end{lem}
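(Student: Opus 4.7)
Since Lemma \ref{lm:ChinRaoVu} is cited from \cite{ChinRaoVu15}, the cleanest route is to invoke it as a black box. If one instead wanted a self-contained proof, I would follow the standard Feige--Ofek / Friedman--Kahn--Szemer\'{e}di variational strategy. The plan is to write $\opnorm{\D_1} = \sup_{\mathbf{x}, \mathbf{y} \in S^{N-1}} \mathbf{x}^{\top} \D_1 \mathbf{y}$, pass from the sphere to a fixed $\epsilon$-net $\mathcal{N}$ of cardinality at most $e^{O(N)}$ (for a small enough constant $\epsilon$), bound $\mathbf{x}^{\top} \D_1 \mathbf{y}$ by $O(\sigma\sqrt{N})$ for each $(\mathbf{x}, \mathbf{y}) \in \mathcal{N} \times \mathcal{N}$, and take a union bound.

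For fixed unit $\mathbf{x}, \mathbf{y}$, the plan is to decompose $\mathbf{x}^{\top} \D_1 \mathbf{y} = \sum_{i,j} x_i y_j (D_1)_{ij}$ into a \emph{light} part, summing over pairs with $|x_i y_j| \le \sqrt{\sigma^2/N}$, and a \emph{heavy} part, summing over the remaining pairs. The light part is a sum of independent centered random variables whose total variance is at most $\sigma^2 \|\mathbf{x}\|_2^2 \|\mathbf{y}\|_2^2 = \sigma^2$ and whose individual magnitudes are at most $\sqrt{\sigma^2/N}$, so Bernstein's inequality yields a tail $e^{-\Omega(N)}$ for deviations of order $\sigma\sqrt{N}$, which easily survives the $e^{O(N)}$ net cost.

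The hard part is controlling the heavy contribution, which is supported on pairs $(i,j)$ where both $|x_i|$ and $|y_j|$ are fairly large. The plan is a bounded-discrepancy argument in the spirit of Friedman--Kahn--Szemer\'{e}di: for every pair of small index sets $I, J$, the number of positive entries of $\D_1$ in $I \times J$ is, with probability exponentially close to one, at most a constant times $\max\{\sigma^2|I||J|,\, |I|\log(N/|I|)\}$. This discrepancy bound fails for the untrimmed matrix $\D$ because a handful of vertices with abnormally many positive entries create locally dense submatrices, but the trimming at threshold $20\sigma^2 N$ is precisely what removes those offending rows and columns. I expect this heavy-part analysis to be the main obstacle, as it requires partitioning the heavy pairs by the magnitude scales of $|x_i|$, $|y_j|$, and the local row/column sums of $\D_1$, then combining the above discrepancy estimates with the degree cap imposed by trimming so that the total heavy contribution still sums to $O(\sigma\sqrt{N})$.
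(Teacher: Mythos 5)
Your primary route --- invoking the result as a black box from \cite{ChinRaoVu15} --- is exactly what the paper does, since Lemma~\ref{lm:ChinRaoVu} is stated there as a citation with no proof given. Your sketched alternative (Feige--Ofek/Friedman--Kahn--Szemer\'edi net argument with a light/heavy split and a discrepancy bound enabled by the degree trimming) is also the correct strategy behind the cited proof, so nothing is missing.
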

\begin{rem*}
The probability above is in fact $1-o(1)=1-P_{3}=1-e^{-(3-\ln2)N}-N^{-3}$,
as can be seen by inspecting the proof of the lemma. Moreover, since
$\D$ is symmetric, the indices of the rows and columns being zeroed
out are identical.

Lemma~\ref{lm:ChinRaoVu} ensures that $\opnorm{\D_{1}}\leq C'\sigma\sqrt{2\num}=C\sigma\sqrt{\num}$
with probability at least $1-P_{3}$, where $C=\sqrt{2}C'$. The lemma
at the beginning of this sub-section then follows from the claim that
$\opnorm{\overleftrightarrow{\X}}=\opnorm{\D_{1}}.$

It remains to prove the above claim. Recall that $\pos{}_{i\bullet}(\M)$
and $\pos{}_{\bullet j}(\M)$ are the numbers of positive entries
in the $i$-th row and $j$-th column of a matrix $\M$, respectively.
For each $(i,j)\in[\num]\times[\num]$, we observe that by construction,
\begin{align*}
\pos{}_{i\bullet}(\D)>20\sigma^{2}N & \;\Leftrightarrow\;\pos_{i\bullet}(\X)>40\sigma^{2}\num\\
\pos_{\bullet\left(j+\num\right)}(\D)>20\sigma^{2}N & \;\Leftrightarrow\;\pos_{\bullet j}(\X)>40\sigma^{2}\num
\end{align*}
and similarly
\begin{align*}
\pos_{\left(i+\num\right)\bullet}(\D)>20\sigma^{2}N & \;\Leftrightarrow\;\pos_{i\bullet}(\X^{\top})>40\sigma^{2}\num\\
\pos_{\bullet j}(\D)>20\sigma^{2}N & \;\Leftrightarrow\;\pos_{\bullet j}(\X^{\top})>40\sigma^{2}\num.
\end{align*}
It then follows from the definitions of $\D$ and $\D_{1}$ that 
\[
\D_{1}=\begin{bmatrix}\O_{1} & \overleftrightarrow{\X}\\
\overleftrightarrow{\X^{\top}} & \O_{2}
\end{bmatrix}=\begin{bmatrix}\O_{1} & \overleftrightarrow{\X}\\
\overleftrightarrow{\X}{}^{\top} & \O_{2}
\end{bmatrix},
\]
whence $\opnorm{\D_{1}}=\opnorm{\overleftrightarrow{\X}}$. 
\end{rem*}

\subsection{Proof of Fact \ref{fact:Yhp_infty}\label{sec:proof_Yhp_infty}}

Because $\Yhat$ is feasible to the SDP (\ref{eq:SDP1}), we know
that $\norm[\Yhat]{\infty}\le1$. Let $c(i)$ be the index of the
cluster that contains node $i$. For each $(i,j)\in[\num]\times[\num]$,
we have the bound
\[
\vert(\U\U^{\top}\Yhat)_{ij}\vert=\left|\sum_{t:c(t)=c(i)}(\U\U^{\top})_{it}\yhat_{tj}\right|\le\size\cdot\frac{1}{\size}\cdot\norm[\Yhat]{\infty}\le1
\]
thanks to the structure of the matrix $\U\U^{\top}$. The same bound
holds for the matrices $\Yhat\U\U^{\top}$ and $\U\U^{\top}\Yhat\U\U^{\top}$
by similar arguments. It follows that 
\begin{align*}
\norm[\PTperp(\Yhat)]{\infty} & =\norm[\Yhat-\U\U^{\top}\Yhat-\Yhat\U\U^{\top}+\U\U^{\top}\Yhat\U\U^{\top}]{\infty}\\
 & \leq\norm[\Yhat]{\infty}+\norm[\U\U^{\top}\Yhat]{\infty}+\norm[\Yhat\U\U^{\top}]{\infty}+\norm[\U\U^{\top}\Yhat\U\U^{\top}]{\infty}\\
 & \le4.
\end{align*}

\subsection{Proof of the implication (\ref{eq:sub-exp-tail})$\Rightarrow$(\ref{eq:sub-exp-mgf})\label{sec:proof_sub_exp_eqv_defs}}

We first show that the tail condition~(\ref{eq:sub-exp-tail}) of
a random variable $X$ implies a bound for its moments. Note that
we do not require $X$ to be centered. 
\begin{lem}
\label{lm:subexp_tai_to_moment} Suppose that for some $\lambda>0$,
the random variable $X$ satisfies $\P\left(\left|X\right|>z\right)\leq2e^{-8z/\lambda},\forall z\ge0$
. Then for each positive integer $m\geq1$, 
\[
\left(\E\left[\left|X\right|^{m}\right]\right)^{1/m}\leq\frac{1}{4}\lambda\left(m!\right)^{1/m}.
\]
\end{lem}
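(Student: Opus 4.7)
The plan is to use the layer-cake (tail-integral) representation of a moment and reduce the bound directly to a Gamma integral. This is a standard tail-to-moment conversion, and the cleanest route avoids any intermediate MGF/Orlicz-norm manipulation.

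First I would write
\[
\E[|X|^m] = \int_0^\infty \P(|X|^m > t)\,\mathrm{d}t = \int_0^\infty \P(|X| > u)\, m u^{m-1}\,\mathrm{d}u,
\]
where the second equality uses the substitution $t = u^m$. Then the hypothesized tail bound $\P(|X|>u)\le 2e^{-8u/\lambda}$ plugs in directly to give
\[
\E[|X|^m] \le 2m \int_0^\infty e^{-8u/\lambda} u^{m-1}\,\mathrm{d}u.
\]

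Next I would change variables $v = 8u/\lambda$, so $u = \lambda v/8$ and $\mathrm{d}u = (\lambda/8)\,\mathrm{d}v$, converting the integral into a Gamma function:
\[
\E[|X|^m] \le 2m \left(\frac{\lambda}{8}\right)^m \int_0^\infty e^{-v} v^{m-1}\,\mathrm{d}v = 2m\left(\frac{\lambda}{8}\right)^m \Gamma(m) = 2\cdot m!\cdot\left(\frac{\lambda}{8}\right)^m.
\]
Taking $m$-th roots yields
\[
\bigl(\E[|X|^m]\bigr)^{1/m} \le 2^{1/m}\cdot\frac{\lambda}{8}\cdot (m!)^{1/m} \le 2\cdot\frac{\lambda}{8}\cdot (m!)^{1/m} = \frac{\lambda}{4}(m!)^{1/m},
\]
which is exactly the claimed bound (using $2^{1/m}\le 2$ uniformly in $m\ge 1$, which is tight at $m=1$).

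There is no real obstacle here: the tail is an explicit exponential, so the integral is just a scaled Gamma function and no truncation/splitting is needed. The only mild care point is getting the constants right so that the factor $\tfrac{1}{4}\lambda$ on the RHS actually comes out of $(\lambda/8)\cdot 2^{1/m}$; this works uniformly because the worst case ($m=1$) already gives equality in the estimate $2^{1/m}\le 2$. No centering of $X$ is needed since the bound is on $|X|$ directly.
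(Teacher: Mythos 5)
Your proof is correct and follows essentially the same route as the paper: a layer-cake/tail-integral representation, substitution into a Gamma integral giving $2\,m!\,(\lambda/8)^m$, and absorbing the factor $2$ via $2^{1/m}\le 2$ (the paper equivalently uses $2(\lambda/8)^m\le(\lambda/4)^m$). No substantive difference.
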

\begin{proof}
Let $\Gamma(\cdot)$ denote the Gamma function. We have the bound
\begin{align*}
\E\left[\left|X\right|^{m}\right] & =\int_{0}^{\infty}\P\left(\left|X\right|^{m}>z\right)\diff z\\
 & =\int_{0}^{\infty}\P\left(\left|X\right|>z^{1/m}\right)\diff z\\
 & \leq\int_{0}^{\infty}2\exp\left(-\frac{8z^{1/m}}{\lambda}\right)\diff z\\
 & \overset{(i)}{=}2\cdot\left(\lambda/8\right)^{m}\cdot m\int_{0}^{\infty}e^{-u}u^{m-1}\diff u\\
 & \leq(\lambda/4)^{m}\cdot m\Gamma\left(m\right)=(\lambda/4)^{m}m!,
\end{align*}
where step $(i)$ follows from a change of variable $u=8z^{1/m}/\lambda$.
Taking the $m$-th root of both sides proves the result.
\end{proof}
Using Minkowski's inequality, we can see that the above moment bounds
are sub-additive:
\begin{lem}
\label{lm:subexp_moment_subaddtive} Suppose that for some $\lambda_{1},\lambda_{2}>0$,
the random variables $X_{1}$ and $X_{2}$ satisfy 
\begin{align*}
\left(\E\left[\left|X_{1}\right|^{m}\right]\right)^{1/m} & \leq\lambda_{1}\left(m!\right)^{1/m}\quad\text{and}\quad\left(\E\left[\left|X_{2}\right|^{m}\right]\right)^{1/m}\leq\lambda_{2}\left(m!\right)^{1/m}
\end{align*}
for each positive integer $m$. Then for each positive integer $m$,
\[
\left(\E\left[\left|X_{1}+X_{2}\right|^{m}\right]\right)^{1/m}\leq\left(\lambda_{1}+\lambda_{2}\right)\left(m!\right)^{1/m}.
\]
\end{lem}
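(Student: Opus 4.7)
The plan is to apply Minkowski's inequality in $L^m$, which is exactly the tool designed for establishing sub-additivity of $L^m$-norms. Specifically, I would invoke the triangle inequality in $L^m$ to write
\[
\left(\E\left[\left|X_{1}+X_{2}\right|^{m}\right]\right)^{1/m}\leq\left(\E\left[\left|X_{1}\right|^{m}\right]\right)^{1/m}+\left(\E\left[\left|X_{2}\right|^{m}\right]\right)^{1/m},
\]
which holds for all integers $m\ge 1$ without any assumption on the joint distribution of $X_1,X_2$.

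Given this, the conclusion follows by plugging in the two hypotheses and factoring out the common $(m!)^{1/m}$ term:
\[
\left(\E\left[\left|X_{1}+X_{2}\right|^{m}\right]\right)^{1/m}\leq\lambda_{1}(m!)^{1/m}+\lambda_{2}(m!)^{1/m}=(\lambda_{1}+\lambda_{2})(m!)^{1/m}.
\]
That is all; no further estimates are needed.

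There is essentially no obstacle here — the lemma is a direct corollary of Minkowski's inequality, and the only thing to verify is that the hypotheses are applied for the same integer $m$ as the conclusion, which is immediate since the bounds are assumed to hold for every positive integer. The only minor point worth remarking on is that Minkowski requires $m\ge 1$ (so that $L^m$ is a genuine norm), which matches exactly the range of $m$ stated in the lemma. No independence or centering assumption on $X_1,X_2$ is used.
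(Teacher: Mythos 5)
Your proof is correct and matches the paper's approach exactly: the paper also obtains this lemma as an immediate consequence of Minkowski's inequality in $L^{m}$, applied with the common factor $(m!)^{1/m}$ pulled out. Nothing further is needed.
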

Consequently, centering a random variable does not affect its sub-exponentiality
up to constant factors. In particular, suppose that $X$ satisfies
the moment bounds in Lemma~\ref{lm:subexp_tai_to_moment}. Then for
every positive integer $m$, 
\[
\left(\E\left[\left|\E X\right|^{m}\right]\right)^{1/m}=\left(\left|\E X\right|^{m}\right)^{1/m}\overset{(i)}{\le}\left(\E\left|X\right|^{m}\right)^{1/m}\leq\frac{\lambda}{4}\left(m!\right)^{1/m},
\]
where step $(i)$ uses the Jensen's inequality. Applying Lemma \ref{lm:subexp_moment_subaddtive}
gives 
\[
\left(\E\left[\left|X-\E X\right|^{m}\right]\right)^{1/m}\leq\frac{\lambda}{2}\left(m!\right)^{1/m}.
\]

The next lemma shows that the moment bound implies the bound~(\ref{eq:sub-exp-mgf})
on the moment generating function, hence completing the proof of the
implication (\ref{eq:sub-exp-tail})$\Rightarrow$(\ref{eq:sub-exp-mgf}).
\begin{lem}
\label{lm: subexp_moment_to_mgf} Suppose that for some number $\lambda>0$,
the random variable $X$ satisfies 
\[
\left(\E\left[\left|X-\E X\right|^{m}\right]\right)^{1/m}\leq\frac{\lambda}{2}\cdot\left(m!\right)^{1/m}
\]
for each positive integer $m$. Then we have 
\[
\E\left[e^{t(X-\E X)}\right]\leq e^{t^{2}\lambda^{2}/2},\qquad\forall\left|t\right|\leq\frac{1}{\lambda}.
\]
\end{lem}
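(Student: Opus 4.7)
The plan is to expand the moment generating function of $Y := X - \E X$ as a Taylor series in $t$ and bound each term using the hypothesis, then recognize the resulting series as dominated by $e^{t^2\lambda^2/2}$.

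First I would write
\[
\E\bigl[e^{tY}\bigr] = \sum_{m=0}^{\infty} \frac{t^m \E[Y^m]}{m!}
= 1 + t\,\E[Y] + \sum_{m=2}^{\infty}\frac{t^m \E[Y^m]}{m!}.
\]
The linear term vanishes because $\E[Y] = 0$. For $m \ge 2$, Jensen's inequality and the hypothesis give $|\E[Y^m]| \le \E[|Y|^m] \le (\lambda/2)^m\, m!$, so
\[
\E\bigl[e^{tY}\bigr] \le 1 + \sum_{m=2}^{\infty} \left(\frac{|t|\lambda}{2}\right)^{m}.
\]

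Next, under the assumption $|t| \le 1/\lambda$ we have $|t|\lambda/2 \le 1/2$, so the geometric series converges and
\[
\sum_{m=2}^{\infty}\left(\frac{|t|\lambda}{2}\right)^{m}
= \frac{(|t|\lambda/2)^2}{1 - |t|\lambda/2}
\le 2\cdot\frac{t^2\lambda^2}{4} = \frac{t^2\lambda^2}{2}.
\]
Combining the two displays yields $\E[e^{tY}] \le 1 + t^2\lambda^2/2 \le e^{t^2\lambda^2/2}$, which is the claimed bound.

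There isn't really a hard step here — the only mild subtlety is ensuring the Taylor expansion is justified (i.e.\ that the series $\sum_m t^m \E[Y^m]/m!$ converges absolutely so that one may interchange sum and expectation). This follows from the absolute bound $\sum_m |t|^m \E[|Y|^m]/m! \le \sum_m (|t|\lambda/2)^m < \infty$ for $|t|\lambda/2 < 1$, which is implied by the moment hypothesis and the range of $t$. With that justification in place, the computation above is essentially the whole proof.
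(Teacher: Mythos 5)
Your proposal is correct and follows essentially the same route as the paper: expand the moment generating function, drop the linear term since $\E[X-\E X]=0$, bound the $m$-th term by $(|t|\lambda/2)^m$ via the moment hypothesis, and sum the geometric series using $|t|\lambda/2\le 1/2$ to get $1+t^2\lambda^2/2\le e^{t^2\lambda^2/2}$. Your added remark justifying the interchange of sum and expectation by absolute convergence is a fine (and slightly more careful) touch, but the argument is the same.
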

\begin{proof}
For each $t$ such that $\left|t\right|\leq1/\lambda$, we have 
\begin{align*}
\E\left[e^{t(X-\E X)}\right] & =1+t\E\left[X-\E X\right]+\sum_{m=2}^{\infty}\frac{t^{m}\E\left[(X-\E X)^{m}\right]}{m!}\\
 & \overset{(i)}{\leq}1+\sum_{m=2}^{\infty}\frac{\left|t\right|^{m}\E\left[\left|X-\E X\right|^{m}\right]}{m!}\\
 & \leq1+\sum_{m=2}^{\infty}\left(\left|t\right|\cdot\frac{\lambda}{2}\right)^{m}\\
 & =1+\frac{t^{2}\lambda^{2}}{4}\sum_{m=0}^{\infty}\left(\left|t\right|\cdot\frac{\lambda}{2}\right)^{m}\\
 & \leq1+\frac{t^{2}\lambda^{2}}{2}\leq e^{t^{2}\lambda^{2}/2},
\end{align*}
where step $\left(i\right)$ holds since $\E\left[X-\E X\right]=0$.
\end{proof}

\section{Proof of Theorem \ref{thm:cluster_error_rate}\label{sec:proof_cluster_error_rate}}

As mentioned in Appendix~\ref{sec:kmedians}, we use an approximate
$k$-medians clustering algorithm with approximation ratio $\apxconst=\frac{20}{3}$.
Theorem \ref{thm:cluster_error_rate} follows immediately by combining
Theorem \ref{thm:exp_rate} and the proposition below.
\begin{prop}
\label{prop:kmedian}The clustering assignment $\LabelHat=\apkmedian(\Yhat)$
produced by the $\apxconst$-approximate $k$-median algorithm (Algorithm~\ref{alg:apx_kmedian})
satisfies the error bound 
\[
\misrate(\LabelHat,\LabelStar)\leq2(1+2\apxconst)\cdot\frac{\norm[\Yhat-\Ystar]1}{\norm[\Ystar]1}.
\]
\end{prop}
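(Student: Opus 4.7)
The plan is to \textbf{(i)} construct a high-quality feasible solution to the $k$-median problem~(\ref{eq:k-median problem}) based on the ground-truth partition, \textbf{(ii)} transfer this via the $\apxconst$-approximation guarantee to the algorithm's output $(\check{\boldsymbol{\Psi}},\check{\mathbf{X}})$, and \textbf{(iii)} convert the resulting row-wise $\ell_{1}$-closeness of $\check{\boldsymbol{\Psi}}\check{\mathbf{X}}$ to $\Ystar$ into a bound on $\misrate(\LabelHat,\LabelStar)$.

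For step~(i), I take $\boldsymbol{\Psi}^{\star}\in\mathbb{M}_{\num,\numclust}$ to be the ground-truth membership matrix. For each ground-truth cluster $C_{a}^{\star}$ of size $\size=\num/\numclust$, I select an index $i_{a}\in C_{a}^{\star}$ satisfying $\norm[\Yhat_{i_{a}\bullet}-\Ystar_{i_{a}\bullet}]1\leq\frac{1}{\size}\sum_{i\in C_{a}^{\star}}\norm[\Yhat_{i\bullet}-\Ystar_{i\bullet}]1$ (such a ``median index'' exists by averaging), and let $\mathbf{X}^{\star}\in\real^{\numclust\times\num}$ have $a$-th row $\Yhat_{i_{a}\bullet}$. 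The pair $(\boldsymbol{\Psi}^{\star},\mathbf{X}^{\star})$ is feasible for~(\ref{eq:k-median problem}). For any $i,i_{a}\in C_{a}^{\star}$, the identity $\Ystar_{i_{a}\bullet}=\Ystar_{i\bullet}$ combined with the triangle inequality gives $\norm[\Yhat_{i_{a}\bullet}-\Yhat_{i\bullet}]1\leq\norm[\Yhat_{i_{a}\bullet}-\Ystar_{i_{a}\bullet}]1+\norm[\Yhat_{i\bullet}-\Ystar_{i\bullet}]1$; summing over $i\in C_{a}^{\star}$, exploiting the choice of $i_{a}$, and then summing over $a$ yields $\norm[\boldsymbol{\Psi}^{\star}\mathbf{X}^{\star}-\Yhat]1\leq 2\norm[\Yhat-\Ystar]1$.

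For step~(ii), the approximation guarantee~(\ref{eq:k-median guarantee}) transfers this to $\norm[\check{\boldsymbol{\Psi}}\check{\mathbf{X}}-\Yhat]1\leq 2\apxconst\norm[\Yhat-\Ystar]1$, and a single application of the triangle inequality with pivot $\Yhat$ yields $\norm[\check{\boldsymbol{\Psi}}\check{\mathbf{X}}-\Ystar]1\leq(1+2\apxconst)\norm[\Yhat-\Ystar]1$.

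For step~(iii), observe that the $i$-th row of $\check{\boldsymbol{\Psi}}\check{\mathbf{X}}$ equals the center $\check{\mathbf{x}}_{\labelhat_{i}}$ of the estimated cluster containing $i$, while the $i$-th row of $\Ystar$ is the indicator $\mathbf{1}_{C_{\labelstar_{i}}^{\star}}$. Defining $\phi(b):=\argmin_{a\in[\numclust]}\norm[\check{\mathbf{x}}_{b}-\mathbf{1}_{C_{a}^{\star}}]1$ and using $\norm[\mathbf{1}_{C_{a}^{\star}}-\mathbf{1}_{C_{a'}^{\star}}]1=2\size$ for $a\neq a'$, the triangle inequality forces $\norm[\check{\mathbf{x}}_{b}-\mathbf{1}_{C_{a}^{\star}}]1\geq\size$ whenever $a\neq\phi(b)$. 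Summing over all $i$ with $\phi(\labelhat_{i})\neq\labelstar_{i}$ yields $\size\cdot|\{i:\phi(\labelhat_{i})\neq\labelstar_{i}\}|\leq\norm[\check{\boldsymbol{\Psi}}\check{\mathbf{X}}-\Ystar]1$. The main obstacle is that $\phi$ need not be a bijection, whereas $\misrate$ is defined by minimizing over $S_{\numclust}$. I will handle this by promoting $\phi$ to some $\pi\in S_{\numclust}$ while losing at most a factor of two in the mismatch count: for each ground-truth cluster $a$ missed by $\phi$, every node in $C_{a}^{\star}$ already contributes at least $\size$ to $\norm[\check{\boldsymbol{\Psi}}\check{\mathbf{X}}-\Ystar]1$, so the extra mismatches introduced by rerouting collided images of $\phi$ to these missed clusters are already paid for by $\norm[\check{\boldsymbol{\Psi}}\check{\mathbf{X}}-\Ystar]1$. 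This produces $\num\cdot\misrate(\LabelHat,\LabelStar)\leq 2\norm[\check{\boldsymbol{\Psi}}\check{\mathbf{X}}-\Ystar]1/\size$, and recognizing $\norm[\Ystar]1=\num\size$ gives the claimed bound $\misrate(\LabelHat,\LabelStar)\leq 2(1+2\apxconst)\norm[\Yhat-\Ystar]1/\norm[\Ystar]1$.
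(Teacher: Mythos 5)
Your proposal is correct, and its first two steps coincide with the paper's Claim~\ref{clm:Ycheck - Ystar upper bound}: you build a feasible warm start for~(\ref{eq:k-median problem}) from the true membership matrix with centers taken among the rows of $\Yhat$ (you pick an averaging ``median index'' per cluster, the paper picks the row closest to $\Ystar_{i\bullet}$ --- both give objective at most $2\norm[\Yhat-\Ystar]1$), and the $\apxconst$-approximation guarantee plus one triangle inequality then yields $\norm[\check{\boldsymbol{\Psi}}\check{\mathbf{X}}-\Ystar]1\le(1+2\apxconst)\norm[\Yhat-\Ystar]1$. Where you genuinely depart from the paper is the conversion of this $\ell_{1}$ bound into a bound on $\misrate$. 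The paper works node-by-node on the ground-truth side: it defines the good sets $T_{a}$ (rows of $\check{\mathbf{Y}}$ within $\size$ of the true row), the bad set $S$, and the cluster classes $R_{1},R_{2},R_{3}$, proves via Lemma~\ref{lm:distinct_rows_across_clusters} that good nodes in distinct clusters receive distinct estimated labels, and closes with the counting chain $\min_{\boldsymbol{\Pi}}|{\cal E}(\boldsymbol{\Pi})|\le|S|+|R_{3}|\size$, $|R_{3}|\le|R_{1}|$, $|R_{1}|\size\le|S|\le\frac{1}{\size}\norm[\check{\mathbf{Y}}-\Ystar]1$. You instead work center-by-center: assign each estimated center $\check{\mathbf{x}}_{b}$ to its nearest true indicator via $\phi$, note that any mismatch under $\phi$ costs at least $\size$ in $\ell_{1}$, and then repair the possible non-injectivity of $\phi$ by rerouting the collided estimated labels onto the missed true clusters; since each missed cluster has $\size$ nodes each paying at least $\size$, while each rerouted label creates at most $\size$ new mismatches, the repair is charged to the same budget, giving the same overall factor $2$. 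Your route avoids the $R_{1}/R_{2}/R_{3}$ bookkeeping and the distinct-rows lemma, at the price of having to construct the permutation explicitly and to make the per-rerouted-label accounting (at most $\size$ extra mismatches versus at least $\size^{2}$ of budget per missed cluster) precise --- that inequality is the one quantitative point you should spell out in a full write-up, but as stated it does check out, and both arguments deliver the identical constant $2(1+2\apxconst)$.
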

We note that a similar result appeared in the paper \cite{CMM}, specifically
their Theorem 2 restricted to the non-degree-corrected setting. The
proposition provides a moderate generalization, establishing a general
relationship between clustering error of the $\apxconst$-approximation
$k$-median procedure and the error of its input $\Yhat$. The rest
of this section is devoted to the proof of Proposition \ref{prop:kmedian}.
\\

Recall that $(\overline{\boldsymbol{\Psi}},\overline{\mathbf{X}})$
is the optimal solution of the $k$-medians problem~(\ref{eq:k-median problem}),
and $(\check{\boldsymbol{\Psi}},\check{\mathbf{X}})$ is a $\apxconst$-approximate
solution. Set $\overline{\mathbf{Y}}\coloneqq\overline{\boldsymbol{\Psi}}\overline{\mathbf{X}}$
and $\check{\mathbf{Y}}\coloneqq\check{\boldsymbol{\Psi}}\check{\mathbf{X}}$.
Note that the solution $(\check{\boldsymbol{\Psi}},\check{\mathbf{X}})$
corresponds to at most $\numclust$ clusters. Without loss of generality
we may assume that it actually contains exactly $\numclust$ clusters,
and thus the cluster membership matrix$\check{\boldsymbol{\Psi}}$
is in $\mathbb{M}_{\num,\numclust}$ and has exactly $\numclust$
distinct rows. If this is not true, we can always move an arbitrary
point from the $\num$ input points to a new cluster, without changing
the $k$-medians objective value of the approximation solution $(\check{\boldsymbol{\Psi}},\check{\mathbf{X}})$. 

We next rewrite the cluster error metric (\ref{eq:mis_rate_def})
in matrix form. Let $\boldsymbol{\Psi}^{*}\in\mathbb{M}_{\num,\numclust}$
be the membership matrix corresponding to the ground truth clusters;
that is, for each $i\in[\num]$, $\Psi_{i\labelstar_{i}}^{*}$ is
the only non-zero element of the $i$-th row of $\boldsymbol{\Psi}^{*}$,
and thus $\boldsymbol{\Psi}^{*}(\boldsymbol{\Psi}^{*})^{\top}=\Ystar$.
Let ${\cal S}_{\numclust}$ be the set of $\numclust\times\numclust$
permutation matrices. The set of misclassified nodes with respect
to a permutation $\boldsymbol{\Pi}\in{\cal S}_{\numclust}$, is then
given by
\[
{\cal E}(\boldsymbol{\Pi})\coloneqq\left\{ i\in[\num]:\left(\check{\boldsymbol{\Psi}}\boldsymbol{\Pi}\right)_{i\bullet}\ne\boldsymbol{\Psi}_{i\bullet}^{*}\right\} .
\]
With this notation, the error metric (\ref{eq:mis_rate_def}) can
be expressed as $\misrate(\LabelHat,\LabelStar)=\min_{\boldsymbol{\Pi}\in{\cal S}_{\numclust}}\num^{-1}\left|{\cal E}(\boldsymbol{\Pi})\right|,$
and it remains to bound the RHS.

To this end, we construct several useful sets. For each $a\in[\numclust]$,
let $C_{a}^{*}=\{i\in[\num]:\labelstar_{i}=a\}$ be the $a$-th cluster,
and we define the node index sets 
\[
T_{a}\coloneqq\left\{ i\in C_{a}^{*}:\norm[\check{\mathbf{Y}}_{i\bullet}-\Ystar_{i\bullet}]1<\size\right\} 
\]
and $S_{a}\coloneqq C_{a}^{*}\backslash T_{a}$. Let $T\coloneqq\bigcup_{a\in[\numclust]}T_{a}$
and $S\coloneqq\bigcup_{a\in[\numclust]}S_{a}$. Note that $S_{1},\ldots,S_{\numclust},T_{1},\ldots,T_{\numclust}$
are disjoint with $T\cup S=[\num]$. Further define the cluster index
sets
\begin{align*}
R_{1} & \coloneqq\left\{ a\in[\numclust]:T_{a}=\emptyset\right\} ,\\
R_{2} & \coloneqq\left\{ a\in[\numclust]:T_{a}\ne\emptyset;\check{\boldsymbol{\Psi}}_{i\bullet}=\check{\boldsymbol{\Psi}}_{j\bullet},\forall i,j\in T_{a}\right\} ,\\
R_{3} & \coloneqq[\numclust]\setminus(R_{1}\cup R_{2})=\left\{ a\in[\numclust]:T_{a}\ne\emptyset;\check{\boldsymbol{\Psi}}_{i\bullet}\ne\check{\boldsymbol{\Psi}}_{j\bullet},\exists i,j\in T_{a}\right\} .
\end{align*}
Note that $R_{1},R_{2}$ and $R_{3}$ are disjoint sets. With the
above notations, we have the following claims.
\begin{claim}
\label{clm:misclass<=00003DS+R3}$\min_{\boldsymbol{\Pi}\in{\cal S}_{\numclust}}\left|{\cal E}(\boldsymbol{\Pi})\right|\leq\left|S\right|+\left|R_{3}\right|\size$.
\end{claim}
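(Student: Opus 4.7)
The plan is to exhibit an explicit permutation $\boldsymbol{\Pi}^\star \in {\cal S}_{\numclust}$ for which $|{\cal E}(\boldsymbol{\Pi}^\star)| \leq |S| + |R_3|\size$, since then the minimum over ${\cal S}_{\numclust}$ gives the claim. The construction is guided by the observation that nodes in $T$ are, by definition, well-approximated by the corresponding rows of $\check{\mathbf{Y}}$, which in turn forces nodes in distinct ground-truth clusters to receive distinct predicted labels.

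The key geometric step is the separation claim: if $i \in T_a$ and $j \in T_b$ with $a \ne b$, then $i$ and $j$ have different predicted clusters. To see this, note that the supports $C_a^*$ and $C_b^*$ are disjoint and each has size $\size$, so $\norm[\Ystar_{i\bullet}-\Ystar_{j\bullet}]1 = 2\size$; combined with the strict bound $\norm[\check{\mathbf{Y}}_{i\bullet}-\Ystar_{i\bullet}]1 < \size$ in the definition of $T_a$, the triangle inequality yields
\[
\norm[\check{\mathbf{Y}}_{i\bullet}-\check{\mathbf{Y}}_{j\bullet}]1 \;\geq\; 2\size - \norm[\check{\mathbf{Y}}_{i\bullet}-\Ystar_{i\bullet}]1 - \norm[\check{\mathbf{Y}}_{j\bullet}-\Ystar_{j\bullet}]1 \;>\; 0.
\]
On the other hand, any two rows of $\check{\mathbf{Y}} = \check{\boldsymbol{\Psi}}\check{\mathbf{X}}$ coming from the same predicted cluster coincide, so $i$ and $j$ cannot share a predicted label.

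This separation has two consequences. First, for each $a \in R_2$ the nodes of $T_a$ share a common predicted label $\tau(a) \in [\numclust]$, and applying the separation claim to pairs $i \in T_a, j \in T_{a'}$ shows that $\tau : R_2 \to [\numclust]$ is injective. I extend $\tau^{-1}$ from $\tau(R_2)$ to a full permutation of $[\numclust]$ arbitrarily, and let $\boldsymbol{\Pi}^\star$ be the associated permutation matrix. By construction every $i \in T_a$ with $a \in R_2$ satisfies $(\check{\boldsymbol{\Psi}}\boldsymbol{\Pi}^\star)_{i\bullet} = \boldsymbol{\Psi}^*_{i\bullet}$, so such nodes are correctly classified.

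The remaining (potentially misclassified) nodes lie either in $S$ or in some $T_a$ with $a \in R_1 \cup R_3$. Since $T_a = \emptyset$ for $a \in R_1$ and $|T_a| \le |C_a^*| = \size$ for $a \in R_3$, we obtain ${\cal E}(\boldsymbol{\Pi}^\star) \subseteq S \cup \bigcup_{a \in R_3} T_a$ and hence $|{\cal E}(\boldsymbol{\Pi}^\star)| \leq |S| + |R_3|\size$, proving the claim. The only nontrivial step is the separation argument; once that is in place, the construction of $\boldsymbol{\Pi}^\star$ and the final counting are routine bookkeeping.
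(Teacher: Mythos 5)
Your proposal is correct and follows essentially the same route as the paper: the separation step via the triangle inequality (yielding distinct predicted labels across different ground-truth clusters whose $T_a$'s are nonempty) is exactly the paper's Lemma~\ref{lm:distinct_rows_across_clusters}, and the construction of the permutation matching each $a\in R_2$ to its common predicted label, followed by the counting $|T_a|\le\size$ for $a\in R_3$, mirrors the paper's argument. Your write-up merely makes the injectivity of the label map and the extension to a full permutation more explicit than the paper does.
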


\begin{claim}
\label{clm:R3<=00003DR1}$\left|R_{3}\right|\leq\left|R_{1}\right|$.
\end{claim}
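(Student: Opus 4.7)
The plan is to count the $k$-medians clusters used by the ``good'' nodes in $T=\bigcup_a T_a$ and compare it against the total budget of $\numclust$ available clusters. For each $a\in[\numclust]$ with $T_a\neq\emptyset$, let $K(a)$ denote the set of indices of rows of $\check{\boldsymbol{\Psi}}$ appearing among $\{\check{\boldsymbol{\Psi}}_{i\bullet}:i\in T_a\}$; equivalently, $K(a)$ is the set of $k$-medians clusters that contain at least one node of $T_a$. By the definitions of $R_2$ and $R_3$, we have $|K(a)|=1$ for $a\in R_2$ and $|K(a)|\ge 2$ for $a\in R_3$.

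The key step will be to show that the sets $\{K(a):a\in R_2\cup R_3\}$ are pairwise disjoint. Suppose instead that there exist distinct $a\neq b$ in $R_2\cup R_3$, along with $i\in T_a$ and $j\in T_b$, that are assigned to the same $k$-medians cluster, so that $\check{\mathbf{Y}}_{i\bullet}=\check{\mathbf{Y}}_{j\bullet}$. Recall that $\Ystar_{i\bullet}$ is the indicator vector of the true cluster $C_a^{*}$ and $\Ystar_{j\bullet}$ is the indicator vector of $C_b^{*}$; since $a\neq b$ these supports are disjoint and so $\norm[\Ystar_{i\bullet}-\Ystar_{j\bullet}]1=|C_a^{*}|+|C_b^{*}|=2\size$. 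On the other hand, by the triangle inequality combined with the defining property of $T_a$ and $T_b$,
\[
\norm[\Ystar_{i\bullet}-\Ystar_{j\bullet}]1 \le \norm[\Ystar_{i\bullet}-\check{\mathbf{Y}}_{i\bullet}]1 + \norm[\check{\mathbf{Y}}_{j\bullet}-\Ystar_{j\bullet}]1 < \size+\size = 2\size,
\]
which is a contradiction. Hence $K(a)\cap K(b)=\emptyset$ for $a\neq b$ in $R_2\cup R_3$.

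Given this disjointness, the total count $\sum_{a\in R_2\cup R_3}|K(a)|$ is at most the total number $\numclust$ of $k$-medians clusters. Using the lower bounds $|K(a)|\ge 1$ for $a\in R_2$ and $|K(a)|\ge 2$ for $a\in R_3$, we conclude
\[
|R_2|+2|R_3| \le \sum_{a\in R_2\cup R_3}|K(a)| \le \numclust = |R_1|+|R_2|+|R_3|,
\]
which rearranges to $|R_3|\le|R_1|$, as claimed.

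The only nontrivial step is the disjointness argument, where the strict inequality $\norm[\check{\mathbf{Y}}_{i\bullet}-\Ystar_{i\bullet}]1<\size$ in the definition of $T_a$ is used in an essential way to rule out equality of $k$-medians labels across distinct true clusters. The rest is a clean pigeonhole count, exploiting that there are exactly $\numclust$ available $k$-medians labels.
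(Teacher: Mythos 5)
Your proposal is correct and follows essentially the same route as the paper: your disjointness step reproves, verbatim in spirit, the paper's Lemma~\ref{lm:distinct_rows_across_clusters} (same triangle-inequality argument using $\norm[\check{\mathbf{Y}}_{i\bullet}-\Ystar_{i\bullet}]1<\size$ and $\norm[\Ystar_{i\bullet}-\Ystar_{j\bullet}]1=2\size$), and your pigeonhole count $|R_2|+2|R_3|\le\numclust=|R_1|+|R_2|+|R_3|$ is exactly the paper's counting of distinct rows of $\check{\boldsymbol{\Psi}}$.
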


\begin{claim}
\label{clm:S upper and lower bound}$\left|R_{1}\right|\size\leq\left|S\right|\leq\frac{1}{\size}\norm[\check{\mathbf{Y}}-\Ystar]1$.
\end{claim}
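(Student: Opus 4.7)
The plan is to derive both inequalities directly from the definitions of $R_1$, $S_a$, and $T_a$; no probabilistic or structural ingredients beyond these definitions are required.

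For the lower bound $|R_1|\size\le|S|$, I would unpack the membership: by definition $a\in R_1$ means $T_a=\emptyset$, so $S_a=C_a^*\setminus T_a=C_a^*$, and since the clusters have equal size $\size$, this gives $|S_a|=\size$. Because the sets $C_a^*$ (and hence $S_a$) are pairwise disjoint, summing yields
\[
|S|\ge\sum_{a\in R_1}|S_a|=|R_1|\size.
\]

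For the upper bound $|S|\le\frac{1}{\size}\norm[\check{\mathbf{Y}}-\Ystar]1$, I would use the row-wise decomposition of the entry-wise $\size_1$ norm. By definition of $S_a$, every $i\in S$ satisfies $\norm[\check{\mathbf{Y}}_{i\bullet}-\Ystar_{i\bullet}]1\ge\size$. Therefore
\[
\norm[\check{\mathbf{Y}}-\Ystar]1=\sum_{i=1}^{\num}\norm[\check{\mathbf{Y}}_{i\bullet}-\Ystar_{i\bullet}]1\ge\sum_{i\in S}\norm[\check{\mathbf{Y}}_{i\bullet}-\Ystar_{i\bullet}]1\ge|S|\cdot\size,
\]
and rearranging gives the desired bound. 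There is no real obstacle here: the proof is essentially a bookkeeping exercise exploiting the threshold $\size$ that was built into the definition of $T_a$ precisely so that each ``bad'' row contributes at least $\size$ to the global $\size_1$ error. The cleverness sits in the choice of threshold (made when $T_a$ was defined), not in the estimation step itself.
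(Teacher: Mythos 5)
Your proof is correct and follows essentially the same route as the paper: the lower bound comes from $S_a=C_a^*$ for $a\in R_1$ together with disjointness, and the upper bound from the fact that each $i\in S$ contributes at least $\size$ to the row-wise decomposition of $\norm[\check{\mathbf{Y}}-\Ystar]1$. No gaps.
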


\begin{claim}
\label{clm:Ycheck - Ystar upper bound}$\norm[\check{\mathbf{Y}}-\Ystar]1\leq(1+2\apxconst)\norm[\Yhat-\Ystar]1$.
\end{claim}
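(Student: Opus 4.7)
The plan is to apply the triangle inequality to split
\[
\norm[\check{\mathbf{Y}}-\Ystar]1 \leq \norm[\check{\mathbf{Y}}-\Yhat]1 + \norm[\Yhat-\Ystar]1,
\]
and then to bound the first term on the right by $2\apxconst\norm[\Yhat-\Ystar]1$. The first term is controlled by the approximation guarantee~(\ref{eq:k-median guarantee}), which gives $\norm[\check{\mathbf{Y}}-\Yhat]1 \leq \apxconst \norm[\overline{\mathbf{Y}}-\Yhat]1$, so the work reduces to establishing
\[
\norm[\overline{\mathbf{Y}}-\Yhat]1 \leq 2\norm[\Yhat-\Ystar]1.
\]

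The key step is to exhibit a feasible solution to the $k$-medians problem~(\ref{eq:k-median problem}) whose objective value already matches this bound; optimality of $(\overline{\boldsymbol{\Psi}},\overline{\mathbf{X}})$ will then do the rest. Here I would use the ground-truth membership matrix $\boldsymbol{\Psi}^{*}$ and, for each cluster $a\in[\numclust]$, pick a representative node $j_a \in C_a^{*}$ minimizing $\norm[\Yhat_{j\bullet}-\Ystar_{j\bullet}]1$ over $j\in C_a^{*}$, then set the $a$-th row of $\tilde{\mathbf{X}}$ to be $\Yhat_{j_a\bullet}$. This is feasible since the rows of $\tilde{\mathbf{X}}$ are drawn from the rows of $\Yhat$. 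Writing $\tilde{\mathbf{Y}}:=\boldsymbol{\Psi}^{*}\tilde{\mathbf{X}}$, the $i$-th row of $\tilde{\mathbf{Y}}$ equals $\Yhat_{j_{\labelstar_i}\bullet}$.

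For each $i$ with $\labelstar_i = a$, the triangle inequality together with the crucial fact that $\Ystar_{j_a\bullet}=\Ystar_{i\bullet}$ (since both nodes lie in the same ground-truth cluster) gives
\[
\norm[\Yhat_{j_a\bullet}-\Yhat_{i\bullet}]1 \leq \norm[\Yhat_{j_a\bullet}-\Ystar_{j_a\bullet}]1 + \norm[\Ystar_{i\bullet}-\Yhat_{i\bullet}]1.
\]
Summing over $i\in C_a^{*}$ and then over $a$, and using the averaging choice of $j_a$ to bound $\size\norm[\Yhat_{j_a\bullet}-\Ystar_{j_a\bullet}]1 \leq \sum_{j\in C_a^{*}}\norm[\Yhat_{j\bullet}-\Ystar_{j\bullet}]1$, both resulting contributions telescope into $\norm[\Yhat-\Ystar]1$, yielding $\norm[\tilde{\mathbf{Y}}-\Yhat]1 \leq 2\norm[\Yhat-\Ystar]1$. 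Since $(\boldsymbol{\Psi}^{*},\tilde{\mathbf{X}})$ is feasible for~(\ref{eq:k-median problem}), the same bound passes to $\overline{\mathbf{Y}}$, and the proof is complete after chaining with the triangle inequality and the approximation ratio.

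The main obstacle is the construction of the comparison solution: one must respect the constraint that cluster centers come from the rows of $\Yhat$ (so one cannot simply use the rows of $\Ystar$), which is precisely why the minimum-distortion representative $j_a$ is needed inside each ground-truth cluster. Once the representatives are chosen, the argument is essentially a double application of the triangle inequality combined with averaging, and does not rely on any random-graph structure.
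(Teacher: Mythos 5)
Your proposal is correct and follows essentially the same route as the paper: you build the same comparison solution $(\boldsymbol{\Psi}^{*},\widetilde{\mathbf{X}})$ with centers chosen as the best-fitting rows of $\Yhat$ within each ground-truth cluster, invoke feasibility/optimality together with the approximation guarantee~(\ref{eq:k-median guarantee}), and finish with the triangle inequality. The only (immaterial) difference is that you bound $\norm[\widetilde{\mathbf{Y}}-\Yhat]1$ directly via a ``best is at most the average'' argument, whereas the paper first shows $\norm[\widetilde{\mathbf{Y}}-\Ystar]1\le\norm[\Yhat-\Ystar]1$ using ``best is at most each'' and then applies the triangle inequality; both yield the same factor $2$.
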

Applying the above claims in order, we obtain that 
\begin{align*}
\min_{\boldsymbol{\Pi}\in{\cal S}_{\numclust}}\num^{-1}\left|{\cal E}(\boldsymbol{\Pi})\right| & \le\frac{2(1+2\apxconst)}{\num\size}\norm[\Yhat-\Ystar]1=2(1+2\apxconst)\frac{\norm[\Yhat-\Ystar]1}{\norm[\Ystar]1},
\end{align*}
where the last equality follows from $\norm[\Ystar]1=\num\size$.
Combining with the aforementioned expression $\misrate(\LabelHat,\LabelStar)=\min_{\boldsymbol{\Pi}\in{\cal S}_{\numclust}}\num^{-1}\left|{\cal E}(\boldsymbol{\Pi})\right|$
proves Proposition \ref{prop:kmedian}. 

We prove the the above claims in the sub-sections to follow.

\subsection{Proof of Claim \ref{clm:misclass<=00003DS+R3}}

We record a property of the cluster membership matrix $\check{\boldsymbol{\Psi}}$
of the approximate $k$-medians solution.
\begin{lem}
\label{lm:distinct_rows_across_clusters}For each cluster pair $a,b\in R_{2}\cup R_{3}$
with $a\ne b$ and each node pair $i\in T_{a},j\in T_{b}$, we have
$\check{\boldsymbol{\Psi}}_{i\bullet}\ne\check{\boldsymbol{\Psi}}_{j\bullet}$.
\end{lem}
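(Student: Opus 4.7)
The plan is to argue by contradiction using the triangle inequality in $\ell_1$. Suppose there exist distinct $a,b\in R_2\cup R_3$ and $i\in T_a$, $j\in T_b$ with $\check{\boldsymbol{\Psi}}_{i\bullet}=\check{\boldsymbol{\Psi}}_{j\bullet}$. The key algebraic observation is that each row of $\check{\boldsymbol{\Psi}}$ has exactly one nonzero entry, so $\check{\mathbf{Y}}_{i\bullet}=\check{\boldsymbol{\Psi}}_{i\bullet}\check{\mathbf{X}}$ is simply the row of $\check{\mathbf{X}}$ indexed by that nonzero entry. Consequently $\check{\boldsymbol{\Psi}}_{i\bullet}=\check{\boldsymbol{\Psi}}_{j\bullet}$ forces $\check{\mathbf{Y}}_{i\bullet}=\check{\mathbf{Y}}_{j\bullet}$.

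Next I would compute $\norm[\Ystar_{i\bullet}-\Ystar_{j\bullet}]1$ exactly. Because $i\in C_a^*$ and $j\in C_b^*$ with $a\ne b$, the rows $\Ystar_{i\bullet}$ and $\Ystar_{j\bullet}$ are the $0/1$ indicators of the disjoint sets $C_a^*$ and $C_b^*$, respectively, each of cardinality $\size$. The two indicators therefore disagree in exactly $|C_a^*|+|C_b^*|=2\size$ coordinates, so $\norm[\Ystar_{i\bullet}-\Ystar_{j\bullet}]1=2\size$.

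Now I would apply the triangle inequality through $\check{\mathbf{Y}}_{i\bullet}$ and $\check{\mathbf{Y}}_{j\bullet}$:
\[
2\size=\norm[\Ystar_{i\bullet}-\Ystar_{j\bullet}]1\le\norm[\Ystar_{i\bullet}-\check{\mathbf{Y}}_{i\bullet}]1+\norm[\check{\mathbf{Y}}_{i\bullet}-\check{\mathbf{Y}}_{j\bullet}]1+\norm[\check{\mathbf{Y}}_{j\bullet}-\Ystar_{j\bullet}]1.
\]
By the definitions of $T_a$ and $T_b$, the first and third terms on the right are each strictly less than $\size$, while the middle term vanishes by the first paragraph. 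This yields $2\size<2\size$, a contradiction, completing the proof.

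There is no real obstacle here: the only subtlety is noticing that the strict inequality $<\size$ built into the definition of the sets $T_a$ is exactly calibrated to the value $2\size=\norm[\Ystar_{i\bullet}-\Ystar_{j\bullet}]1$ coming from the cluster-indicator structure of $\Ystar$. The membership of $a,b$ in $R_2\cup R_3$ is used only to guarantee that $T_a$ and $T_b$ are nonempty so that the nodes $i,j$ in the statement exist.
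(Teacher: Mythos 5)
Your proof is correct and follows essentially the same route as the paper: the identical triangle-inequality computation with $\norm[\Ystar_{i\bullet}-\Ystar_{j\bullet}]1=2\size$ and the strict bounds $<\size$ from the definition of $T_{a},T_{b}$, plus the observation that equal rows of $\check{\boldsymbol{\Psi}}$ force equal rows of $\check{\mathbf{Y}}$. The only difference is cosmetic: you argue by contradiction, whereas the paper directly shows $\norm[\check{\mathbf{Y}}_{i\bullet}-\check{\mathbf{Y}}_{j\bullet}]1>0$ and then concludes $\check{\boldsymbol{\Psi}}_{i\bullet}\ne\check{\boldsymbol{\Psi}}_{j\bullet}$.
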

\begin{proof}
For each pair $a,b\in R_{2}\cup R_{3}$ with $a\ne b$, we have $T_{a}\ne\emptyset$
and $T_{b}\ne\emptyset$ by definition. For each pair $i\in T_{a},j\in T_{b}$,
we have the inequality 
\[
\norm[\check{\mathbf{Y}}_{i\bullet}-\check{\mathbf{Y}}_{j\bullet}]1\geq\norm[\Ystar_{i\bullet}-\Ystar_{j\bullet}]1-\norm[\check{\mathbf{Y}}_{i\bullet}-\Ystar_{i\bullet}]1-\norm[\check{\mathbf{Y}}_{j\bullet}-\Ystar_{j\bullet}]1.
\]
Since nodes $i$ and $j$ are in two different clusters, each of $\Ystar_{i\bullet}$
and $\Ystar_{j\bullet}$ is a binary vector with exactly $\size$
ones, and they have disjoint support; we therefore have $\norm[\Ystar_{i\bullet}-\Ystar_{j\bullet}]1=2\size$.
Moreover, note that $\norm[\check{\mathbf{Y}}_{i\bullet}-\Ystar_{i\bullet}]1<\size$
and $\norm[\check{\mathbf{Y}}_{j\bullet}-\Ystar_{j\bullet}]1<\size$
by definition of $T_{a}$ and $T_{b}$. It follows that $\norm[\check{\mathbf{Y}}_{i\bullet}-\check{\mathbf{Y}}_{j\bullet}]1>2\size-\size-\size=0$,
and thus $\check{\mathbf{Y}}_{i\bullet}\ne\check{\mathbf{Y}}_{j\bullet}$.
The latter implies that $\check{\boldsymbol{\Psi}}_{i\bullet}\ne\check{\boldsymbol{\Psi}}_{j\bullet}$;
otherwise we would reach a contradiction $\check{\mathbf{Y}}_{i\bullet}=\left(\check{\boldsymbol{\Psi}}_{i\bullet}\right)^{\top}\check{\mathbf{X}}=\left(\check{\boldsymbol{\Psi}}_{j\bullet}\right)^{\top}\check{\mathbf{X}}=\check{\mathbf{Y}}_{j\bullet}$. 
\end{proof}
Proceeding to the proof of Claim \ref{clm:misclass<=00003DS+R3},
we observe that $S,T_{1},T_{2},\ldots,T_{\numclust}$ are disjoint
and satisfy
\[
[\num]=S\cup T=S\cup\left(\bigcup_{a\in R_{2}}T_{a}\right)\cup\left(\bigcup_{a\in R_{3}}T_{a}\right),
\]
where the last equality holds since $a\in R_{1}$ implies $T_{a}=\emptyset$.
To prove the claim, it suffices to show that there exists some $\boldsymbol{\Pi}\in{\cal S}_{\numclust}$
such that ${\cal E}(\boldsymbol{\Pi})\subseteq S\cup\left(\bigcup_{a\in R_{3}}T_{a}\right)=[\num]\backslash\bigcup_{a\in R_{2}}T_{a}$.
Indeed, for each $a\in R_{2}$, any pair $i,j\in T_{a}$ satisfies
$\check{\boldsymbol{\Psi}}_{i\bullet}=\check{\boldsymbol{\Psi}}_{j\bullet}$
by definition. This fact, combined with Lemma \ref{lm:distinct_rows_across_clusters},
implies that there exists some $\boldsymbol{\Pi}\in{\cal S}_{\numclust}$
such that $\left(\check{\boldsymbol{\Psi}}\boldsymbol{\Pi}\right)_{i\bullet}=\boldsymbol{\Psi}_{i\bullet}^{*}$
for all $i\in\bigcup_{a\in R_{2}}T_{a}$. By definition of ${\cal E}(\boldsymbol{\Pi})$,
we have ${\cal E}(\boldsymbol{\Pi})\cap\left(\bigcup_{a\in R_{2}}T_{a}\right)=\emptyset$
and are therefore done. 

\subsection{Proof of Claim \ref{clm:R3<=00003DR1}}

The claim follows by rearranging the left and right hand sides of
the equation
\[
\left|R_{2}\right|+2\left|R_{3}\right|\leq\numclust=\left|R_{1}\right|+\left|R_{2}\right|+\left|R_{3}\right|,
\]
which we now prove. The equality follows from the definition of $R_{1},R_{2}$
and $R_{3}$. For the inequality, note that each element of $R_{2}$
contributes at least one distinct row in $\check{\boldsymbol{\Psi}}$
and each element of $R_{3}$ contributes at least two distinct rows
in $\check{\boldsymbol{\Psi}}$. The indices of these rows are all
in $T$ by definition, and Lemma \ref{lm:distinct_rows_across_clusters}
guarantees that these rows are distinct across $R_{2}\cup R_{3}$.
The inequality then follows from the fact that $\check{\boldsymbol{\Psi}}$
has $\numclust$ distinct rows. 

\subsection{Proof of Claim \ref{clm:S upper and lower bound}}

The first inequality in the claim holds because
\begin{align*}
\left|S\right| & =\sum_{a\in[\numclust]}\left|S_{a}\right|\geq\sum_{a\in R_{1}}\left|S_{a}\right|\overset{(i)}{=}\sum_{a\in R_{1}}\left|C_{a}^{*}\right|=\left|R_{1}\right|\size,
\end{align*}
where step $(i)$ holds since $T_{a}=\emptyset$ for each $a\in R_{1}$
and thus $S_{a}=C_{a}^{*}$. On the other hand, we have
\begin{align*}
\left|S\right| & \overset{(ii)}{\leq}\sum_{i\in S}\frac{1}{\size}\norm[\check{\mathbf{Y}}_{i\bullet}-\Ystar_{i\bullet}]1\overset{(iii)}{\le}\frac{1}{\size}\norm[\check{\mathbf{Y}}-\Ystar]1,
\end{align*}
where step $(ii)$ holds since $\frac{1}{\size}\norm[\check{\mathbf{Y}}_{i\bullet}-\Ystar_{i\bullet}]1\geq1$
for each $i\in S$, and step $(iii)$ holds since $S\subseteq[\num]$.
This proves the second inequality in the claim.

\subsection{Proof of Claim \ref{clm:Ycheck - Ystar upper bound}}

Recall that $\Ystar=\boldsymbol{\Psi}^{*}\left(\boldsymbol{\Psi}^{*}\right)^{\top}$
and $\boldsymbol{\Psi}^{*}\in\mathbb{M}_{\num,\numclust}$. Introducing
an extra piece of notation $\mathbf{X}^{*}\coloneqq\left(\boldsymbol{\Psi}^{*}\right)^{\top}$,
we can write $\Ystar=\boldsymbol{\Psi}^{*}\mathbf{X}^{*}$. Let $\widetilde{\mathbf{X}}\in\mathbb{R}^{\numclust\times\num}$
be the matrix whose $a$-th row is equal to the element in $\left\{ \Yhat_{i\bullet}:i\in C_{a}^{*}\right\} $
that is closest to $\mathbf{X}_{a\bullet}^{*}$ in $\size_{1}$ norm;
that is,
\[
\widetilde{\mathbf{X}}_{a\bullet}\coloneqq\argmin_{\mathbf{x}\in\left\{ \Yhat_{i\bullet}:i\in C_{a}^{*}\right\} }\norm[\mathbf{x}-\mathbf{X}_{a\bullet}^{*}]1\quad\text{for each }a\in[\numclust].
\]
Finally let $\widetilde{\mathbf{Y}}\coloneqq\boldsymbol{\Psi}^{*}\widetilde{\mathbf{X}}$.
We have the inequality
\begin{align*}
\norm[\Yhat-\Ystar]1 & =\sum_{a\in[\numclust]}\sum_{i\in C_{a}^{*}}\norm[\Yhat_{i\bullet}-\mathbf{Y}_{i\bullet}^{*}]1\\
 & \overset{(i)}{=}\sum_{a\in[\numclust]}\sum_{i\in C_{a}^{*}}\norm[\Yhat_{i\bullet}-\mathbf{X}_{a\bullet}^{*}]1\\
 & \geq\sum_{a\in[\numclust]}\sum_{i\in C_{a}^{*}}\norm[\widetilde{\mathbf{X}}_{a\bullet}-\mathbf{X}_{a\bullet}^{*}]1\\
 & \overset{(ii)}{=}\sum_{a\in[\numclust]}\sum_{i\in C_{a}^{*}}\norm[\widetilde{\mathbf{Y}}_{i\bullet}-\Ystar_{i\bullet}]1\\
 & =\norm[\widetilde{\mathbf{Y}}-\Ystar]1,
\end{align*}
where step $(i)$ holds since for each $a\in[\numclust]$, the $a$-th
row of $\mathbf{X}^{*}$ is the distinct row in $\Ystar$ corresponding
to the cluster $C_{a}^{*}$ and thus $\mathbf{X}_{a\bullet}^{*}=\Ystar_{i\bullet}$
for all $i\in C_{a}^{*}$; step $(ii)$ can be justified by applying
the same argument to $\widetilde{\mathbf{X}}$ and $\widetilde{\mathbf{Y}}$.
It follows that 
\[
\norm[\widetilde{\mathbf{Y}}-\Yhat]1\leq\norm[\Yhat-\Ystar]1+\norm[\widetilde{\mathbf{Y}}-\Ystar]1\leq2\norm[\Yhat-\Ystar]1
\]
and hence 
\begin{align*}
\norm[\check{\mathbf{Y}}-\Yhat]1 & \overset{(i)}{\leq}\apxconst\norm[\overline{\mathbf{Y}}-\Yhat]1\overset{(ii)}{\leq}\apxconst\norm[\widetilde{\mathbf{Y}}-\Yhat]1\leq2\apxconst\norm[\Yhat-\Ystar]1,
\end{align*}
where step $(i)$ holds by the approximation ratio guarantee (\ref{eq:k-median guarantee}),
and step $(ii)$ holds since $(\overline{\boldsymbol{\Psi}},\overline{\mathbf{X}})$
is optimal for the $k$-medians problem (\ref{eq:k-median problem})
(recall that $\overline{\mathbf{Y}}\coloneqq\overline{\boldsymbol{\Psi}}\overline{\mathbf{X}}$)
while $(\boldsymbol{\Psi}^{*},\widetilde{\mathbf{X}})$ is feasible
for the same problem (because $\boldsymbol{\Psi}^{*}\in\mathbb{M}_{\num,\numclust}$
and by definition the rows of $\widetilde{\mathbf{X}}$ are selected
from $\Yhat$). Combining pieces, we obtain that
\[
\norm[\check{\mathbf{Y}}-\Ystar]1\leq\norm[\Yhat-\Ystar]1+\norm[\check{\mathbf{Y}}-\Yhat]1\leq(1+2\apxconst)\norm[\Yhat-\Ystar]1.
\]

\bibliographystyle{plain}
\bibliography{references}

\begin{thebibliography}{10}

\bibitem{abbe2016recent}
Emmanuel Abbe.
\newblock Community detection and the stochastic block model: recent
  developments.
\newblock {\em Journal of Machine Learning Research, to appear}, 2017.

\bibitem{abbe2014censored}
Emmanuel Abbe, Afonso~S. Bandeira, Annina Bracher, and Amit Singer.
\newblock Decoding binary node labels from censored edge measurements: Phase
  transition and efficient recovery.
\newblock {\em IEEE Transactions on Network Science and Engineering},
  1(1):10--22, 2014.

\bibitem{abbe2016exact}
Emmanuel Abbe, Afonso~S. Bandeira, and Georgina Hall.
\newblock Exact recovery in the stochastic block model.
\newblock {\em IEEE Transactions on Information Theory}, 62(1):471--487, 2016.

\bibitem{abbe2015general}
Emmanuel Abbe and Colin Sandon.
\newblock Community detection in general stochastic block models: Fundamental
  limits and efficient algorithms for recovery.
\newblock In {\em IEEE 56th Annual Symposium on Foundations of Computer Science
  ({FOCS})}, pages 670--688. IEEE, 2015.

\bibitem{abbe2015multiple}
Emmanuel Abbe and Colin Sandon.
\newblock Detection in the stochastic block model with multiple clusters: proof
  of the achievability conjectures, acyclic {BP}, and the
  information-computation gap.
\newblock {\em arXiv preprint arXiv:1512.09080}, 2015.

\bibitem{abbe2015recovering}
Emmanuel Abbe and Colin Sandon.
\newblock Recovering communities in the general stochastic block model without
  knowing the parameters.
\newblock In {\em Advances in Neural Information Processing Systems}, pages
  676--684, 2015.

\bibitem{agarwal2015multisection}
Naman Agarwal, Afonso~S. Bandeira, Konstantinos Koiliaris, and Alexandra Kolla.
\newblock Multisection in the stochastic block model using semidefinite
  programming.
\newblock {\em arXiv preprint arXiv:1507.02323}, 2015.

\bibitem{ames2012clustering}
Brendan P.~W. Ames.
\newblock Guaranteed clustering and biclustering via semidefinite programming.
\newblock {\em Mathematical Programming}, 147(1--2):429--465, 2014.

\bibitem{ames2014convex}
Brendan P.~W. Ames and Stephen~A. Vavasis.
\newblock Convex optimization for the planted k-disjoint-clique problem.
\newblock {\em Mathematical Programming}, 143(1-2):299--337, 2014.

\bibitem{amini2014semidefinite}
Arash~A. Amini and Elizaveta Levina.
\newblock On semidefinite relaxations for the block model.
\newblock {\em arXiv preprint arXiv:1406.5647}, 2014.

\bibitem{bandeira2015convex}
Afonso~S. Bandeira.
\newblock Random {Laplacian} matrices and convex relaxations.
\newblock {\em Foundations of Computational Mathematics}, pages 1--35, 2015.

\bibitem{bandeira2016}
Afonso~S. Bandeira and Ramon van Handel.
\newblock Sharp nonasymptotic bounds on the norm of random matrices with
  independent entries.
\newblock {\em The Annals of Probability}, 44(4):2479--2506, 07 2016.

\bibitem{bollobas2004maxcut}
B{\'e}la Bollob{\'a}s and Alex~D. Scott.
\newblock Max cut for random graphs with a planted partition.
\newblock {\em Combinatorics, Probability and Computing}, 13(4-5):451--474,
  2004.

\bibitem{boucheron2013concentration}
St{\'e}phane Boucheron, G{\'a}bor Lugosi, and Pascal Massart.
\newblock {\em Concentration inequalities: A nonasymptotic theory of
  independence}.
\newblock Oxford University Press, 2013.

\bibitem{Cai2014robust}
T.~Tony Cai and Xiaodong Li.
\newblock Robust and computationally feasible community detection in the
  presence of arbitrary outlier nodes.
\newblock {\em Annals of Statistics}, 43(3):1027--1059, 2015.

\bibitem{charikar1999constant}
Moses Charikar, Sudipto Guha, {\'E}va Tardos, and David~B. Shmoys.
\newblock A constant-factor approximation algorithm for the k-median problem.
\newblock In {\em Proceedings of the 31st Annual ACM Symposium on Theory of
  Computing}, pages 1--10. ACM, 1999.

\bibitem{CMM}
Yudong {Chen}, Xiaodong {Li}, and Jiaming {Xu}.
\newblock Convexified modularity maximization for degree-corrected stochastic
  block models.
\newblock {\em ArXiv e-prints 1512.08425}, December 2015.

\bibitem{chen2012clustering}
Yudong Chen, Sujay Sanghavi, and Huan Xu.
\newblock Clustering sparse graphs.
\newblock In {\em Advances in neural information processing systems}, pages
  2204--2212, 2012.

\bibitem{chen2012sparseclustering}
Yudong Chen, Sujay Sanghavi, and Huan Xu.
\newblock Improved graph clustering.
\newblock {\em IEEE Transactions on Information Theory}, 60(10):6440--6455,
  2014.

\bibitem{ChenXu2016}
Yudong Chen and Jiaming Xu.
\newblock Statistical-computational tradeoffs in planted problems and submatrix
  localization with a growing number of clusters and submatrices.
\newblock {\em Journal of Machine Learning Research}, 17(27):1--57, 2016.

\bibitem{ChinRaoVu15}
Peter Chin, Anup Rao, and Van Vu.
\newblock Stochastic block model and community detection in sparse graphs: {A}
  spectral algorithm with optimal rate of recovery.
\newblock In {\em Proceedings of The 28th Conference on Learning Theory
  ({COLT})}, pages 391--423, Paris, France, July 2015.

\bibitem{coja2004coloringSemirandom}
Amin Coja-Oghlan.
\newblock Coloring semirandom graphs optimally.
\newblock {\em Automata, Languages and Programming}, pages 383--395, 2004.

\bibitem{decelle2011asymptotic}
Aurelien Decelle, Florent Krzakala, Cristopher Moore, and Lenka Zdeborov\'a.
\newblock Asymptotic analysis of the stochastic block model for modular
  networks and its algorithmic applications.
\newblock {\em Physical Review E}, 84:066106, Dec 2011.

\bibitem{feige2001semirandom}
Uriel Feige and Joe Kilian.
\newblock Heuristics for semirandom graph problems.
\newblock {\em Journal of Computer and System Sciences}, 63(4):639--671, 2001.

\bibitem{frieze1997algoRandomGraphs}
Alan Frieze and Colin McDiarmid.
\newblock Algorithmic theory of random graphs.
\newblock {\em Random Structures and Algorithms}, 10(1-2):5--42, 1997.

\bibitem{gao2015achieving}
Chao Gao, Zongming Ma, Anderson~Y. Zhang, and Harrison~H. Zhou.
\newblock Achieving optimal misclassification proportion in stochastic block
  model.
\newblock {\em arXiv preprint arXiv:1505.03772}, 2015.

\bibitem{gao2016DCBM}
Chao Gao, Zongming Ma, Anderson~Y. Zhang, and Harrison~H. Zhou.
\newblock Community detection in degree-corrected block models.
\newblock {\em arXiv preprint arXiv:1607.06993}, 2016.

\bibitem{Grothendieck}
Alexander Grothendieck.
\newblock R{\'e}sum{\'e} de la th{\'e}orie m{\'e}trique des produits tensoriels
  topologiques.
\newblock {\em Resenhas do Instituto de Matem{\'a}tica e Estatistica da
  Universidade de S{\~a}o Paulo}, 2(4):401--481, 1953.

\bibitem{Guedon2015}
Olivier Gu{\'e}don and Roman Vershynin.
\newblock Community detection in sparse networks via {Grothendieck's}
  inequality.
\newblock {\em Probability Theory and Related Fields}, 165(3-4):1025--1049,
  2016.

\bibitem{hajek2015censor}
Bruce Hajek, Yihong Wu, and Jiaming Xu.
\newblock Exact recovery threshold in the binary censored block model.
\newblock In {\em IEEE Information Theory Workshop-Fall (ITW)}, pages 99--103,
  2015.

\bibitem{hajek2016achieving}
Bruce Hajek, Yihong Wu, and Jiaming Xu.
\newblock Achieving exact cluster recovery threshold via semidefinite
  programming.
\newblock {\em IEEE Transactions on Information Theory}, 62(5):2788--2797,
  2016.

\bibitem{heimlicher2012label}
Simon Heimlicher, Marc Lelarge, and Laurent Massouli{\'e}.
\newblock Community detection in the labelled stochastic block model.
\newblock {\em arXiv preprint arXiv:1209.2910}, 2012.

\bibitem{holland83}
Paul~W. Holland, Kathryn~B. Laskey, and Samuel Leinhardt.
\newblock Stochastic blockmodels: Some first steps.
\newblock {\em Social Networks}, 5:109--137, 1983.

\bibitem{javanmard2016phase}
Adel Javanmard, Andrea Montanari, and Federico Ricci-Tersenghi.
\newblock Phase transitions in semidefinite relaxations.
\newblock {\em Proceedings of the National Academy of Sciences},
  113(16):E2218--E2223, 2016.

\bibitem{keshavan2009matrix}
Raghunandan~H. Keshavan, Sewoong Oh, and Andrea Montanari.
\newblock Matrix completion from a few entries.
\newblock In {\em 2009 IEEE International Symposium on Information Theory},
  pages 324--328. IEEE, 2009.

\bibitem{krivelevich2006coloring}
Michael Krivelevich and Dan Vilenchik.
\newblock Semirandom models as benchmarks for coloring algorithms.
\newblock In {\em Third Workshop on Analytic Algorithmics and Combinatorics
  (ANALCO)}, pages 211--221, 2006.

\bibitem{lelarge2015reconstruction}
Marc Lelarge, Laurent Massouli{\'e}, and Jiaming Xu.
\newblock Reconstruction in the labelled stochastic block model.
\newblock {\em IEEE Transactions on Network Science and Engineering},
  2(4):152--163, 2015.

\bibitem{Lindenstrauss}
Joram Lindenstrauss and Aleksander Pe{\l}czy{\'n}ski.
\newblock Absolutely summing operators in {$ L_p $}-spaces and their
  applications.
\newblock {\em Studia Mathematica}, 3(29):275--326, 1968.

\bibitem{lu2016lloyd}
Yu~Lu and Harrison~H. Zhou.
\newblock Statistical and computational guarantees of {Lloyd's} algorithm and
  its variants.
\newblock {\em arXiv preprint arXiv:1612.02099}, 2016.

\bibitem{makarychev2016learning}
Konstantin Makarychev, Yury Makarychev, and Aravindan Vijayaraghavan.
\newblock Learning communities in the presence of errors.
\newblock In {\em 29th Annual Conference on Learning Theory}, pages 1258--1291,
  2016.

\bibitem{massoulie2014ramanujan}
Laurent Massouli{\'e}.
\newblock Community detection thresholds and the weak {Ramanujan} property.
\newblock In {\em Proceedings of the 46th Annual ACM Symposium on Theory of
  Computing}, pages 694--703. ACM, 2014.

\bibitem{moitra2016robust}
Ankur Moitra, William Perry, and Alexander~S. Wein.
\newblock How robust are reconstruction thresholds for community detection?
\newblock In {\em Proceedings of the 48th Annual ACM SIGACT Symposium on Theory
  of Computing}, pages 828--841. ACM, 2016.

\bibitem{MontanariSen16}
Andrea Montanari and Subhabrata Sen.
\newblock Semidefinite programs on sparse random graphs and their application
  to community detection.
\newblock In {\em Proceedings of the 48th Annual {ACM} {SIGACT} Symposium on
  Theory of Computing ({STOC})}, pages 814--827, Cambridge, MA, USA, June 2016.

\bibitem{moore2017landscape}
Cristopher Moore.
\newblock The computer science and physics of community detection: Landscapes,
  phase transitions, and hardness.
\newblock {\em Bulletin of European Association for Theoretical Computer
  Science (EATCS)}, (121), Februrary 2017.

\bibitem{mossel2012stochastic}
Elchanan Mossel, Joe Neeman, and Allan Sly.
\newblock Stochastic block models and reconstruction.
\newblock {\em arXiv preprint arXiv:1202.1499}, 2012.

\bibitem{mossel2013proof}
Elchanan Mossel, Joe Neeman, and Allan Sly.
\newblock A proof of the block model threshold conjecture.
\newblock {\em arXiv preprint arXiv:1311.4115}, 2013.

\bibitem{mossel2015reconstruction}
Elchanan Mossel, Joe Neeman, and Allan Sly.
\newblock Reconstruction and estimation in the planted partition model.
\newblock {\em Probability Theory and Related Fields}, 162(3-4):431--461, 2015.

\bibitem{mossel2016bisection}
Elchanan Mossel, Joe Neeman, and Allan Sly.
\newblock Consistency thresholds for the planted bisection model.
\newblock {\em Electronic Journal of Probability}, 21(21):1--24, 2016.

\bibitem{oymak2011finding}
Samet Oymak and Babak Hassibi.
\newblock Finding dense clusters via "low rank+ sparse" decomposition.
\newblock {\em arXiv preprint arXiv:1104.5186}, 2011.

\bibitem{perry2015semidefinite}
William Perry and Alexander~S. Wein.
\newblock A semidefinite program for unbalanced multisection in the stochastic
  block model.
\newblock {\em arXiv preprint arXiv:1507.05605}, 2015.

\bibitem{ricci2016performance}
Federico Ricci-Tersenghi, Adel Javanmard, and Andrea Montanari.
\newblock Performance of a community detection algorithm based on semidefinite
  programming.
\newblock {\em Journal of Physics: Conference Series}, 699(1):012015, 2016.

\bibitem{rigollet201518}
Philippe Rigollet.
\newblock 18.s997: High dimensional statistics.
\newblock {\em Lecture Notes, Cambridge, MA, USA: MIT OpenCourseWare}, 2015.

\bibitem{saade2015spectral}
Alaa Saade, Marc Lelarge, Florent Krzakala, and Lenka Zdeborov{\'a}.
\newblock Spectral detection in the censored block model.
\newblock In {\em 2015 IEEE International Symposium on Information Theory
  (ISIT)}, pages 1184--1188. IEEE, 2015.

\bibitem{Vershynin-non-asymp-bounds}
Roman Vershynin.
\newblock Introduction to the non-asymptotic analysis of random matrices.
\newblock In Yonina~C. Eldar and Gitta Kutyniok, editors, {\em Compressed
  Sensing}, pages 210--268. Cambridge University Press, 2012.

\bibitem{yun2014accurate}
Se-Young Yun and Alexandre Proutiere.
\newblock Accurate community detection in the stochastic block model via
  spectral algorithms.
\newblock {\em arXiv preprint arXiv:1412.7335}, 2014.

\bibitem{yun2016optimal}
Se-Young Yun and Alexandre Proutiere.
\newblock Optimal cluster recovery in the labeled stochastic block model.
\newblock In {\em Advances in Neural Information Processing Systems}, pages
  965--973, 2016.

\end{thebibliography}

\end{document}